\documentclass{article}

\usepackage[preprint]{neurips_2026}
\usepackage{subcaption}
\usepackage{wrapfig}
\usepackage{amsmath,amssymb}
\usepackage{booktabs}
\usepackage{longtable}
\usepackage{array}

\usepackage[utf8]{inputenc} 
\usepackage[T1]{fontenc}    
\usepackage{hyperref}       
\usepackage{adjustbox}
\usepackage{url}            
\usepackage{booktabs}       
\usepackage{amsfonts}       
\usepackage{nicefrac}       
\usepackage{microtype}      
\usepackage{xcolor}         
\usepackage{amsmath}
\usepackage{amssymb}
\usepackage{subcaption}
\usepackage{siunitx}
\usepackage{multirow}
\usepackage{wrapfig}
\usepackage{array}
\usepackage{amsthm}
\usepackage{enumitem}
\usepackage{amsthm}
\usepackage{hyperref}
\usepackage{booktabs}   
\usepackage{makecell}   

\usepackage[linesnumbered,ruled,vlined]{algorithm2e}
\usepackage[ruled,vlined]{algorithm2e}
\SetKwProg{Procedure}{Procedure}{}{}  
\SetKw{KwRet}{Return}

\usepackage[most]{tcolorbox}

\newtcolorbox{insightbox}{
    colback=red!6,
    colframe=black!75,
    boxrule=0.8pt,
    arc=4pt,
    left=7pt,
    right=7pt,
    top=5pt,
    bottom=5pt,
    before skip=6pt,
    after skip=6pt,
    breakable
}

\newtheorem{theorem}{Theorem}

\newtheorem{proposition}{Proposition}
\newtheorem{corollary}{Corollary}
\newtheorem{definition}{Definition}

\title{A Mixture-Gated Theory of Anisotropic Representation Dynamics in Transformers}

\author{%
  Nguyen Do \qquad My T. Thai \\
  Department of Computer \& Information Science \& Engineering \\
  University of Florida, Gainesville, FL, USA
}

\begin{document}

\maketitle

\begin{abstract}
Small differences between internal representations can shrink or grow as they pass through a Transformer layer, affecting how information is preserved and how sensitive the model is to perturbations. Yet it remains unclear when changes in attention weights can turn a representation difference that would otherwise shrink into one that grows. We develop a \textbf{Mixture-Gated Theory of Anisotropic Representation Dynamics} that separates self-attention into value transport, query gating, and key gating. Our analysis reveals how the geometry of attended representations constrains gating and how interactions between these mechanisms can reverse contraction induced by value transport. We formalize this phenomenon as gating-induced regime switching, where a representation difference contracts under fixed attention routing but expands under full attention. We derive a necessary and sufficient spectral criterion that certifies the existence or absence of switching and identifies switching directions without exhaustive search. Beyond existence, our theory characterizes maximum relative switching strength, certifies the severity of contraction that gating can overcome, and determines the largest dimension of a subspace in which every nonzero direction switches. Experiments across four Transformer architectures validate these predictions through analytical and finite-forward comparisons, revealing distinct and joint query--key gating mechanisms and nontrivial switching subspaces. Our results provide a mechanistic and spectral understanding of how attention routing can alter local representation dynamics.
\end{abstract}

\section{Introduction}
\label{sec:introduction}

Many practical problems involving Transformers depend on how small changes to internal representations evolve during computation. Some changes gradually disappear, while others grow and affect the model's behavior. For example, in model editing, correcting a wrong fact should change the intended answer without affecting unrelated behaviors~\cite{meng2022rome}. An overly strong edit may unintentionally change other answers, so we want useful and independent changes to be amplified while unwanted changes are suppressed. A similar challenge appears in adversarial attacks, where an attacker may need to query a target model many times to find a small perturbation that flips its prediction~\cite{ilyas2018blackbox}. Understanding which perturbation directions are likely to be amplified could help identify promising attacks without relying entirely on repeated trial-and-error queries. This problem also appears in many other Transformer applications, including representation steering, model robustness, and long-context modeling, where preserving useful changes and suppressing unwanted ones is important~\cite{turner2024activation,liu2024lost}. 

Previous studies have provided important insights into how Transformer representations evolve. Research on representation collapse and signal propagation explains how repeated attention can make token representations increasingly similar or reduce their diversity~\cite{dong2021attention,noci2022signal,kedia2024transformers}. Other work studies when token representations converge or diverge across layers~\cite{pham2025dynamical,tomihari2025recurrent}. Jacobian and spectral analyses further reveal how sensitive Transformer representations are to perturbations and how this sensitivity depends on attention computations~\cite{kim2021lipschitz,pmlrv235bao24b,aubry2025transformer,saratchandran2025spectral,ormaniec2025does}. However, knowing that a representation difference grows does not necessarily explain what causes it to grow. In particular, we still need to distinguish changes caused by the information being passed through attention from those caused by attention selecting different information. This leads to our central question: \emph{\textbf{Can attention routing turn a representation difference that would otherwise shrink into one that grows?}}

To answer this question, we develop a \emph{Mixture-Gated Theory of Anisotropic Representation Dynamics}.
Our starting point is that the same Transformer computation can shrink some representation differences while amplifying others.
We refer to this direction-dependent behavior as \emph{anisotropic representation dynamics}.
To understand why this happens, we examine how self-attention changes its output.
Attention combines information from different tokens, and a small perturbation can affect this process in two ways.
First, it can change the information carried by the tokens themselves, which we call \emph{value transport}.
Second, it can change how much attention each token receives, which we call \emph{gating}.
We further distinguish \emph{query gating}, which changes attention through the query, from \emph{key gating}, which changes attention through the keys.
This decomposition allows us to compare how representation differences evolve when only the transported information changes, and when changes in attention allocation also contribute to their evolution.
A representation difference may therefore evolve very differently depending on whether attention routing responds to it (See Figure~\ref{fig:gating_regime_switch}).

\begin{figure}[htp]
    \centering
    \includegraphics[width=0.9\linewidth]{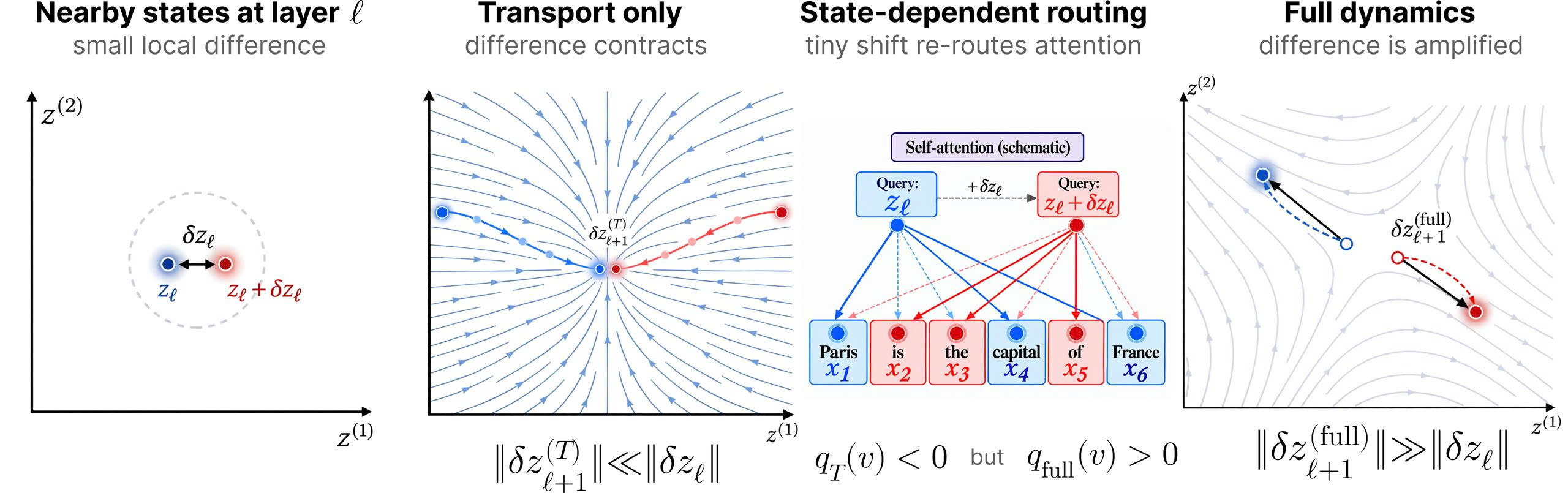}
    \caption{
    Conceptual illustration of gating-induced regime switching.
    Two nearby representations initially differ by a small
    perturbation $\delta z_\ell$. Under fixed attention weights,
    value transport contracts this difference. However, a small
    change in the query can redistribute attention among token
    representations, introducing an additional gating contribution.
    When this contribution is sufficiently strong, the full
    attention computation amplifies the same representation
    difference rather than suppressing it. The illustration
    motivates our analysis of how attention routing can reverse
    transport contraction.
    }
    \label{fig:gating_regime_switch}
\end{figure}

Our analysis shows that changing attention weights does not always produce a change in the output. If attended tokens have nearly identical representations, shifting attention between them has little effect. We characterize this limitation geometrically, showing that differences among attended representations restrict which perturbation directions can affect the output through query gating. Even when attended tokens differ, some perturbations produce no local query-gating response. More importantly, gating can either reinforce or counteract value transport. Consider a representation difference that becomes smaller under value transport alone. By changing which tokens receive attention, gating can oppose this contraction. If the gating response is sufficiently strong, the full attention computation makes the same difference grow instead of shrink. We call this reversal a \emph{\textbf{gating-induced regime switch}}. Our theory establishes a spectral condition for when switching is possible, identifies directions that exhibit this reversal, and characterizes the largest subspace in which every nonzero direction switches.

Our theory also shows that not all regime switches are equally significant. In some directions, value transport only weakly suppresses a representation difference, making it relatively easy for gating to amplify it. In other directions, gating can still amplify a difference even when value transport strongly suppresses it. To distinguish these cases, we introduce two complementary measures: \textit{relative switching strength}, which compares the gating effect against the suppression caused by value transport, and \textit{absolute reversal severity}, which measures how strongly transport can suppress a difference that gating can still amplify. Our theory characterizes both measures, establishes conditions under which switching cannot occur, and examines the reliability of switching directions when the underlying operators are estimated with error. We validate these predictions on pretrained Transformer models by applying small perturbations to hidden representations and comparing the observed changes with our theoretical predictions. The experiments show that high relative switching strength does not necessarily indicate that gating overcomes strong suppression. Moreover, unguided random search often fails to identify directions that are strongly suppressed by transport but amplified by full attention, whereas spectral analysis identifies such directions directly and can be more efficient when many directions must be examined. Our contributions are summarized as follows:
\begin{itemize}
    \item We explain why the same Transformer attention computation can amplify some representation differences while suppressing others. To understand this behavior, we decompose self-attention into three mechanisms: value transport, query gating, and key gating. We then show that query gating is constrained by the geometry of the attended representations: it can only produce changes through directions in which those representations differ.

    \item We develop a spectral theory to explain when attention gating can reverse the contraction caused by value transport. Our results provide a precise condition for this reversal, identify the directions where gating is strongest relative to transport contraction, and determine how severe a contraction can still be reversed. We also characterize the largest subspace in which every nonzero direction exhibits such a reversal.

    \item We validate our theoretical predictions across different Transformer models. Through finite-forward experiments, we verify the predicted geometric and gating mechanisms and examine how switching behavior varies across architectures and layers. We further show that spectral analysis can identify directions where gating reverses particularly severe contraction, even when these directions are difficult to discover through unguided perturbation search.
\end{itemize}

\section{Related Work}
\label{sec:related_work}

\paragraph{Representation collapse and dynamical analyses.}
Transformer representation dynamics have been studied through rank collapse, signal propagation, and token evolution~\cite{dong2021attention,noci2022signal,kedia2024transformers}. Pham et al.~\cite{pham2025dynamical} analyze convergence and divergence of token trajectories, including positional encoding; Tomihari and Karakida~\cite{tomihari2025recurrent} study recurrent attention through Jacobians and Lyapunov exponents. Other work connects query--key spectra to attention localization and avoidance of rank or entropy collapse~\cite{pmlrv235bao24b}. These studies address token configurations and stability. We instead examine how a specific representation perturbation evolves at a fixed operating point, distinguishing directions suppressed by fixed-routing transport from those whose suppression is reversed when attention routing changes.

\paragraph{Sensitivity, Jacobians, and attention derivatives.}
Previous studies have examined how sensitive Transformers are to changes in their inputs and hidden representations. Kim et al.~\cite{kim2021lipschitz} study Lipschitz properties of self-attention, while Aubry et al.~\cite{aubry2025transformer} and Saratchandran and Lucey~\cite{saratchandran2025spectral} analyze attention Jacobians and their spectral properties. Ormaniec et al.~\cite{ormaniec2025does} further investigate the Hessian of the attention loss and its dependence on query, key, and value projections. Rather than simply measuring sensitivity, we investigate which attention mechanisms cause a representation difference to be suppressed or amplified. Specifically, we compare what happens when attention weights remain fixed with what happens when query gating, key gating, or both are allowed to change them. Although these mechanisms can be separated mathematically, their effects on representation growth interact and cannot generally be added independently. This comparison allows us to determine when changes in attention routing reverse the suppression caused by value transport.

\paragraph{Mixture interpretations and state-dependent routing.}
Prior work has explored mixture-based interpretations of attention, including softmax and sigmoid attention~\cite{yan2025sigmoid}, hierarchical mixtures for gated attention~\cite{nguyen2026statistical}, and Mixture-of-Head architectures~\cite{jin2025moh}. We view standard self-attention through a \emph{Mixture-of-Experts (MoE)} perspective, where each attended token provides a value representation and the attention weights act as input-dependent routing coefficients. This viewpoint allows us to distinguish two mechanisms: transporting information under fixed attention weights and changing how information is selected through query and key gating. By analyzing how these mechanisms affect the evolution of representation differences, we show that query gating is constrained by the variation among attended representations. More importantly, we characterize how changes in attention routing interact with value transport, explaining when a representation difference suppressed by transport can become amplified under full attention.

\paragraph{Positioning: spectral certification of regime switching.}
Our objective is to determine when state-dependent routing turns a direction suppressed by transport into one amplified by full attention. On a transport-contractive subspace, comparing transport contraction with the gating-induced growth correction yields a generalized spectral criterion for whether switching occurs and which direction has the strongest relative reversal. We also characterize absolute reversal severity, which distinguishes overcoming strong suppression from reversing only weak suppression, the maximal dimension of an all-switching subspace, and stability under operator-estimation error. The eigenvalue and perturbation tools are classical; our contribution is their attention-specific, mechanistically interpretable formulation, which can identify severe reversals without unguided direction-by-direction probing.

\section{Problem Setting}
\label{sec:problem_setting}

Consider a Transformer block at layer $\ell$ and a representation
$z_\ell\in\mathbb{R}^d$ at the operating point of interest.
We suppress the surrounding sequence context and focus on the local
representation map
$T_\ell:\mathbb{R}^d\rightarrow\mathbb{R}^d$, so that
$z_{\ell+1}=T_\ell(z_\ell)$.
We study how a small difference
$\delta z_\ell\in\mathbb{R}^d$ between two nearby representations,
$z_\ell$ and $z_\ell+\delta z_\ell$, evolves through the block.
Define
\begin{equation}
\delta z_{\ell+1}
:=
T_\ell(z_\ell+\delta z_\ell)-T_\ell(z_\ell).
\end{equation}
For sufficiently small $\delta z_\ell$,
\begin{equation}
\delta z_{\ell+1}
=
J_\ell\delta z_\ell
+
\mathcal{O}(\|\delta z_\ell\|^2),
\qquad
J_\ell
:=
\frac{\partial T_\ell}{\partial z_\ell},
\label{eq:local_dynamics}
\end{equation}
where $J_\ell$ is the local Jacobian of the block. To determine whether a representation difference grows or shrinks, we
must first specify how its magnitude is measured. We therefore introduce
a symmetric positive-definite matrix
$M=M^\top\succ0$ and define the representation metric $\|v\|_M^2
:=
v^\top Mv.$ The matrix $M$ defines the geometry in which representation differences
are measured: $M=I$ recovers the usual Euclidean norm, while a general
$M$ allows different representation directions to be weighted
differently. Under the local transformation $v\mapsto J_\ell v$, the squared
$M$-magnitude after the block is
\begin{equation}
\|J_\ell v\|_M^2
=
(J_\ell v)^\top M(J_\ell v)
=
v^\top J_\ell^\top M J_\ell v.
\end{equation}
Thus $J_\ell^\top M J_\ell$ is the metric induced on the incoming
perturbation by the local Transformer dynamics. Comparing the outgoing
and incoming magnitudes gives
\begin{equation}
\|J_\ell v\|_M^2-\|v\|_M^2
=
v^\top\mathcal{G}_\ell^M v,
\qquad
\mathcal{G}_\ell^M
:=
J_\ell^\top M J_\ell-M.
\label{eq:growth_operator}
\end{equation}
Accordingly,
$v^\top\mathcal{G}_\ell^M v>0$ indicates local amplification,
whereas
$v^\top\mathcal{G}_\ell^M v<0$ indicates local suppression. Importantly, the sign of
$v^\top\mathcal{G}_\ell^M v$
may depend on the direction $v$.
A single Transformer block can therefore suppress some representation
differences while amplifying others.
We refer to this direction-dependent behavior as
\emph{\textbf{anisotropic representation dynamics}}. Equation~\eqref{eq:growth_operator} characterizes \emph{which}
representation differences grow or shrink, but does not yet explain
\emph{why}. Our goal is to connect this directional growth to the
internal computation of self-attention. We separate the local dynamics
into \emph{transport}, which propagates changes in the information
carried by attended representations, and state-dependent
\emph{gating}, which arises when the perturbation changes the attention
allocation itself. This leads to the central question of the paper:

\begin{quote}
\emph{When and why can attention routing turn a representation
difference that would shrink dramatically under value transport
into one that grows?}
\end{quote}

We call such a transition a
\emph{gating-induced regime switch}, and seek to characterize both
when such switching directions exist and which directions exhibit the
strongest reversal.

\section{Mechanism of State-Dependent Attention}
\label{sec:attention_mechanism}

The growth operator in Section~\ref{sec:problem_setting} tells us whether a representation difference grows or shrinks, but not which part of self-attention causes this behavior. To understand why, we decompose the attention Jacobian into three pathways: \textit{value transport}, which propagates changes in the information carried by attended tokens; \textit{query gating}, which changes attention allocation through the query; and \textit{key gating}, which changes attention allocation through the keys. This decomposition allows us to understand how each pathway contributes to representation growth.

\subsection{Transport and Gating in Self-Attention}
\label{subsec:transport_gating}

Consider one self-attention head in a Transformer layer. Let \(x_i\in\mathbb{R}^d\) denote the representation of token \(i\), \(r\) the query-token index, and \(m\) the index of the token whose representation is perturbed. For query token \(r\), the attention score assigned to token \(i\) is \(s_{ri}=x_r^\top W_{QK}x_i\), where \(W_{QK}:=W_QW_K^\top/\sqrt{d_k}\) combines the query and key projections, and \(d_k\) is the query/key dimension. The corresponding attention weight is \(\alpha_{ri}=\exp(s_{ri})/\sum_j\exp(s_{rj})\). Let \(W_{VO}\) denote the combined value and output transformation. The attention output for query token \(r\) is
\begin{equation}
F_r
=
\sum_i \alpha_{ri}W_{VO}x_i .
\label{eq:attention_output}
\end{equation}
Equation~\eqref{eq:attention_output} shows that a perturbation can change the attention output in two ways. First, it can change the information carried by the attended tokens through the value term \(W_{VO}x_i\). Second, it can change the attention weights \(\alpha_{ri}\), causing the model to place more or less weight on different tokens. We call these two effects \emph{value transport} and \emph{gating}, respectively. While transport changes the information being passed forward, gating changes how that information is selected.

To understand how each pathway affects the output, lets consider a small change in the representation of token \(m\). The Jacobian \(J_{rm}:=\partial F_r/\partial x_m\) captures how this change affects the attention output of query token \(r\). For gating, the effect also depends on how different the attended representations are from one another. To capture these differences, we use the attention-weighted mean \(\mu_r:=\sum_i\alpha_{ri}x_i\) and covariance \(\Sigma_r:=\sum_i\alpha_{ri}(x_i-\mu_r)(x_i-\mu_r)^\top\). The mean describes the average information selected by attention, while the covariance measures how much the attended representations differ around this average. Differentiating Eq.~\eqref{eq:attention_output} then gives

\begin{equation}
\begin{aligned}
J_{rm}
={}&
\underbrace{\alpha_{rm}W_{VO}}_{\text{value transport}}
+
\underbrace{\mathbf{1}_{\{m=r\}}W_{VO}\Sigma_rW_{QK}^{\top}}_{\text{query gating}} +
\underbrace{\alpha_{rm}W_{VO}(x_m-\mu_r)(W_{QK}^{\top}x_r)^\top}_{\text{key gating}} .
\end{aligned}
\label{eq:attention_jacobian_decomposition}
\end{equation}

Here, \(\mathbf{1}_{\{m=r\}}\) equals one when the perturbed token is also the query token and zero otherwise. The full derivation is provided in Appendix~\ref{app:attention_jacobian}. Equation~\eqref{eq:attention_jacobian_decomposition} separates the attention Jacobian into three pathways. The \emph{value-transport pathway} directly propagates the perturbation through the value transformation. The \emph{query-gating pathway} captures how changing the query alters its attention to the available tokens, and contributes only when \(m=r\). The \emph{key-gating pathway} captures how changing token \(m\) affects the attention it receives as a key. When \(m=r\), both gating pathways can contribute to the same output change. This decomposition allows us to distinguish changes caused by the information being transported from those caused by changes in attention allocation. In the next subsection, we show how the differences among attended tokens limit which perturbation directions can affect the output through query gating.

\subsection{Geometry of Query Gating}
\label{subsec:query_gating_geometry}

We now investigate how query gating changes a representation difference and which perturbation directions can influence the attention output through this mechanism. From Eq.~\eqref{eq:attention_jacobian_decomposition}, we isolate the query-gating component
\begin{equation}
J_r^{(q)}=W_{VO}\Sigma_rW_{QK}^{\top},
\label{eq:query_gating_geometry}
\end{equation}
which contributes to the full Jacobian \(J_{rm}\) only when the perturbed token is the query itself, i.e., \(m=r\). This decomposition reveal that query gating has three components. The matrix \(W_{QK}^{\top}\) determines how a query perturbation changes attention scores, \(\Sigma_r\) captures variation among the attended representations, and \(W_{VO}\) maps the resulting change into the output space. We are particularly interested in the role of \(\Sigma_r\), since it connects changes in attention routing to the representations being attended to. To characterize this connection, we define the \emph{attended variation subspace} as \(\mathcal U_r:=\operatorname{span}\{x_i-\mu_r:\alpha_{ri}>0\}\), where \(\mu_r\) is the attention-weighted mean of the attended representations. This subspace describes the directions along which those representations differ from their mean. The following proposition characterizes how this geometry constrains query gating.

\begin{figure*}[htp]
    \centering
    \includegraphics[width=0.9\textwidth]{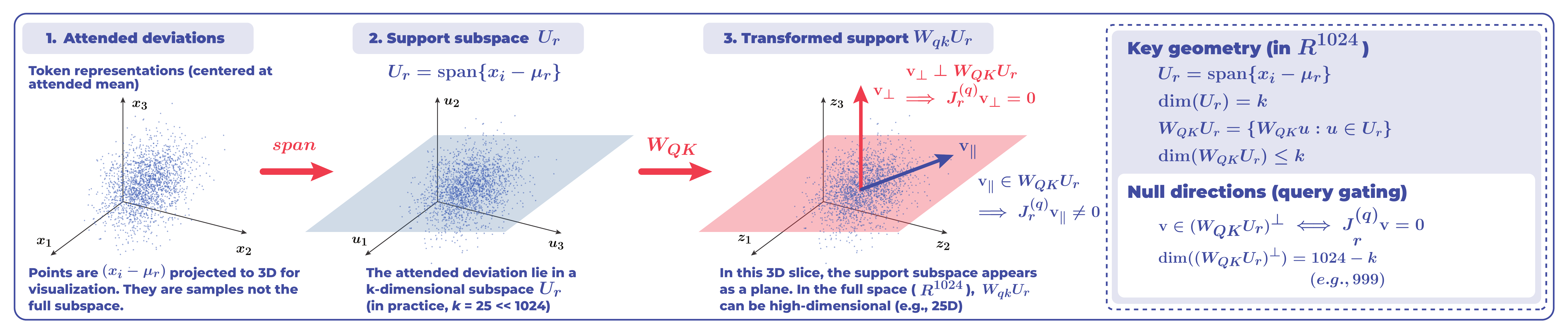}
    \caption{
    \textbf{Geometric support and null directions of query gating.}
    Centered attended representations span a subspace
    \(\mathcal U_r\), whose dimension may be substantially smaller
    than the ambient representation dimension.
    The transformed subspace \(W_{QK}\mathcal U_r\)
    determines which query perturbation directions can influence
    the attention output through query gating.
    Perturbations orthogonal to this subspace produce no
    first-order query-gating change.
    Consequently, query gating is insensitive to at least
    \(d-\dim(\mathcal U_r)\) independent input directions.
    The three-dimensional projections provide a schematic
    visualization of the higher-dimensional geometry.
    }
    \label{fig:query_gating_support_geometry}
\end{figure*}

\begin{proposition}[Geometric Support of Query Gating]
\label{prop:query_gating_support}
Let \(\mathcal U_r\) be the attended variation subspace and \(J_r^{(q)}=W_{VO}\Sigma_rW_{QK}^{\top}\) the query-gating Jacobian. Then
\begin{equation}
\begin{aligned}
\operatorname{Range}(J_r^{(q)})
&\subseteq W_{VO}\mathcal U_r, \qquad\operatorname{rank}(J_r^{(q)})
\leq\dim(\mathcal U_r),\\
W_{QK}^{\top}v\perp\mathcal U_r
&\Longrightarrow J_r^{(q)}v=0.
\end{aligned}
\label{eq:query_gating_support}
\end{equation}
\end{proposition}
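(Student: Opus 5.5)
The argument reduces to one structural fact about the attended covariance: $\Sigma_r$ maps $\mathbb{R}^d$ into $\mathcal U_r$ and annihilates $\mathcal U_r^\perp$. Expanding the definition, for any $w\in\mathbb{R}^d$,
\[
\Sigma_r w=\sum_{i:\alpha_{ri}>0}\alpha_{ri}\,(x_i-\mu_r)\,\bigl((x_i-\mu_r)^\top w\bigr).
\]
This is a linear combination of the vectors $x_i-\mu_r$ with $\alpha_{ri}>0$. Terms with $\alpha_{ri}=0$ contribute nothing, and since softmax weights are strictly positive, every token is in fact included. Hence $\operatorname{Range}(\Sigma_r)\subseteq\mathcal U_r$.

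The same expansion shows that if $w\perp\mathcal U_r$, then every coefficient $(x_i-\mu_r)^\top w$ vanishes, so $\Sigma_r w=0$. Symmetry of $\Sigma_r$ also gives the slightly stronger statements $\operatorname{Range}(\Sigma_r)=\mathcal U_r$ and $\ker\Sigma_r=\mathcal U_r^\perp$. These are not needed, since the inclusions above suffice.

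The three claims of Proposition~\ref{prop:query_gating_support} then follow by composing with $W_{QK}^\top$ and $W_{VO}$.

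\textbf{Range.} For any $v$, set $w=W_{QK}^\top v$. Then $J_r^{(q)}v=W_{VO}\Sigma_r w\in W_{VO}\mathcal U_r$, so $\operatorname{Range}(J_r^{(q)})\subseteq W_{VO}\mathcal U_r$.

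\textbf{Rank.} The image of a subspace under a linear map has dimension at most that of the subspace. Therefore $\operatorname{rank}(J_r^{(q)})\le\dim(W_{VO}\mathcal U_r)\le\dim(\mathcal U_r)$. Equivalently, one can use $\operatorname{rank}(ABC)\le\operatorname{rank}(B)=\dim\mathcal U_r$.

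\textbf{Null directions.} If $W_{QK}^\top v\perp\mathcal U_r$, the kernel fact gives $\Sigma_r W_{QK}^\top v=0$, and hence $J_r^{(q)}v=0$.

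No step is a real obstacle. The only point needing care is that the kernel claim is about $W_{QK}^\top v$ being orthogonal to $\mathcal U_r$; the condition is not placed on $v$ itself. The orthogonality must be taken in the Euclidean inner product used to define $\Sigma_r$ as a sum of outer products.
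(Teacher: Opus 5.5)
Your proof is correct and follows the paper's argument. Like the paper, you expand $\Sigma_r w$ as a combination of the centered vectors $x_i-\mu_r$ and then compose with $W_{QK}^\top$ and $W_{VO}$. The only difference is that you use just the inclusions $\operatorname{Range}(\Sigma_r)\subseteq\mathcal U_r$ and $\mathcal U_r^\perp\subseteq\ker\Sigma_r$, which indeed suffice, while the paper also proves equality.

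One small correction to your aside: that equality does not follow from symmetry alone. It needs positive semidefiniteness with strictly positive weights, via $w^\top\Sigma_r w=\sum_i\alpha_{ri}\bigl((x_i-\mu_r)^\top w\bigr)^2$, which is exactly how the paper obtains $\ker\Sigma_r=\mathcal U_r^\perp$.
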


\begin{insightbox}
\textbf{Insight.}
Proposition~\ref{prop:query_gating_support} shows that query gating is constrained by the geometry of attended representations in two ways. First, the range restriction \(\operatorname{Range}(J_r^{(q)})\subseteq W_{VO}\mathcal U_r\) limits the possible output changes, while the rank bound \(\operatorname{rank}(J_r^{(q)})\leq\dim(\mathcal U_r)\) limits their number of independent directions. For example, if several attended representations differ along only one direction, query gating can produce changes along at most one independent output direction, even in a high-dimensional representation space. If all attended representations are identical, then \(\mathcal U_r=\{0\}\), implying \(J_r^{(q)}=0\): there is no difference for attention to redistribute. Second, the condition \(W_{QK}^{\top}v\perp\mathcal U_r\Rightarrow J_r^{(q)}v=0\) shows that not every query perturbation can produce an output change through gating. Therefore, changing attention weights does not always change the attention output. Query gating can only exploit differences among attended representations, and its effect depends on which perturbation directions interact with those differences.
\end{insightbox}

Figure~\ref{fig:query_gating_support_geometry}
illustrates how the geometry of attended representations
restricts the perturbation directions that can influence
the attention output through query gating.
The complete proof, pairwise covariance interpretation, and additional geometric bounds are provided in Appendix~\ref{subsec:app_gating_geometry}. While the proposition identifies which directions query gating can affect, it does not determine whether these changes suppress or amplify a representation difference. We next examine how gating interacts with value transport and when this interaction leads to regime switching.

\section{Theory of Gating-Induced Regime Switching}
\label{sec:regime_switching}

The previous section established how attention routing changes representations and how the geometry of attended tokens constrains query gating. However, changing attention allocation does not necessarily make a representation difference grow. The effect depends on how gating combines with value transport. We now study when this interaction can reverse transport contraction, turning a representation difference that would otherwise shrink into one that grows.

\subsection{From Transport Contraction to Regime Switching}
\label{subsec:switch_definition}

\begin{figure*}[htp]
    \centering
    \includegraphics[width=0.9\textwidth]{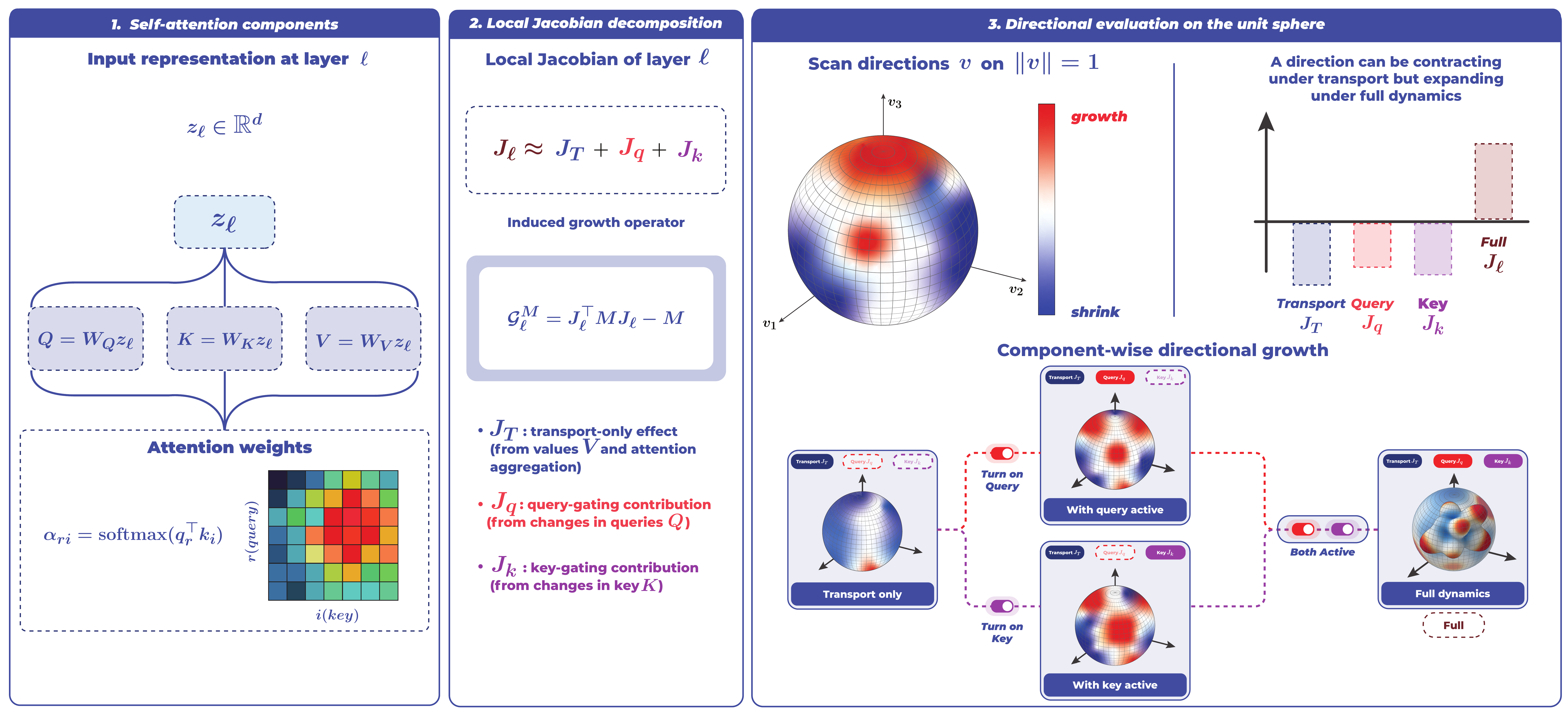}
    \caption{
    \textbf{From self-attention decomposition to gating-induced regime switching.}
    Self-attention induces three local mechanisms: value transport,
    query gating, and key gating, represented by their corresponding
    Jacobian components. These components jointly determine directional
    representation growth through the induced growth operator
    \(G^M=J^\top MJ-M\).
    The three-dimensional illustration shows how different perturbation
    directions can exhibit contraction or expansion.
    Interactions between value transport and attention gating can
    reverse transport contraction, causing a representation difference
    to grow under full attention.
    }
    \label{fig:jacobian_growth_switching}
\end{figure*}

We now investigate whether attention routing can turn a representation difference suppressed by value transport into one amplified by full attention. Consider a small perturbation to the token representations. We first study how this perturbation changes the attention output when the attention weights are held fixed, considering only value transport. We then include the additional changes caused by query and key gating. Both cases use the same initial representations and perturbation, allowing us to distinguish the effect of value transport from the additional contribution of attention routing. This comparison helps us understand when gating reinforces transport contraction and when it reverses that contraction, causing a representation difference to grow instead of shrink.

Recall the attention Jacobian decomposition in Eq.~\eqref{eq:attention_jacobian_decomposition}. Let \(J_T\), \(J_Q\), and \(J_K\) denote the value-transport, query-gating, and key-gating Jacobians, respectively, with token indices omitted for simplicity. We define the combined gating Jacobian as \(J_G:=J_Q+J_K\), so that the full attention Jacobian is \(J_{\mathrm{full}}=J_T+J_G\). When analyzing a residual attention sublayer, the shared residual contribution is included consistently in the transport-only and full Jacobians. Using the representation metric \(M\succ0\) introduced in Section~\ref{sec:problem_setting}, we define the corresponding growth operators as
\begin{equation}
G_T^M:=J_T^\top M J_T-M,\qquad
G_{\mathrm{full}}^M:=J_{\mathrm{full}}^\top M J_{\mathrm{full}}-M.
\label{eq:transport_full_growth}
\end{equation}
For a perturbation direction \(v\), the quadratic forms \(v^\top G_T^M v\) and \(v^\top G_{\mathrm{full}}^M v\) measure the changes in its squared magnitude under transport-only and full attention, respectively. We are interested in directions for which these two quantities have opposite signs.

\begin{definition}[Gating-Induced Regime Switch]
\label{def:gating_regime_switch}
A nonzero perturbation direction \(v\) undergoes a \emph{gating-induced regime switch} if
\begin{equation}
v^\top G_T^M v<0,
\qquad
v^\top G_{\mathrm{full}}^M v>0.
\label{eq:regime_switch_definition}
\end{equation}
\end{definition}

Definition~\ref{def:gating_regime_switch} describes a change from suppression to amplification along the same direction. Under fixed attention weights, value transport suppresses the representation difference. When query and key gating are included, full attention amplifies it. To determine when this reversal occurs, we compare the two growth operators and isolate the additional contribution caused by gating.

\begin{proposition}[Exact Gating-Induced Growth Correction]
\label{prop:gating_growth_correction}
Let \(\Delta G_G^M:=G_{\mathrm{full}}^M-G_T^M\) denote the gating-induced growth correction. Then $\Delta G_G^M
=
J_T^\top M J_G
+
J_G^\top M J_T
+
J_G^\top M J_G.$ For any nonzero perturbation direction \(v\), the corresponding change in directional growth is
\begin{equation}
v^\top\Delta G_G^M v
=
2\langle J_Tv,J_Gv\rangle_M
+
\|J_Gv\|_M^2.
\label{eq:directional_gating_correction}
\end{equation}
Consequently, if \(v^\top G_T^M v<0\), a gating-induced regime switch occurs if and only if $v^\top\Delta G_G^M v
>
-v^\top G_T^M v.$ (Proof in Appendix~\ref{subsec:app_transport_full_growth})
\end{proposition}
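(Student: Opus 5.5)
The argument is a direct algebraic expansion, so I will proceed in three short steps. First, substitute $J_{\mathrm{full}}=J_T+J_G$ into the definition of $G_{\mathrm{full}}^M$ in Eq.~\eqref{eq:transport_full_growth} and expand the product bilinearly:
\[
(J_T+J_G)^\top M (J_T+J_G)
= J_T^\top M J_T + J_T^\top M J_G + J_G^\top M J_T + J_G^\top M J_G .
\]
Subtracting $G_T^M=J_T^\top M J_T-M$ cancels both the $-M$ term and the pure transport term. This leaves exactly the stated matrix identity for $\Delta G_G^M$. The only bookkeeping point is the remark after Eq.~\eqref{eq:attention_jacobian_decomposition}: in a residual sublayer, the residual identity is placed inside $J_T$ in both Jacobians. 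Then $J_{\mathrm{full}}-J_T=J_G$ still holds, and the expansion is unchanged.

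Second, I evaluate the quadratic form at a nonzero $v$. The diagonal term gives $v^\top J_G^\top M J_G v=(J_Gv)^\top M (J_Gv)=\|J_Gv\|_M^2$. The cross terms are scalars, so $v^\top J_G^\top M J_T v$ equals its own transpose $v^\top J_T^\top M^\top J_G v$. Since $M=M^\top$, both cross terms equal $\langle J_Tv,J_Gv\rangle_M:=(J_Tv)^\top M (J_Gv)$. Adding them gives Eq.~\eqref{eq:directional_gating_correction}. Symmetry of $M$ is the one hypothesis used here; positive definiteness only ensures that $\langle\cdot,\cdot\rangle_M$ is a genuine inner product and $\|\cdot\|_M$ a norm.

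Third, I derive the switching criterion. By definition, $v^\top G_{\mathrm{full}}^M v=v^\top G_T^M v+v^\top \Delta G_G^M v$. Assume $v^\top G_T^M v<0$, so the first condition of Definition~\ref{def:gating_regime_switch} already holds. A regime switch then occurs if and only if $v^\top G_{\mathrm{full}}^M v>0$, which rearranges to $v^\top\Delta G_G^M v>-v^\top G_T^M v$.

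No step is a genuine obstacle. The only care needed is keeping the residual convention consistent so that $J_G$ is exactly the difference of the two Jacobians.
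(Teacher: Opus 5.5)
Your proposal is correct and follows essentially the same route as the paper's proof: expand $(J_T+J_G)^\top M(J_T+J_G)-M$, cancel $G_T^M$, and then use $v^\top G_{\mathrm{full}}^M v=v^\top G_T^M v+v^\top\Delta G_G^M v$ to rewrite the switching condition. You also make explicit the use of $M=M^\top$ to merge the two cross terms into $2\langle J_Tv,J_Gv\rangle_M$, a step the paper leaves implicit.
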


\begin{insightbox}
\textbf{Insight.}
Proposition~\ref{prop:gating_growth_correction} shows that the effect of attention gating depends on how it combines with value transport. In Eq.~\eqref{eq:directional_gating_correction}, the interaction term \(2\langle J_Tv,J_Gv\rangle_M\) can increase or decrease representation growth, depending on the alignment between transport and gating. Furthermore, since \(J_G=J_Q+J_K\), the term \(\|J_Gv\|_M^2\) contains an additional interaction between query and key gating, given by \(2\langle J_Qv,J_Kv\rangle_M\). Therefore, the combined effect of query and key gating on representation growth is not simply the sum of their individual effects. In particular, query or key gating may each fail to reverse transport contraction, yet succeed when acting together. 
Such a reversal occurs when the combined gating effect is strong enough to overcome the contraction caused by value transport ($v^\top\Delta G_G^M v
>
-v^\top G_T^M v$).
\end{insightbox}

Proposition~\ref{prop:gating_growth_correction} establishes when a given perturbation direction undergoes a regime switch. However, identifying such directions individually can be difficult in a high-dimensional representation space. We next develop a spectral criterion that determines whether switching is possible and identifies switching directions without testing perturbations one by one.

\subsection{Spectral Certification of Gating-Induced Regime Switching}
\label{subsec:spectral_switching}

Proposition~\ref{prop:gating_growth_correction} provides an exact condition for determining whether a given perturbation direction undergoes a regime switch. However, it does not tell us whether such a direction exists without first selecting and testing individual perturbations. Since representation spaces can be high-dimensional, searching over possible directions is difficult. We therefore seek a spectral condition that determines whether gating can reverse transport contraction along at least one direction and, if so, identifies such a direction directly. We focus on a \emph{transport-contractive subspace} \(\mathcal S\subseteq\mathbb R^d\), in which value transport suppresses every nonzero perturbation direction. Let \(P\in\mathbb R^{d\times k}\) have orthonormal columns spanning \(\mathcal S\), so that every \(v\in\mathcal S\) can be expressed as \(v=Pu\), where \(u\in\mathbb R^k\). By definition, transport contraction throughout this subspace requires \(P^\top G_T^M P\prec0\). Such a subspace can be constructed from the eigenvectors of \(G_T^M\) associated with its negative eigenvalues.

To determine whether gating can reverse this contraction, we compare the transport-induced suppression with the additional growth introduced by gating. We define the corresponding operators restricted to \(\mathcal S\) as
\begin{equation}
C:=-P^\top G_T^M P\succ0,\qquad
R:=P^\top\Delta G_G^M P.
\label{eq:restricted_switching_operators}
\end{equation}
Here, \(C\) measures how strongly transport suppresses perturbations, while \(R\) measures how much the inclusion of query and key gating changes their growth. For any direction \(v=Pu\in\mathcal S\), we have
\begin{equation}
v^\top G_T^M v=-u^\top Cu,\qquad
v^\top G_{\mathrm{full}}^M v=u^\top(R-C)u.
\label{eq:restricted_directional_growth}
\end{equation}
Since \(C\succ0\), every nonzero direction in \(\mathcal S\) is contracted by transport. A regime switch therefore exists precisely when the growth correction introduced by gating exceeds this contraction along at least one direction, i.e., \(u^\top Ru>u^\top Cu\) for some \(u\neq0\). The following theorem establishes a necessary and sufficient spectral condition for this reversal.

\begin{theorem}[Spectral Criterion for Gating-Induced Regime Switching]
\label{thm:spectral_regime_switching}
For the \emph{transport switching operator}
\(W:=C^{-1/2}RC^{-1/2}\), there exists a nonzero direction
\(v\in\mathcal S\) undergoing a gating-induced regime switch
if and only if \(\lambda_{\max}(W)>1\).
Moreover, when this condition holds, a switching direction is
\(v_{\mathrm{sw}}=PC^{-1/2}w_{\max}\), where \(w_{\max}\) is a unit
eigenvector of \(W\) associated with \(\lambda_{\max}(W)\).
\end{theorem}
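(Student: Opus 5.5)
The plan is to reduce the existence of a switching direction to a generalized Rayleigh-quotient problem for the pair \((R,C)\), and then whiten by \(C^{-1/2}\). By Eq.~\eqref{eq:restricted_directional_growth}, every nonzero \(v=Pu\in\mathcal S\) already satisfies \(v^\top G_T^M v=-u^\top Cu<0\), because \(C\succ0\). Hence \(v\) undergoes a gating-induced regime switch in the sense of Definition~\ref{def:gating_regime_switch} if and only if \(u^\top(R-C)u>0\), that is, \(u^\top Ru>u^\top Cu\). Since \(P\) has orthonormal columns, the map \(u\mapsto Pu\) is a bijection from \(\mathbb R^k\) onto \(\mathcal S\), and it sends nonzero \(u\) to nonzero \(v\). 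The question is therefore whether some \(u\neq0\) satisfies \(u^\top Ru>u^\top Cu\).

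Next I change variables. Because \(C\succ0\), it has a symmetric positive-definite square root \(C^{1/2}\) with inverse \(C^{-1/2}\). Set \(w=C^{1/2}u\), which is again a bijection of nonzero vectors. Then \(u^\top Cu=\|w\|^2\) and \(u^\top Ru=w^\top C^{-1/2}RC^{-1/2}w=w^\top Ww\). The switching condition becomes \(w^\top Ww>w^\top w\) for some \(w\neq0\), which is equivalent to
\[
\max_{w\neq0}\frac{w^\top Ww}{w^\top w}>1 .
\]
I need \(W\) to be symmetric. This holds because \(\Delta G_G^M=J_T^\top MJ_G+J_G^\top MJ_T+J_G^\top MJ_G\) from Proposition~\ref{prop:gating_growth_correction} is symmetric, so \(R=P^\top\Delta G_G^M P\) is symmetric, and so is \(W\). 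The Rayleigh--Ritz theorem then identifies the maximum above with \(\lambda_{\max}(W)\), and it is attained at a unit eigenvector \(w_{\max}\). This gives the equivalence in both directions. If \(\lambda_{\max}(W)\le1\), then \(w^\top Ww\le\|w\|^2\) for all \(w\), so no switching direction exists. If \(\lambda_{\max}(W)>1\), the maximizer produces one.

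For the explicit direction, I invert the substitutions: \(u=C^{-1/2}w_{\max}\) and \(v_{\mathrm{sw}}=PC^{-1/2}w_{\max}\). This vector is nonzero because \(C^{-1/2}\) is invertible and \(P\) is injective. Substituting into Eq.~\eqref{eq:restricted_directional_growth} gives \(v_{\mathrm{sw}}^\top G_{\mathrm{full}}^M v_{\mathrm{sw}}=w_{\max}^\top(W-I)w_{\max}=\lambda_{\max}(W)-1>0\), and also \(v_{\mathrm{sw}}^\top G_T^M v_{\mathrm{sw}}=-1<0\). So \(v_{\mathrm{sw}}\) is a switching direction.

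I do not expect a serious obstacle here. The only points needing care are the symmetry of \(R\), which Rayleigh--Ritz requires, and the fact that the changes of variables preserve nonzeroness. Both follow directly from the stated definitions and from \(C\succ0\).
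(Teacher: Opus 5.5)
Your proposal is correct and takes the same route as the paper's proof. Both reduce switching on $\mathcal S$ to the generalized Rayleigh quotient condition $u^\top Ru/u^\top Cu>1$, whiten via $w=C^{1/2}u$, use the symmetry of $W$ with Rayleigh--Ritz, and check that $v_{\mathrm{sw}}=PC^{-1/2}w_{\max}$ has transport growth $-1$ and full growth $\lambda_{\max}(W)-1>0$.
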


The theorem follows by expressing the switching condition as \(u^\top Ru/(u^\top Cu)>1\), which is well-defined because \(C\succ0\). The maximum of this ratio over all nonzero directions equals \(\lambda_{\max}(W)\), yielding the necessary and sufficient condition. The complete proof is provided in Appendix~\ref{subsec:app_spectral_switch}.

\begin{insightbox}
\textbf{Insight.}
Theorem~\ref{thm:spectral_regime_switching} addresses a fundamental limitation of testing perturbation directions individually: even if many directions are tested and no regime switch is found, this does not prove that switching is impossible. In contrast, the spectral criterion provides a necessary and sufficient condition for the existence of switching within the transport-contractive subspace \(\mathcal S\). The operator \(W=C^{-1/2}RC^{-1/2}\) compares the growth introduced by gating with the contraction caused by transport across this subspace. If \(\lambda_{\max}(W)\leq1\), no switching direction exists in \(\mathcal S\). If \(\lambda_{\max}(W)>1\), switching is guaranteed to occur, and the leading eigenvector allows us to construct a switching direction directly. Thus, spectral analysis can certify both the existence and absence of switching without relying on direction-by-direction search. However, the identified direction does not necessarily correspond to the strongest transport contraction that gating can reverse, motivating our subsequent analysis of reversal severity.
\end{insightbox}

We next distinguish the relative strength of gating-induced switching from the absolute severity of transport contraction that gating can overcome.

\subsection{Relative Switching Strength and Certified Reversal Severity}
\label{subsec:switching_strength}

Theorem~\ref{thm:spectral_regime_switching} establishes when gating can reverse transport contraction and identifies a switching direction when one exists. However, this direction does not necessarily correspond to the strongest contraction that gating can overcome. For instance, a direction may show a large gating effect relative to transport simply because transport barely suppresses it. In contrast, another direction may remain expanding under full attention even after transport removes most of its original magnitude. To distinguish these cases, we introduce two different measures: \emph{relative switching strength}, which compares the gating-induced growth correction against transport contraction, and \emph{absolute reversal severity}, which measures how strongly transport suppresses a difference that gating can still amplify.

\paragraph{Relative switching strength.}
For a nonzero direction \(v\in\mathcal S\), the gating-induced growth correction is \(v^\top\Delta G_G^M v\), while the magnitude of transport contraction is \(-v^\top G_T^M v>0\). Their ratio defines the relative switching strength:
\begin{equation}
\Gamma(v)
:=
\frac{v^\top\Delta G_G^M v}
{-v^\top G_T^M v}
=
\frac{u^\top Ru}{u^\top Cu},
\qquad v=Pu.
\label{eq:relative_switching_strength}
\end{equation}
This ratio tells us how much gating changes representation growth compared with the contraction it must overcome. In particular, \(\Gamma(v)>1\) means that the gating-induced correction exceeds transport contraction, resulting in a regime switch. The spectral analysis in Theorem~\ref{thm:spectral_regime_switching} also allows us to identify the direction that maximizes this ratio.

\begin{corollary}[Maximum Relative Switching Strength]
\label{cor:max_relative_switching}
Let \(W=C^{-1/2}RC^{-1/2}\), and let \(w_{\max}\) be a unit eigenvector associated with \(\lambda_{\max}(W)\). Then
\begin{equation}
\max_{v\in\mathcal S\setminus\{0\}}\Gamma(v)
=
\lambda_{\max}(W),
\qquad
v_{\mathrm{rel}}=PC^{-1/2}w_{\max}.
\label{eq:maximum_relative_switching}
\end{equation}
The direction \(v_{\mathrm{rel}}\), up to nonzero scaling, maximizes relative switching strength and undergoes a regime switch whenever \(\lambda_{\max}(W)>1\).
\end{corollary}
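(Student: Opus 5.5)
The plan is to reduce the corollary to a Rayleigh-quotient computation via the change of variables already used in Theorem~\ref{thm:spectral_regime_switching}. By Eq.~\eqref{eq:relative_switching_strength}, every nonzero \(v\in\mathcal S\) can be written uniquely as \(v=Pu\) with \(u\neq0\), because \(P\) has orthonormal columns and is therefore injective. Along this parametrization, \(\Gamma(v)=u^\top Ru/(u^\top Cu)\). This is well defined because \(C\succ0\). Maximizing \(\Gamma\) over \(\mathcal S\setminus\{0\}\) is then the same as maximizing the generalized Rayleigh quotient of the pair \((R,C)\) over \(\mathbb R^k\setminus\{0\}\).

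The key step is the substitution \(u=C^{-1/2}w\). Since \(C\succ0\), the symmetric square root \(C^{1/2}\) exists and is invertible, so \(u\mapsto w=C^{1/2}u\) is a bijection of \(\mathbb R^k\setminus\{0\}\) onto itself. Under it,
\[
\frac{u^\top Ru}{u^\top Cu}=\frac{w^\top C^{-1/2}RC^{-1/2}w}{w^\top w}=\frac{w^\top Ww}{w^\top w}.
\]
I will note that \(R=P^\top\Delta G_G^M P\) is symmetric, since \(\Delta G_G^M\) is a difference of symmetric matrices. Hence \(W\) is symmetric, and the standard Rayleigh--Ritz theorem applies: the supremum of \(w^\top Ww/w^\top w\) equals \(\lambda_{\max}(W)\), and it is attained exactly on the nonzero vectors of the top eigenspace. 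Pulling back, the maximum of \(\Gamma\) is \(\lambda_{\max}(W)\), attained at \(u=C^{-1/2}w_{\max}\), i.e.\ at \(v_{\mathrm{rel}}=PC^{-1/2}w_{\max}\). The ratio \(\Gamma\) is homogeneous of degree zero, so any nonzero multiple of \(v_{\mathrm{rel}}\) is also a maximizer; this gives the ``up to nonzero scaling'' clause. I will also check that \(v_{\mathrm{rel}}\neq0\), which holds because \(w_{\max}\neq0\) and both \(C^{-1/2}\) and \(P\) are injective.

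For the last claim, suppose \(\lambda_{\max}(W)>1\). Then \(\Gamma(v_{\mathrm{rel}})>1\), which means \(v_{\mathrm{rel}}^\top\Delta G_G^M v_{\mathrm{rel}}>-v_{\mathrm{rel}}^\top G_T^M v_{\mathrm{rel}}\). Also \(v_{\mathrm{rel}}^\top G_T^M v_{\mathrm{rel}}=-u^\top Cu<0\). By Proposition~\ref{prop:gating_growth_correction}, equivalently by Eq.~\eqref{eq:restricted_directional_growth} since \(u^\top(R-C)u>0\), the direction \(v_{\mathrm{rel}}\) undergoes a regime switch. This is consistent with the direction produced by Theorem~\ref{thm:spectral_regime_switching}.

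There is no real obstacle. The only points needing care are the symmetry of \(R\), which Rayleigh--Ritz requires, and the fact that both changes of variables (\(v=Pu\) and \(u=C^{-1/2}w\)) are bijections on nonzero vectors. Together these ensure the maximum over \(\mathcal S\setminus\{0\}\) is genuinely attained rather than only being a supremum.
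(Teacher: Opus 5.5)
Your proposal is correct and follows essentially the same route as the paper's proof. You parametrize $\mathcal S$ by $v=Pu$, whiten via $w=C^{1/2}u$ to reduce $\Gamma$ to the Rayleigh quotient of the symmetric operator $W$, apply Rayleigh--Ritz, pull back to $v_{\mathrm{rel}}=PC^{-1/2}w_{\max}$, and obtain the switch from $\Gamma(v_{\mathrm{rel}})>1$ together with transport contraction on $\mathcal S$. Your explicit checks of degree-zero homogeneity (which gives the ``up to nonzero scaling'' clause) and of $v_{\mathrm{rel}}\neq0$ are details the paper's proof of this corollary leaves implicit.
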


Corollary~\ref{cor:max_relative_switching} follows from the generalized Rayleigh quotient in Eq.~\eqref{eq:relative_switching_strength}, with the complete proof provided in Appendix~\ref{subsec:app_switch_strength}. Although the corollary identifies the direction with the largest relative switching strength, a large value of \(\Gamma(v)\) does not necessarily indicate that gating has reversed strong contraction. When transport barely reduces a representation difference, even a modest gating-induced correction can produce a large ratio. Therefore, maximizing relative switching strength does not directly answer how much contraction gating can overcome.

\paragraph{Absolute reversal severity.}
To measure the contraction itself, we introduce the \emph{absolute reversal severity}, defined for any nonzero transport-contractive direction \(v\in\mathcal S\) as
\begin{equation}
\tau(v)
:=
\frac{-v^\top G_T^M v}{v^\top Mv}
=
1-\frac{\|J_Tv\|_M^2}{\|v\|_M^2}.
\label{eq:absolute_reversal_severity}
\end{equation}
Unlike \(\Gamma(v)\), which compares gating against transport, \(\tau(v)\) measures the fraction of incoming squared representation magnitude removed by transport. Since every nonzero direction in \(\mathcal S\) is transport-contractive, \(0<\tau(v)\leq1\). For example, \(\tau(v)=0.8\) means that transport removes \(80\%\) of the incoming squared magnitude. If full attention nevertheless amplifies this direction, gating has reversed a substantial contraction.

Our goal is to determine the strongest transport contraction that gating can still reverse. Let \(\tau_{\max}\) denote the supremum of \(\tau(v)\) over all switching directions in \(\mathcal S\), with \(\tau_{\max}=0\) if no switching direction exists. Finding this quantity requires a direction that is both strongly suppressed by transport and amplified by full attention. Rather than searching over individual directions, we develop a spectral certificate that determines whether switching is possible among directions whose contraction exceeds a prescribed severity threshold. To construct this certificate, we first need to identify directions that transport suppresses sufficiently strongly. Recall that every \(v\in\mathcal S\) can be expressed as \(v=Pu\). Let \(B:=P^\top MP\succ0\), which measures the original squared representation magnitude in the coordinates of \(\mathcal S\). Since \(C=-P^\top G_T^M P\) measures the magnitude removed by transport, the severity can be written as \(\tau(Pu)=u^\top Cu/(u^\top Bu)\). Thus, \(C\) measures how much magnitude is removed, while \(B\) measures how much was present initially.

We identify directions with different contraction severities by solving the generalized eigenvalue problem \(Cq_i=\eta_iBq_i\), with \(q_i^\top Bq_j=\delta_{ij}\), and order the eigenvalues as \(\eta_1\geq\cdots\geq\eta_k>0\). Each eigenvector \(q_i\) specifies a direction \(Pq_i\) in the original representation space, while its eigenvalue \(\eta_i=\tau(Pq_i)\) gives the fraction of squared magnitude removed by transport along that direction. Larger eigenvalues therefore identify directions that transport suppresses more strongly. Given a desired severity threshold \(\tau_0\in(0,1]\), we retain the eigenvectors whose contraction severity is at least \(\tau_0\) and construct
\begin{equation}
\mathcal S_{\tau_0}
:=
P\,\operatorname{span}
\{q_i:\eta_i\geq\tau_0\}.
\label{eq:severity_restricted_subspace}
\end{equation}
This construction guarantees that every nonzero direction in \(\mathcal S_{\tau_0}\), including linear combinations of the selected eigenvectors, satisfies \(\tau(v)\geq\tau_0\). For example, choosing \(\tau_0=0.8\) gives a subspace in which transport removes at least \(80\%\) of the incoming squared magnitude along every direction. We can focus on strongly suppressed differences while excluding directions whose contraction is weaker than the chosen threshold. Having identified directions with strong transport contraction, the remaining question is whether gating can reverse this contraction. We answer this by applying the spectral criterion from Theorem~\ref{thm:spectral_regime_switching} to \(\mathcal S_{\tau_0}\). Let \(P_{\tau_0}\) have orthonormal columns spanning this subspace, assuming \(\mathcal S_{\tau_0}\neq\{0\}\). We define the restricted transport-contraction and gating-growth operators as
\begin{equation}
\begin{aligned}
C_{\tau_0}
&:=-P_{\tau_0}^\top G_T^M P_{\tau_0}\succ0, \qquad R_{\tau_0} :=P_{\tau_0}^\top\Delta G_G^M P_{\tau_0}.
\end{aligned}
\label{eq:severity_restricted_operators}
\end{equation}

Here, \(C_{\tau_0}\) measures how strongly transport suppresses directions in the selected subspace, while \(R_{\tau_0}\) measures the additional growth introduced by gating. Following Section~\ref{subsec:spectral_switching}, we construct the restricted switching operator \(W_{\tau_0}:=C_{\tau_0}^{-1/2}R_{\tau_0}C_{\tau_0}^{-1/2}\), which compares the gating-induced growth correction against transport contraction within this strongly contracting subspace. The following theorem establishes whether gating can reverse contraction of at least the prescribed severity.

\begin{theorem}[Certified Absolute Reversal Severity]
\label{thm:absolute_reversal_severity}
Fix \(\tau_0\in(0,1]\) and suppose \(\mathcal S_{\tau_0}\neq\{0\}\). Every nonzero direction \(v\in\mathcal S_{\tau_0}\) satisfies \(\tau(v)\geq\tau_0\). Moreover, there exists a gating-induced regime-switch direction in \(\mathcal S_{\tau_0}\) if and only if $\lambda_{\max}(W_{\tau_0})>1.$ Consequently, whenever this condition holds, gating can reverse transport contraction of severity at least \(\tau_0\), implying \(\tau_{\max}\geq\tau_0\).
\end{theorem}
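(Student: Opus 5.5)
The proof has three parts: the severity lower bound on \(\mathcal S_{\tau_0}\), the spectral switching criterion on \(\mathcal S_{\tau_0}\), and the consequence for \(\tau_{\max}\).

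\textbf{Severity bound.} Let \(Q_{\tau_0}\) collect the generalized eigenvectors \(q_i\) with \(\eta_i\geq\tau_0\). Any nonzero \(v\in\mathcal S_{\tau_0}\) can be written \(v=Pu\) with \(u=\sum_{i:\eta_i\geq\tau_0}c_iq_i\) and \(c\neq0\). Using \(q_i^\top Bq_j=\delta_{ij}\) and \(Cq_i=\eta_iBq_i\), we get \(q_i^\top Cq_j=\eta_i\delta_{ij}\). Hence
\[
u^\top Cu=\sum_i\eta_ic_i^2\geq\tau_0\sum_ic_i^2=\tau_0\,u^\top Bu .
\]
Since \(P\) has orthonormal columns, \(v^\top Mv=u^\top Bu\) and \(-v^\top G_T^Mv=u^\top Cu\). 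Therefore \(\tau(v)=u^\top Cu/(u^\top Bu)\geq\tau_0\). This step also shows that \(\mathcal S_{\tau_0}\subseteq\mathcal S\) is transport-contractive: \(-v^\top G_T^Mv\geq\tau_0\|v\|_M^2>0\).

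\textbf{Spectral criterion.} Since \(P_{\tau_0}\) has orthonormal columns spanning \(\mathcal S_{\tau_0}\), the previous step gives \(C_{\tau_0}\succ0\). So the hypotheses of Theorem~\ref{thm:spectral_regime_switching} hold with \(\mathcal S\) replaced by \(\mathcal S_{\tau_0}\), \(P\) by \(P_{\tau_0}\), \(C\) by \(C_{\tau_0}\), and \(R\) by \(R_{\tau_0}\). That theorem only needs an orthonormal basis of a subspace on which \(G_T^M\) is negative definite, so it applies directly. It yields a switching direction in \(\mathcal S_{\tau_0}\) if and only if \(\lambda_{\max}(W_{\tau_0})>1\). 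If one prefers not to reuse the theorem verbatim, the argument can be redone in a line. Write \(v=P_{\tau_0}u\) and \(u=C_{\tau_0}^{-1/2}w\). The switch condition becomes \(u^\top R_{\tau_0}u>u^\top C_{\tau_0}u\), i.e.\ \(w^\top W_{\tau_0}w>\|w\|^2\) for some \(w\neq0\). By the Rayleigh quotient characterization, this holds iff \(\lambda_{\max}(W_{\tau_0})>1\). The transport condition \(v^\top G_T^Mv<0\) is automatic on \(\mathcal S_{\tau_0}\).

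\textbf{Consequence.} If \(\lambda_{\max}(W_{\tau_0})>1\), the switching direction \(v_{\mathrm{sw}}=P_{\tau_0}C_{\tau_0}^{-1/2}w_{\max}\) lies in \(\mathcal S_{\tau_0}\subseteq\mathcal S\). By the first step, \(\tau(v_{\mathrm{sw}})\geq\tau_0\). Since \(\tau_{\max}\) is the supremum of \(\tau\) over switching directions in \(\mathcal S\), we get \(\tau_{\max}\geq\tau_0\).

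The only delicate point is the first step. Linear combinations of eigenvectors must retain the bound \(\tau\geq\tau_0\). This relies on the simultaneous diagonalization of \(C\) and \(B\), which makes the cross terms vanish, and on the identification \(v^\top Mv=u^\top Bu\). Everything else is a direct reuse of Theorem~\ref{thm:spectral_regime_switching}.
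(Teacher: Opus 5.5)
Your proposal is correct and follows the paper's proof essentially step for step. The $B$-orthonormal generalized eigenbasis of $(C,B)$ eliminates cross terms, giving $\tau(Pu)=\sum_i\eta_i a_i^2/\sum_i a_i^2\geq\tau_0$ on all of $\mathcal S_{\tau_0}$; whitening with $C_{\tau_0}^{1/2}$ plus Rayleigh--Ritz gives the criterion $\lambda_{\max}(W_{\tau_0})>1$; and because the switching direction lies in $\mathcal S_{\tau_0}\subseteq\mathcal S$, you get $\tau_{\max}\geq\tau_0$. The only cosmetic difference is that you obtain $C_{\tau_0}\succ0$ from the severity bound itself, while the paper obtains it from the inclusion $\mathcal S_{\tau_0}\subseteq\mathcal S$; both are valid.
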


The theorem follows by combining two results: the construction of \(\mathcal S_{\tau_0}\) guarantees the required contraction severity for every nonzero direction in this subspace, while the spectral criterion determines whether at least one of these directions becomes expanding under full attention. The complete proof is provided in Appendix~\ref{subsec:app_switch_strength}. By increasing the threshold \(\tau_0\), we can determine whether gating can still reverse progressively stronger transport contraction. This motivates the strongest severity threshold certified by our method:
\begin{equation}
\begin{aligned}
\tau_{\mathrm{cert}}
:=
\sup\bigl\{
\tau_0\in(0,1]:\,
&\mathcal S_{\tau_0}\neq\{0\},\lambda_{\max}(W_{\tau_0})>1
\bigr\},
\quad
\tau_{\max}\geq\tau_{\mathrm{cert}}.
\end{aligned}
\label{eq:certified_reversal_severity}
\end{equation}
We set \(\tau_{\mathrm{cert}}=0\) when no threshold is certified. Importantly, \(\tau_{\mathrm{cert}}\) is a lower bound on the strongest reversal severity, rather than necessarily its exact value. Our construction considers subspaces where every direction satisfies the required severity threshold. However, individual switching directions with equally strong contraction may exist outside these subspaces. Therefore, failure to obtain a certificate at a particular threshold does not rule out the existence of switching directions with that severity elsewhere in \(\mathcal S\).

\begin{insightbox}
\textbf{Insight.}
The strongest regime switch in relative terms need not be the most substantial reversal of transport contraction. In particular, a direction that transport barely suppresses may achieve a much larger relative switching strength than one where gating overcomes substantial contraction. This distinction reveals why the leading eigenvalue of the switching operator alone cannot characterize the severity of a reversal. A more demanding test is whether switching remains possible after directions with weak transport contraction have been excluded. Our severity certificate provides precisely this guarantee: when the spectral condition holds within the restricted subspace, gating can reverse contraction of at least the prescribed severity, even though not every strongly contracted direction must undergo a switch.
\end{insightbox}

\subsection{No-Switch Certification and Effective Switching Dimension}
\label{subsec:switching_dimension}

The preceding results establish when gating can reverse transport contraction and how severe that contraction can be. However, identifying a switching direction does not reveal how broadly this reversal extends across representation space. In particular, switching along several individual directions does not guarantee that their linear combinations also undergo a regime switch. We therefore ask two questions: when can switching be ruled out entirely, and when switching is possible, what is the largest subspace in which every nonzero direction undergoes the contraction-to-expansion reversal? To address these, we return to the transport switching operator \(W=C^{-1/2}RC^{-1/2}\) introduced in Section~\ref{subsec:spectral_switching}. Every nonzero \(v=Pu\in\mathcal S\) is contracted by value transport, while its growth under full attention is \(v^\top G_{\mathrm{full}}^M v=u^\top(R-C)u\). Using the transformed coordinates \(w=C^{1/2}u\), this growth becomes \(w^\top(W-I)w\). Thus, the spectrum of \(W-I\) determines not only whether full attention can reverse contraction, but also the number of independent directions that can form a subspace where switching holds uniformly. The following theorem provides an exact characterization.

\begin{theorem}[No-Switch Criterion and Effective Switching Dimension]
\label{thm:effective_switching_dimension}
No gating-induced regime switch exists in \(\mathcal S\) if and only if
\begin{equation}
\lambda_{\max}(W)\leq 1
\quad\Longleftrightarrow\quad
R\preceq C.
\label{eq:no_switch_condition}
\end{equation}
Moreover, the \emph{effective switching dimension}
\(d_{\mathrm{sw}}:=\#\{i:\lambda_i(W)>1\}\)
is the largest dimension of a linear subspace of \(\mathcal S\)
in which every nonzero direction undergoes a gating-induced regime switch.
In particular, the subspace
\begin{equation}
\mathcal V_{\mathrm{sw}}
:=
PC^{-1/2}
\operatorname{span}\{w_i:\lambda_i(W)>1\}
\label{eq:all_switching_subspace}
\end{equation}
achieves this dimension, where \(w_i\) are orthonormal eigenvectors of \(W\).
Every nonzero \(v\in\mathcal V_{\mathrm{sw}}\) undergoes a gating-induced regime switch.
\end{theorem}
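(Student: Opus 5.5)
The whole argument runs through the change of variables \(w=C^{1/2}u\), which is a bijection of \(\mathbb R^k\) because \(C\succ0\). For \(v=Pu\in\mathcal S\), Eq.~\eqref{eq:restricted_directional_growth} gives
\[
v^\top G_T^M v=-\|w\|^2,\qquad v^\top G_{\mathrm{full}}^M v=w^\top(W-I)w .
\]
Since \(P\) has orthonormal columns, \(v\neq0\) if and only if \(u\neq0\) if and only if \(w\neq0\). Every nonzero \(v\in\mathcal S\) is therefore automatically transport-contractive, and \(v\) switches exactly when \(w^\top(W-I)w>0\). This reduces every claim in the theorem to a statement about the symmetric matrix \(W-I\). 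Note that \(W\) is symmetric because \(R\) is symmetric, which holds since \(\Delta G_G^M\) is a difference of symmetric matrices. Hence \(W\) has an orthonormal eigenbasis \(w_1,\dots,w_k\) with eigenvalues \(\lambda_1\ge\dots\ge\lambda_k\).

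\textbf{No-switch criterion.} No switch exists in \(\mathcal S\) if and only if \(w^\top(W-I)w\le0\) for all \(w\), that is, \(W\preceq I\), that is, \(\lambda_{\max}(W)\le1\). This is just the contrapositive of Theorem~\ref{thm:spectral_regime_switching}. For the equivalence with \(R\preceq C\), I will use a congruence: \(W-I=C^{-1/2}(R-C)C^{-1/2}\). Congruence by the invertible matrix \(C^{-1/2}\) preserves negative semidefiniteness in both directions, so \(W\preceq I\) holds if and only if \(R\preceq C\).

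\textbf{Achievability of \(d_{\mathrm{sw}}\).} Let \(E=\operatorname{span}\{w_i:\lambda_i>1\}\), which has dimension \(d_{\mathrm{sw}}\). For nonzero \(w=\sum_{i:\lambda_i>1}c_iw_i\), orthonormality gives \(w^\top(W-I)w=\sum c_i^2(\lambda_i-1)>0\). Mapping back by \(u=C^{-1/2}w\) and \(v=Pu\), every nonzero element of \(\mathcal V_{\mathrm{sw}}=PC^{-1/2}E\) switches. The map \(PC^{-1/2}\) is injective, because \(P\) has full column rank and \(C^{-1/2}\) is invertible, so \(\dim\mathcal V_{\mathrm{sw}}=d_{\mathrm{sw}}\).

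\textbf{Maximality.} This is the only step with real content, and I expect it to be the main obstacle. Suppose \(\mathcal V\subseteq\mathcal S\) is a subspace on which every nonzero direction switches. Its preimage \(\mathcal W=C^{1/2}P^\top\mathcal V\) has the same dimension, because \(P^\top\) restricted to \(\mathcal S\) is an isometry onto \(\mathbb R^k\). Moreover, \(W-I\) is positive definite on \(\mathcal W\). Let \(N=\operatorname{span}\{w_i:\lambda_i\le1\}\), of dimension \(k-d_{\mathrm{sw}}\). On \(N\), \(W-I\) is negative semidefinite. If \(\dim\mathcal W>d_{\mathrm{sw}}\), a dimension count gives \(\dim\mathcal W+\dim N>k\), so the two subspaces meet nontrivially. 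Any nonzero \(w\in\mathcal W\cap N\) would satisfy both \(w^\top(W-I)w>0\) and \(w^\top(W-I)w\le0\), a contradiction. Hence \(\dim\mathcal V\le d_{\mathrm{sw}}\). This is the Courant--Fischer / Sylvester inertia argument, and it completes the characterization of \(d_{\mathrm{sw}}\) as the largest dimension of an all-switching subspace.
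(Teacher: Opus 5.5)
Your proof is correct and follows the paper's route: the same whitening $w=C^{1/2}u$ reduces every claim to the sign of $w^\top(W-I)w$, and achievability uses the same span of eigenvectors with $\lambda_i(W)>1$, mapped back through the injective map $PC^{-1/2}$. The only differences are minor: you prove maximality with an explicit dimension count against $\operatorname{span}\{w_i:\lambda_i(W)\le 1\}$, and you obtain $R\preceq C$ from the congruence $W-I=C^{-1/2}(R-C)C^{-1/2}$; the paper instead cites the inertia theorem for maximality and reads off $R\preceq C$ directly from $u^\top(R-C)u\le 0$.
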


The theorem follows from the quadratic form \(w^\top(W-I)w\). If all eigenvalues of \(W\) are at most one, this quantity is nonpositive in every direction, ruling out switching throughout \(\mathcal S\). Otherwise, the eigenvectors associated with eigenvalues greater than one span a subspace on which full directional growth is strictly positive. Since the number of positive eigenvalues of \(W-I\) equals the maximum dimension of a subspace on which its quadratic form is positive for every nonzero vector, no larger linear subspace can satisfy the switching condition uniformly. The proof is in Appendix~\ref{subsec:app_no_switch}.

The effective switching dimension therefore provides a complementary characterization to the existence and severity results developed earlier. When \(d_{\mathrm{sw}}=0\), no switching direction exists within \(\mathcal S\). When \(d_{\mathrm{sw}}=1\), at least one one-dimensional subspace exhibits switching along every nonzero direction, but no two-dimensional linear subspace has this guarantee. More generally, a larger \(d_{\mathrm{sw}}\) certifies that the contraction-to-expansion reversal persists throughout a higher-dimensional subspace, rather than only along an individually identified direction.

\begin{insightbox}
\textbf{Insight.}
Theorem~\ref{thm:effective_switching_dimension} reveals an important distinction between finding switching directions and identifying a subspace where switching is guaranteed. Even if many individual directions undergo a regime switch, their linear combinations may not. The effective switching dimension \(d_{\mathrm{sw}}\) captures this distinction by identifying the largest number of independent degrees of freedom that can be combined while preserving the reversal for every nonzero combination. For example, \(d_{\mathrm{sw}}=1\) does not mean that only one direction can switch; many switching directions may exist, but no two-dimensional subspace is guaranteed to consist entirely of switching directions. Thus, beyond certifying whether switching exists, the spectrum of \(W\) characterizes the dimensional extent over which this reversal holds uniformly.
\end{insightbox}

\section{Experiments}
\label{sec:experiments}

Experiments follow the progression from geometric constraints on attention gating to the emergence and spectral characterization of regime switching. We examine how attended-representation geometry restricts query-gating responses, and whether query and key gating can reverse contraction induced by value transport. We move beyond individual switching directions to investigate how spectral analysis certifies their existence, dimensionality, and reversal severity. Finally, we examine whether these mechanisms and their structure persist across Transformer architectures and depths. We evaluate these predictions through analytical computations, finite-forward perturbations, and pathway ablations, with experimental details and analyses provided in Appendix~\ref{sec:app_experiments}.
    
\paragraph{Experimental Setup.}
We evaluate pretrained GPT-2 Medium, Gemma-2-2B, Llama-3.2-1B, and
Qwen2.5-0.5B on the WikiText-2 validation set. Detailed mechanistic
and spectral analyses focus on a selected operating point in layer 23
of GPT-2 Medium; cross-architecture analyses examine multiple layers
of all four models. We construct the value-transport, query-gating,
and key-gating Jacobians $J_T$, $J_Q$, and $J_K$ and compare
transport-only, transport plus query gating, transport plus key
gating, and full-attention dynamics. For residual attention
sublayers, the common residual contribution is included consistently
in these comparisons. Unless otherwise specified, we use the
Euclidean metric $M=I$. Analytical Jacobians are checked against
automatic differentiation and finite differences. Experiments
were run on an NVIDIA A100 GPU with 40\,GB of memory.

\begin{figure*}[htp]
    \centering
    \includegraphics[width=\textwidth]{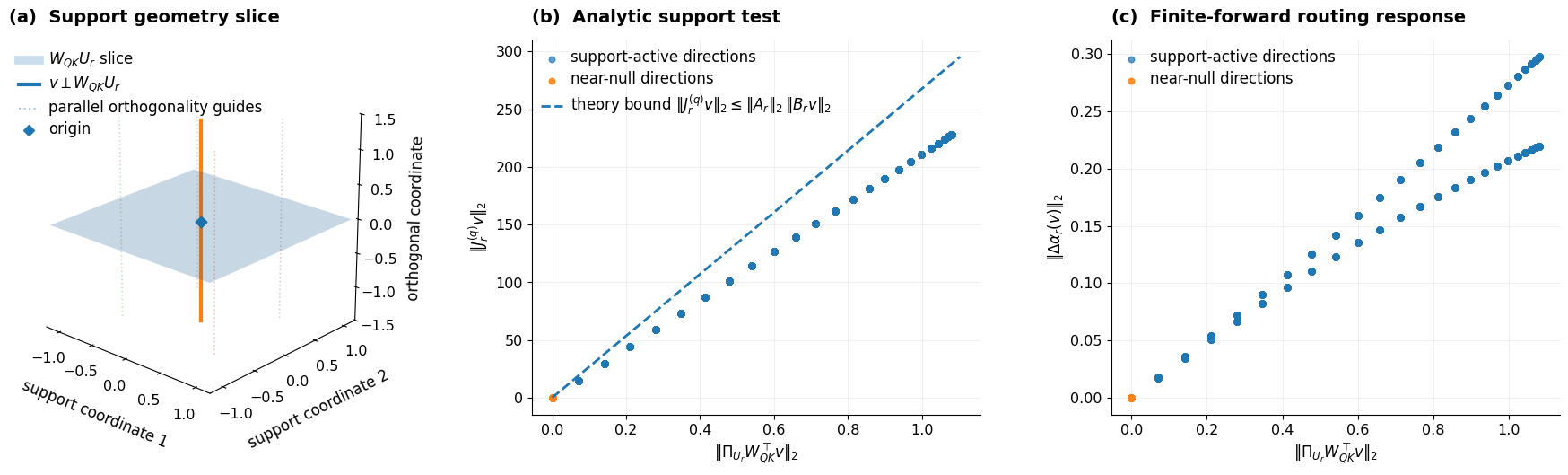}
    \caption{\textbf{Geometric support of query gating.}
    \textbf{(a)} Schematic three-dimensional slice illustrating directions
    orthogonal to the query-gating support.
    \textbf{(b)} Analytical response versus support overlap, including
    near-null directions and the theoretical upper bound.
    \textbf{(c)} Finite-forward changes in attention weights under
    fixed-key query perturbations.}
    \label{fig:exp_query_support}
\end{figure*}

\begin{figure*}[htp]
    \centering
    \begin{minipage}[c]{0.61\textwidth}
        \centering
        \includegraphics[width=\linewidth]
        {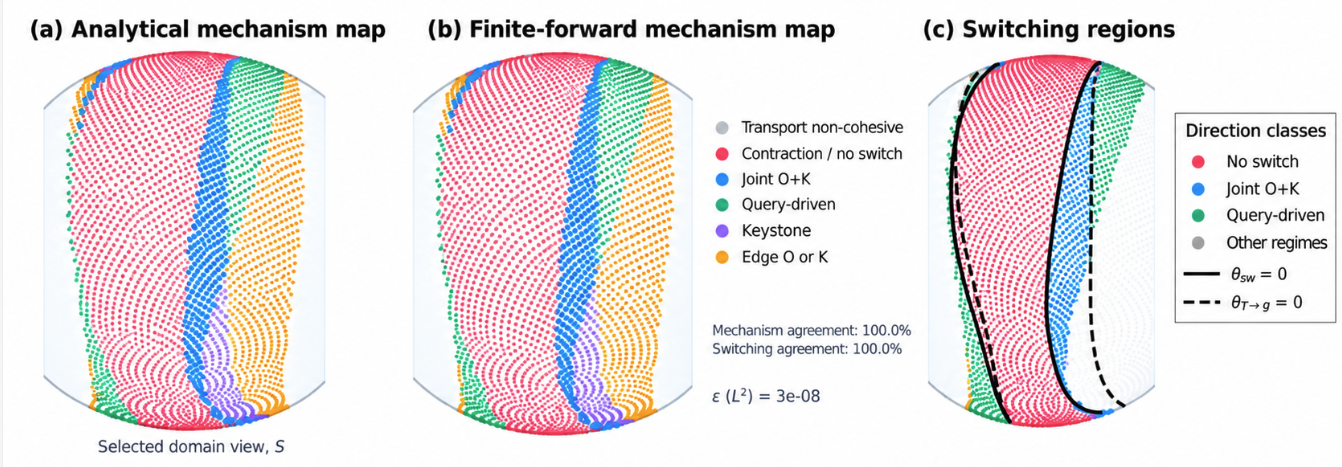}
    \end{minipage}\hfill
    \begin{minipage}[c]{0.37\textwidth}
        \centering
        \includegraphics[width=\linewidth]
        {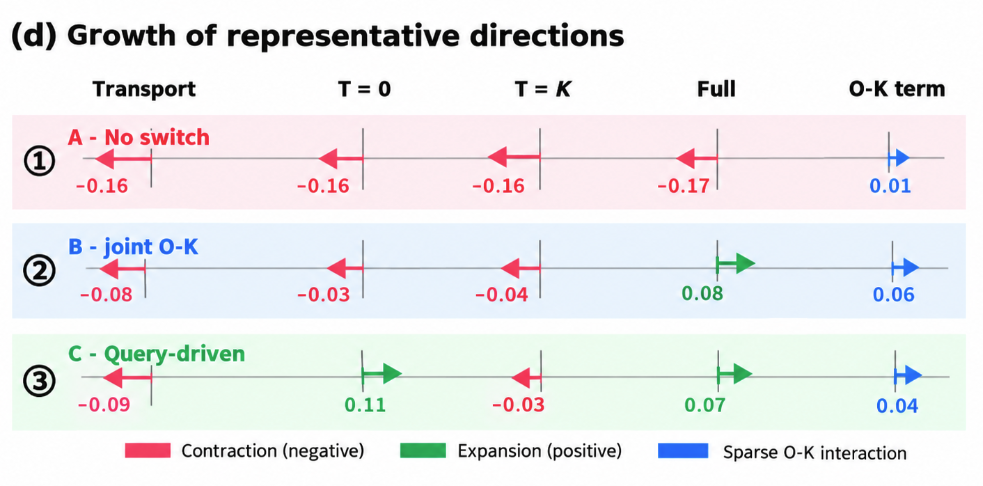}
    \end{minipage}
    \caption{\textbf{Theory-predicted regime geometry and gating-pathway ablations.}
    \textbf{(a,b)} Analytical and finite-forward regime maps on the same
    selected directional slice, with exact switching and mechanism-label
    agreement across the tested directions. \textbf{(c)} Growth-sign
    boundaries separating the observed regimes. \textbf{(d)} Directional
    growth under transport-only, single-gating-pathway, and full-attention
    counterfactuals for representative directions.}
    \label{fig:exp_regime_switching}
\end{figure*}

\subsection{Geometric Support of Query Gating}
\label{subsec:exp_query_geometry}

\paragraph{From attended geometry to query-gating support.}
Before asking whether attention gating can reverse transport contraction, we first
identify the directions along which query gating can act.
Proposition~\ref{prop:query_gating_support} predicts that the variation among
attended representations restricts the query-gating response: if
$W_{QK}^{\top}v\perp\mathcal U_r$, then $J_r^{(q)}v=0$.
Figure~\ref{fig:exp_query_support}(a) illustrates this geometric constraint
on a three-dimensional slice: perturbations orthogonal to
$W_{QK}\mathcal U_r$ cannot produce a first-order query-gating output
response. To test the prediction, we compare the support overlap
$s(v)=\|\Pi_{\mathcal U_r}W_{QK}^{\top}v\|_2$ with the analytical response
$\|J_r^{(q)}v\|_2$, and separately measure finite-forward changes in
attention weights after perturbing the query with keys held fixed.

\paragraph{From geometric selectivity to finite-forward behavior.}
Figure~\ref{fig:exp_query_support}(b) confirms that query gating is highly
selective: directions outside the attended-variation support produce
negligible analytical responses, whereas directions aligned with that
support can produce substantial changes. The observed Jacobian rank
attains the predicted geometric bound, reinforcing that the response
is constrained by variation among the attended representations rather
than by query perturbations alone. This directional selectivity also
appears in the finite-forward test in
Figure~\ref{fig:exp_query_support}(c), where near-null perturbations
produce negligible routing changes while support-active perturbations
can alter attention allocation. Importantly, the proposition constrains
the attention \emph{output}, not attention weights in general; the
finite-forward routing result is specific to the tested directions.
Thus, attended geometry rules out certain gating responses without
fully determining the magnitude of the remaining ones. Even when
gating is active, its effect on representation growth remains unknown.
We therefore next examine when query and key gating can overcome
value-transport contraction and induce a regime switch. Additional
geometric analyses appear in Appendix~\ref{subsec:app_exp_mechanisms}.

\subsection{From Gating Support to Regime Switching}
\label{subsec:exp_gating_regime}

\begin{figure*}[t]
    \centering
    \includegraphics[width=0.93\textwidth]{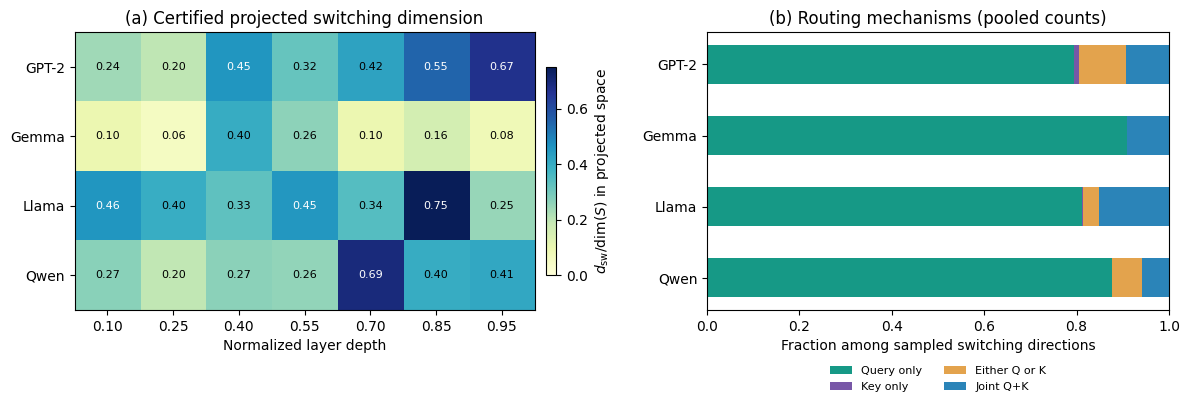}
    \caption{\textbf{Switching across architectures and depths.}
    \textbf{(a)} Median normalized effective switching dimension
    across seven normalized depths and four Transformer architectures.
    Normalization uses the projected transport-contractive subspace.
    \textbf{(b)} Mechanism composition among sampled switching
    directions. Measurements use two WikiText-2 validation prompts
    per model and 64-dimensional random orthonormal probe subspaces.}
    \label{fig:exp_cross_architecture}
\end{figure*}

\begin{figure*}[htp]
    \centering
    \includegraphics[width=0.97\textwidth]{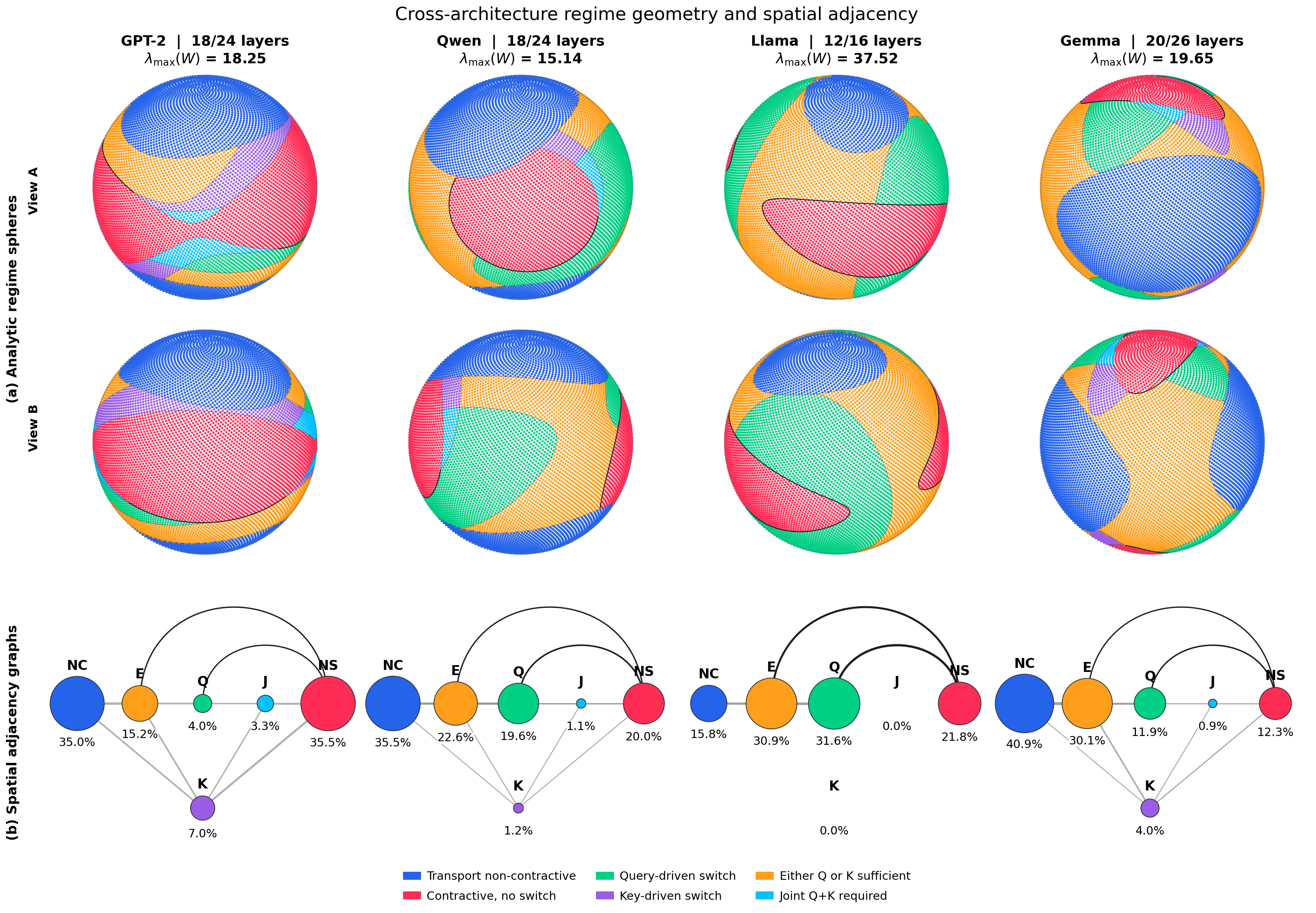}
    \caption{\textbf{Cross-architecture regime geometry and spatial adjacency.}
    \textbf{(a) Analytic regime spheres:} Two orthographic views
    (View A and B) of selected spherical slices across four Transformer
    architectures, colored by their predicted growth regimes.
    \textbf{(b) Spatial adjacency graphs:} Node sizes indicate regime
proportions, and edges represent shared regime boundaries.
NC: transport non-contractive; NS: contraction without switching;
Q/K: query-/key-driven switching; E: either pathway sufficient;
J: joint query--key required. Percentages are computed over all sampled sphere directions,
rather than only switching directions as in
Figure~\ref{fig:exp_cross_architecture}(b).
Results characterize the selected slices, not the full representation space.}
    \label{fig:exp_cross_arch_spatial_adjacency}
\end{figure*}

\paragraph{From geometric support to theory-predicted regimes.}
Attended-representation geometry constrains the directions along which query gating can act, but does not by itself determine whether gating reverses transport contraction. Proposition~\ref{prop:gating_growth_correction} establishes the mechanism underlying this reversal: a direction undergoes regime switching when the gating-induced growth correction exceeds the contraction imposed by fixed-weight value transport. To examine the resulting directional structure, we construct analytical regime maps on selected spherical slices of four Transformer architectures, as illustrated in Figure~\ref{fig:exp_cross_arch_spatial_adjacency}(a). The spheres distinguish transport non-contraction, contraction without switching, and query-driven, key-driven, either-pathway, and joint query--key switching. Although the relative sizes and spatial arrangements of these regions vary substantially across architectures, the same growth-sign criteria consistently explain their classification. At the selected operating point used for finite-forward validation, the analytical predictions in Figure~\ref{fig:exp_regime_switching}(a) agree with Figure~\ref{fig:exp_regime_switching}(b) on all tested directions, achieving $100\%$ agreement in both switching status and mechanism attribution.

\paragraph{Pathway interactions explain regime boundaries and spatial adjacency.}
The pathway ablations in Figure~\ref{fig:exp_regime_switching}(d)
distinguish directions that remain contracting from those rescued by
query gating, key gating, or both. The adjacency graphs in
Figure~\ref{fig:exp_cross_arch_spatial_adjacency}(b) reveal direct
boundaries between the no-switch (NS) regime and both the query-driven
(Q) and either-pathway-sufficient (E) regimes across all four architectures.
An NS--E boundary occurs where full directional growth vanishes while
transport contracts and either gating pathway alone produces expansion.
This requires a sufficiently negative query--key interaction to offset
positive single-pathway growth, showing that the combined gating effect
is not simply additive. These boundaries connect pathway interactions
to spatial adjacency, while differences in regime proportions reveal
architecture-dependent geometry. The results concern selected
three-dimensional slices, not universal topology or full-space prevalence.

\subsection{Spectral Certification and Reversal Severity}
\label{subsec:exp_spectral_severity}

\paragraph{From observed switching to spectral certification.}
Having established that attention gating can reverse transport
contraction along selected directions, a fundamental question remains:
does switching occur only along isolated directions, or does it exhibit
a richer geometric structure? To address this question, we examine
the spectral predictions of
Theorems~\ref{thm:spectral_regime_switching},
\ref{thm:effective_switching_dimension}, and
\ref{thm:absolute_reversal_severity}.
Figure~\ref{fig:exp_spectral_certification_severity}(a) shows that
the switching spectrum contains multiple eigenvalues above the
critical threshold, certifying a $47$-dimensional subspace in which
every nonzero direction contracts under transport but expands under
full attention. Thus, switching is not restricted to isolated
directions; it occupies a nontrivial linear subspace of the local
representation geometry.

\begin{wrapfigure}{r}{0.49\columnwidth}
    \vspace{-10pt}
    \centering
    \includegraphics[width=\linewidth]{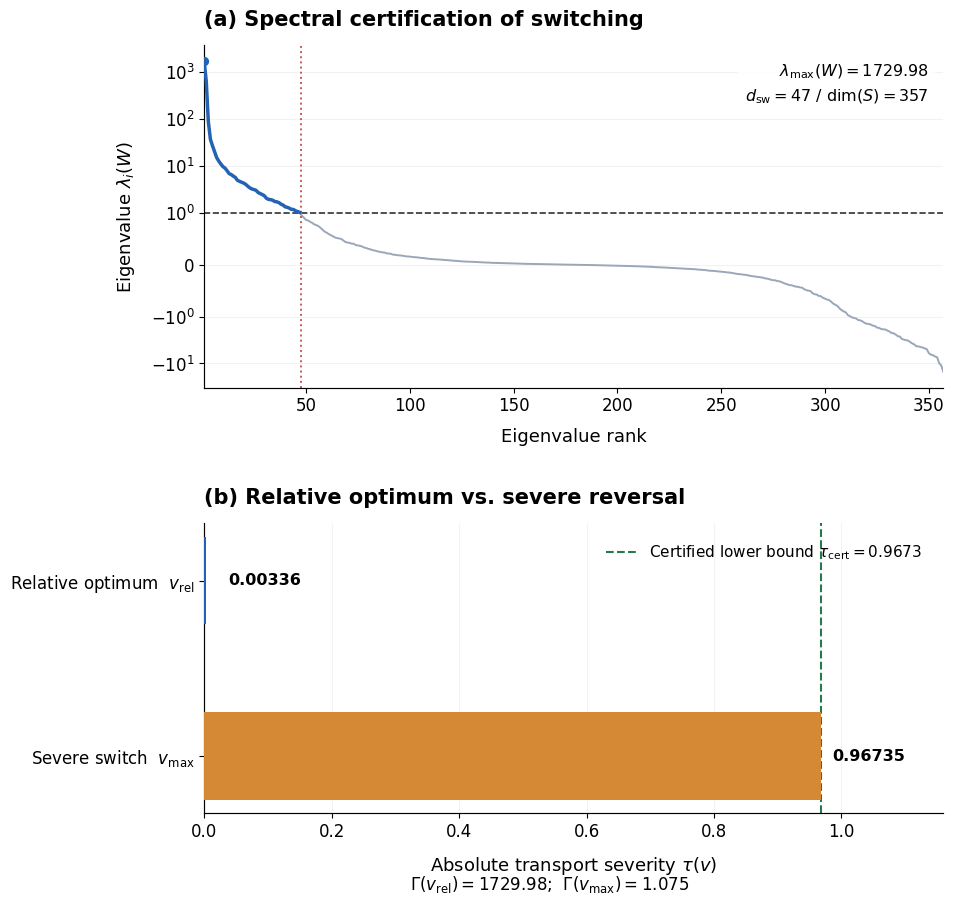}
    \vspace{-9pt}
    \caption{ \textbf{Spectral certification.}
    (a) Switching spectrum; (b) relative/absolute reversal severity.}
    \label{fig:exp_spectral_certification_severity}
    \vspace{-12pt}
\end{wrapfigure}

\paragraph{Relative switching strength versus reversal severity.}
While the spectrum certifies switching, its largest eigenvalue
may not identify the direction undergoing the strongest
absolute contraction reversal.
Figure~\ref{fig:exp_spectral_certification_severity}(b) reveals
that the direction maximizing relative switching strength experiences
weak transport contraction, whereas another direction
undergoes severe contraction before being rescued by gating.
The severity certificate closely matches this extreme while
providing a rigorous lower bound rather than an exact optimum.
These directions also involve different
mechanisms: the relative-optimal switch is query-driven, whereas
the severe switch requires both query and key gating
(Appendix~\ref{subsec:app_exp_strongest}).
Together, these findings distinguish the existence of switching
from its absolute severity and connect spectral structure to
the gating mechanisms. We next investigate whether
switching geometry persists across Transformer architectures
and depths.

\paragraph{Search quality and amortized computational cost.}
Spectral guidance provides a principled alternative to unguided
directional search by exploiting the geometry of transport
contraction and gating-induced growth.
Figure~\ref{fig:exp_search_cost}(a) shows that random probing
struggles to recover severe switching directions, even when
restricted to the transport-contractive subspace $S$.
While such restriction facilitates the discovery of ordinary
switches, it does not identify directions where gating overcomes
particularly strong contraction. In contrast, spectral analysis
directly characterizes switching directions and enables the
certification of severe reversals.
Figure~\ref{fig:exp_search_cost}(b) further illustrates the
computational benefit of this characterization: after a one-time
spectral preprocessing cost, directional properties can be
evaluated through inexpensive quadratic forms, making the
approach increasingly efficient as the number of queries grows.
Together, these results demonstrate that spectral theory provides
not only existence guarantees but also a structured and
computationally useful characterization of regime switching.
We next examine whether switching geometry persists across
Transformer architectures and depths.

\begin{figure*}[t]
    \centering
    \includegraphics[width=0.48\textwidth]{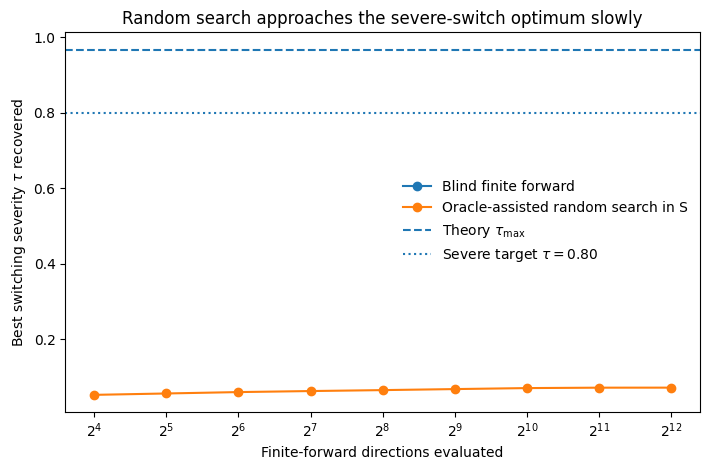}
    \hfill
    \includegraphics[width=0.48\textwidth]{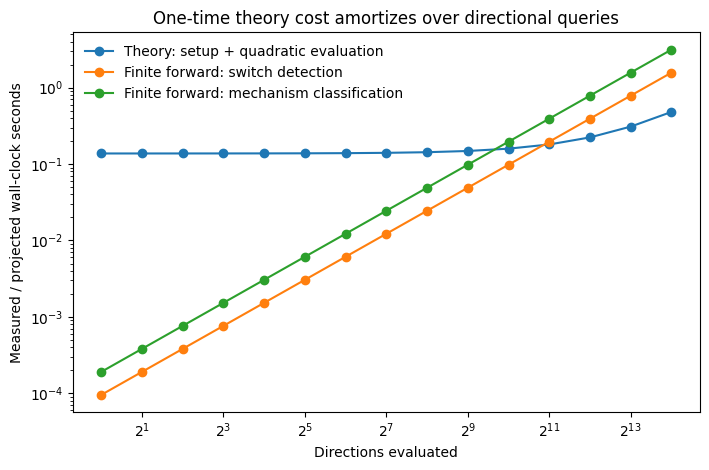}
    \caption{\textbf{Search quality and amortized cost of spectral guidance.}
    (a) Blind random search finds no switch within the tested budget,
    while oracle-assisted sampling in the transport-contractive subspace
    finds ordinary switches but misses severe reversals.
    (b) Spectral preprocessing incurs a one-time setup cost but becomes
    advantageous for sufficiently many directional evaluations.
    Total-cost curves are projected from measured setup and throughput timings.}
    \label{fig:exp_search_cost}
\end{figure*}

\subsection{Cross-Architecture Switching Geometry}
\label{subsec:exp_cross_architecture}

\paragraph{From local certification to cross-architecture geometry.}
Having established that spectral analysis can certify the existence,
dimensionality, and severity of switching at a selected operating point,
we next investigate whether this geometry persists across Transformer
architectures and depths. We examine GPT-2 Medium, Gemma-2-2B,
Llama-3.2-1B, and Qwen2.5-0.5B using projected perturbation subspaces.
Figure~\ref{fig:exp_cross_architecture}(a) reveals nontrivial switching
subspaces across all evaluated architectures and depths, although their
relative dimensions vary substantially without a universal depth trend.
Moreover, Figure~\ref{fig:exp_cross_architecture}(b) shows that the
mechanism-level distinctions also persist: query-driven switching
dominates among sampled switching directions, while joint query--key
mechanisms appear across all four architectures. These observations
suggest that switching is not specific to a single model configuration,
although its spectral geometry and pathway composition remain
architecture-dependent.

\paragraph{Spatial organization of switching regimes.}
While spectral dimensions and mechanism frequencies quantify where
switching occurs, they do not reveal how different regimes are
organized across perturbation directions.
Figure~\ref{fig:exp_cross_arch_spatial_adjacency} addresses this question
by visualizing the regime partitions and their spatial adjacency on
selected spherical slices. The same mechanism classes emerge across
architectures, yet their spatial distributions differ markedly:
query-driven regions occupy relatively small portions of the selected
GPT-2 and Qwen spheres, but substantially larger regions in Llama
and Gemma. The adjacency graphs further reveal how distinct regimes
meet within each slice, illustrating that switching is organized into
structured directional regions rather than appearing only as isolated
examples. Together, these findings suggest a recurring mechanism of
gating-induced switching with architecture-dependent geometric
organization. However, the visualizations do not establish a universal
topological structure, and the limited prompts and projected subspaces
preclude conclusions about full-space prevalence. Additional sensitivity
analyses are provided in Appendix~\ref{subsec:app_exp_cross_arch}.

\section{Conclusion}

We developed a mixture-gated theory of anisotropic representation dynamics in Transformers, revealing how interactions between value transport and attention gating can reverse representation contraction. Our analysis establishes geometric constraints on query gating and provides spectral criteria to characterize the existence, severity, and dimensionality of regime switching. Experiments across multiple pretrained Transformers support these theoretical predictions and demonstrate the distinct and joint contributions of query and key gating. Future work will extend this local analysis to representation dynamics across multiple layers and investigate its implications for downstream model behavior.

\bibliographystyle{plain}
\bibliography{references}

@inproceedings{dong2021attention,
  title     = {Attention Is Not All You Need: Pure Attention Loses Rank Doubly Exponentially with Depth},
  author    = {Dong, Yihe and Cordonnier, Jean-Baptiste and Loukas, Andreas},
  booktitle = {Proceedings of the 38th International Conference on Machine Learning},
  series    = {Proceedings of Machine Learning Research},
  volume    = {139},
  pages     = {2793--2803},
  year      = {2021},
  publisher = {PMLR},
  url       = {https://proceedings.mlr.press/v139/dong21a.html}
}

@inproceedings{kim2021lipschitz,
  title={The lipschitz constant of self-attention},
  author={Kim, Hyunjik and Papamakarios, George and Mnih, Andriy},
  booktitle={International Conference on Machine Learning},
  pages={5562--5571},
  year={2021},
  organization={PMLR}
}

@inproceedings{ormaniec2025does,
  title={What does it mean to be a transformer? insights from a theoretical hessian analysis},
  author={Ormaniec, Weronika and Dangel, Felix and Singh, Sidak Pal},
  booktitle={International Conference on Learning Representations},
  volume={2025},
  pages={17046--17081},
  year={2025}
}

@article{nguyen2026statistical,
  title={A Statistical Theory of Gated Attention through the Lens of Hierarchical Mixture of Experts},
  author={Nguyen, Viet and Pham, Tuan Minh and Cao, Thinh and Dinh, Tan and Nguyen, Huy and Ho, Nhat and Rinaldo, Alessandro},
  journal={arXiv preprint arXiv:2602.01468},
  year={2026}
}

@InProceedings{pmlrv235bao24b,
  title = 	 {Self-attention Networks Localize When {QK}-eigenspectrum Concentrates},
  author =       {Bao, Han and Hataya, Ryuichiro and Karakida, Ryo},
  booktitle = 	 {Proceedings of the 41st International Conference on Machine Learning},
  pages = 	 {2903--2922},
  year = 	 {2024},
  editor = 	 {Salakhutdinov, Ruslan and Kolter, Zico and Heller, Katherine and Weller, Adrian and Oliver, Nuria and Scarlett, Jonathan and Berkenkamp, Felix},
  volume = 	 {235},
  series = 	 {Proceedings of Machine Learning Research},
  month = 	 {21--27 Jul},
  publisher =    {PMLR},
  url = 	 {https://proceedings.mlr.press/v235/bao24b.html}
}

@inproceedings{noci2022signal,
  title     = {Signal Propagation in Transformers: Theoretical Perspectives and the Role of Rank Collapse},
  author    = {Noci, Lorenzo and Anagnostidis, Sotiris and Biggio, Luca and Orvieto, Antonio and Singh, Sidak Pal and Lucchi, Aurelien},
  booktitle = {Advances in Neural Information Processing Systems},
  volume    = {35},
  year      = {2022},
  doi       = {10.52202/068431-1972},
  url       = {https://proceedings.neurips.cc/paper_files/paper/2022/hash/ae0cba715b60c4052359b3d52a2cff7f-Abstract-Conference.html}
}

@inproceedings{kedia2024transformers,
  title     = {Transformers Get Stable: An End-to-End Signal Propagation Theory for Language Models},
  author    = {Kedia, Akhil and Zaidi, Mohd Abbas and Khyalia, Sushil and Jung, Jungho and Goka, Harshith and Lee, Haejun},
  booktitle = {Proceedings of the 41st International Conference on Machine Learning},
  series    = {Proceedings of Machine Learning Research},
  volume    = {235},
  pages     = {23449--23531},
  year      = {2024},
  publisher = {PMLR},
  url       = {https://proceedings.mlr.press/v235/kedia24a.html}
}

@inproceedings{pham2025dynamical,
  title     = {Dynamical Properties of Tokens in Self-Attention and Effects of Positional Encoding},
  author    = {Pham, Duy-Tung and Nguyen, An The and Tran, Viet-Hoang and Chung, Nhan-Phu and Tong, Xin and Nguyen, Tan M. and Vo, Thieu N.},
  booktitle = {Advances in Neural Information Processing Systems},
  volume    = {38},
  year      = {2025},
  url       = {https://proceedings.neurips.cc/paper_files/paper/2025/hash/ea7e6b3b58422f1388468a974af62391-Abstract-Conference.html}
}

@inproceedings{tomihari2025recurrent,
  title     = {Recurrent Self-Attention Dynamics: An Energy-Agnostic Perspective from Jacobians},
  author    = {Tomihari, Akiyoshi and Karakida, Ryo},
  booktitle = {Advances in Neural Information Processing Systems},
  volume    = {38},
  pages     = {155236--155269},
  year      = {2025},
  doi       = {10.52202/085713-4675},
  url       = {https://arxiv.org/abs/2505.19458}
}

@inproceedings{aubry2025transformer,
  title     = {Transformer Block Coupling and Its Correlation with Generalization in LLMs},
  author    = {Aubry, Murdock and Meng, Haoming and Sugolov, Anton and Papyan, Vardan},
  booktitle = {The Thirteenth International Conference on Learning Representations},
  year      = {2025},
  url       = {https://openreview.net/forum?id=kvLenbZZgg}
}

@inproceedings{saratchandran2025spectral,
  title     = {Spectral Conditioning of Attention Improves Transformer Performance},
  author    = {Saratchandran, Hemanth and Lucey, Simon},
  booktitle = {Advances in Neural Information Processing Systems},
  volume    = {38},
  year      = {2025},
  doi       = {10.52202/085713-1982},
  url       = {https://proceedings.neurips.cc/paper_files/paper/2025/hash/5551e21f0ffb5955befb1209534354c6-Abstract-Conference.html}
}

@inproceedings{jin2025moh,
  title     = {{MoH}: Multi-Head Attention as Mixture-of-Head Attention},
  author    = {Jin, Peng and Zhu, Bo and Yuan, Li and Yan, Shuicheng},
  booktitle = {Proceedings of the 42nd International Conference on Machine Learning},
  series    = {Proceedings of Machine Learning Research},
  volume    = {267},
  pages     = {28233--28255},
  year      = {2025},
  publisher = {PMLR},
  url       = {https://proceedings.mlr.press/v267/jin25l.html}
}

@misc{yan2025sigmoid,
  title         = {Sigmoid Self-Attention Has Lower Sample Complexity than Softmax Self-Attention: A Mixture-of-Experts Perspective},
  author        = {Yan, Fanqi and Nguyen, Huy and Akbarian, Pedram and Ho, Nhat and Rinaldo, Alessandro},
  year          = {2025},
  eprint        = {2502.00281},
  archivePrefix = {arXiv},
  primaryClass  = {cs.LG},
  doi           = {10.48550/arXiv.2502.00281},
  url           = {https://arxiv.org/abs/2502.00281},
  note          = {Preprint}
}

@inproceedings{ilyas2018blackbox,
  title     = {Black-box Adversarial Attacks with Limited Queries and Information},
  author    = {Ilyas, Andrew and Engstrom, Logan and Athalye, Anish and Lin, Jessy},
  booktitle = {Proceedings of the 35th International Conference on Machine Learning},
  series    = {Proceedings of Machine Learning Research},
  volume    = {80},
  pages     = {2137--2146},
  year      = {2018},
  publisher = {PMLR}
}

@inproceedings{meng2022rome,
  title     = {Locating and Editing Factual Associations in GPT},
  author    = {Meng, Kevin and Bau, David and Andonian, Alex and Belinkov, Yonatan},
  booktitle = {Advances in Neural Information Processing Systems},
  volume    = {35},
  year      = {2022}
}

@misc{turner2024activation,
  title         = {Activation Addition: Steering Language Models Without Optimization},
  author        = {Turner, Alexander Matt and Thiergart, Lisa and Leech, Gavin and Udell, David and Mini, Ulisse and MacDiarmid, Monte},
  year          = {2024},
  eprint        = {2308.10248},
  archivePrefix = {arXiv},
  primaryClass  = {cs.CL}
}

@article{liu2024lost,
  title   = {Lost in the Middle: How Language Models Use Long Contexts},
  author  = {Liu, Nelson F. and Lin, Kevin and Hewitt, John and Paranjape, Ashwin and Bevilacqua, Michele and Petroni, Fabio and Liang, Percy},
  journal = {Transactions of the Association for Computational Linguistics},
  volume  = {12},
  pages   = {157--173},
  year    = {2024},
  doi     = {10.1162/tacl_a_00638}
}

\newpage

\noindent\rule{\textwidth}{1.5pt}  

\begin{center}
    {\LARGE\bfseries Supplementary for A Mixture-Gated Theory of Anisotropic Representation Dynamics in Transformers}
\end{center}

\noindent\rule{\textwidth}{1.5pt}  

\addcontentsline{toc}{section}{Appendix}

\vspace{0.5em}
\addcontentsline{toc}{section}{Appendix}

\textbf{This supplementary material serves as the appendix to the main paper and provides
additional mathematical details, complete proofs of the theoretical results,
extensions of the proposed analysis, and full experimental settings and additional
results. It is organized as follows:}

\begin{itemize}[leftmargin=*, label=]

    \item \phantomsection
    \textbf{\ref{sec:notation}. \nameref{sec:notation}}
    \dotfill \pageref{sec:notation}

    \item \phantomsection
    \textbf{\ref{sec:app_theory}. \nameref{sec:app_theory}}
    \dotfill \pageref{sec:app_theory}

    \begin{itemize}
        \item[] \ref{subsec:app_notation}. \nameref{subsec:app_notation}
        \dotfill \pageref{subsec:app_notation}

        \item[] \ref{subsec:app_local_dynamics}. \nameref{subsec:app_local_dynamics}
        \dotfill \pageref{subsec:app_local_dynamics}

        \item[] \ref{subsec:app_transport_proof}. \nameref{subsec:app_transport_proof}
        \dotfill \pageref{subsec:app_transport_proof}

        \item[] \ref{app:attention_jacobian}. \nameref{app:attention_jacobian}
        \dotfill \pageref{app:attention_jacobian}

        \item[] \ref{subsec:app_gating_geometry}. \nameref{subsec:app_gating_geometry}
        \dotfill \pageref{subsec:app_gating_geometry}

        \item[] \ref{subsec:app_directional_alignment}. \nameref{subsec:app_directional_alignment}
        \dotfill \pageref{subsec:app_directional_alignment}

        \item[] \ref{subsec:app_modern_blocks}. \nameref{subsec:app_modern_blocks}
        \dotfill \pageref{subsec:app_modern_blocks}
    \end{itemize}

    \item \phantomsection
    \textbf{\ref{sec:app_regime_theory}. \nameref{sec:app_regime_theory}}
    \dotfill \pageref{sec:app_regime_theory}

    \begin{itemize}
        \item[] \ref{subsec:app_transport_full_growth}. \nameref{subsec:app_transport_full_growth}
        \dotfill \pageref{subsec:app_transport_full_growth}

        \item[] \ref{subsec:app_spectral_switch}. \nameref{subsec:app_spectral_switch}
        \dotfill \pageref{subsec:app_spectral_switch}

        \item[] \ref{subsec:app_switch_strength}. \nameref{subsec:app_switch_strength}
        \dotfill \pageref{subsec:app_switch_strength}

        \item[] \ref{subsec:app_no_switch}. \nameref{subsec:app_no_switch}
        \dotfill \pageref{subsec:app_no_switch}

        \item[] \ref{subsec:app_switch_robustness}. \nameref{subsec:app_switch_robustness}
        \dotfill \pageref{subsec:app_switch_robustness}
    \end{itemize}




    \item \phantomsection
    \textbf{\ref{sec:app_experiments}. \nameref{sec:app_experiments}}
    \dotfill \pageref{sec:app_experiments}

    \begin{itemize}
        \item[] \ref{subsec:app_exp_setup}. \nameref{subsec:app_exp_setup}
        \dotfill \pageref{subsec:app_exp_setup}

        \item[] \ref{subsec:app_exp_mechanisms}. \nameref{subsec:app_exp_mechanisms}
        \dotfill \pageref{subsec:app_exp_mechanisms}

        \item[] \ref{subsec:app_exp_switching}. \nameref{subsec:app_exp_switching}
        \dotfill \pageref{subsec:app_exp_switching}

        \item[] \ref{subsec:app_exp_strongest}. \nameref{subsec:app_exp_strongest}
        \dotfill \pageref{subsec:app_exp_strongest}

        \item[] \ref{subsec:app_exp_search_cost}. \nameref{subsec:app_exp_search_cost}
        \dotfill \pageref{subsec:app_exp_search_cost}



    \end{itemize}

\end{itemize}

\vspace{1em}
\hrule
\vspace{1em}

\newpage

\section{TABLE OF NOTATION}
\label{sec:notation}

Table~\ref{tab:notation} summarizes the notation central to the representation-dynamics, attention-gating, and spectral regime-switching theory.

{
\small
\setlength{\tabcolsep}{5pt}
\renewcommand{\arraystretch}{1.09}
\setlength{\LTleft}{0pt}
\setlength{\LTright}{0pt}

\newcommand{\notationgroup}[1]{%
  \addlinespace[1pt]
  \multicolumn{2}{@{}l}{\textbf{#1}}\\[-1pt]
}

\begin{longtable}{
    >{\raggedright\arraybackslash}p{0.275\textwidth}
    >{\raggedright\arraybackslash}p{0.66\textwidth}
}

\caption{Summary of the main theoretical notation.}
\label{tab:notation}\\
\toprule
\textbf{Notation} & \textbf{Meaning} \\
\midrule
\endfirsthead

\multicolumn{2}{l}{\small\textit{Table~\ref{tab:notation} continued.}}\\
\toprule
\textbf{Notation} & \textbf{Meaning} \\
\midrule
\endhead

\midrule
\multicolumn{2}{r}{\small\textit{Continued on next page}}\\
\endfoot

\bottomrule
\endlastfoot


\notationgroup{Local representation dynamics}

$\ell,\ z_\ell,\ \delta z_\ell,\ T_\ell$
& Layer index, incoming representation, local perturbation, and Transformer block map. \\

$J_\ell,\ v,\ M$
& Block Jacobian, nonzero perturbation direction, and symmetric positive-definite representation metric. \\

$G_\ell^M$
& Local growth operator $J_\ell^\top M J_\ell-M$, governing directional expansion and contraction. \\


\midrule
\notationgroup{Self-attention and query-gating geometry}

$x_i,\ r,\ i,\ m$
& Token representation; query-token, attended-token, and perturbed-token indices. \\

$W_{QK},\ W_{VO}$
& Combined query--key score operator and value--output transformation. \\

$s_{ri},\ \alpha_{ri},\ F_r$
& Attention score, softmax weight, and attention output at query token $r$. \\

$\mu_r,\ \Sigma_r$
& Attention-weighted mean and covariance of attended token representations. \\

$\mathcal U_r,\ J_r^{(q)}$
& Attended-variation subspace $\operatorname{Range}(\Sigma_r)$ and query-gating Jacobian $W_{VO}\Sigma_r W_{QK}^\top$. \\


\midrule
\notationgroup{Transport--gating decomposition and growth}

$J_T,\ J_Q,\ J_K,\ J_G$
& Transport, query-gating, key-gating, and combined gating Jacobians ($J_G=J_Q+J_K$). \\

$J_{\mathrm{full}}$
& Full-attention Jacobian $J_T+J_Q+J_K$. \\

$G_T^M,\ G_{\mathrm{full}}^M$
& Transport-only and full-attention growth operators $J_T^\top M J_T-M$ and $J_{\mathrm{full}}^\top M J_{\mathrm{full}}-M$. \\

$\Delta G_G^M$
& Gating-induced growth correction $G_{\mathrm{full}}^M-G_T^M$, including transport--gating interactions. \\

$\mathcal I_{\mathrm{gate}}(v),\ D_v,\ S_v,\ \rho_v$
& Signed directional gating contribution and its diversity--sensitivity--alignment factors, $\mathcal I_{\mathrm{gate}}(v)=\rho_v D_vS_v$. \\

$g_T(v),\ g_{\mathrm{full}}(v)$
& Scalar directional growths $v^\top G_T^M v$ and $v^\top G_{\mathrm{full}}^M v$, with the metric understood. \\


\midrule
\notationgroup{Spectral regime-switching theory}

$S,\ P,\ u$
& Transport-contractive subspace, an orthonormal basis matrix, and coordinates $v=Pu$. \\

$C,\ R$
& Projected contraction $-P^\top G_T^M P$ and projected gating correction $P^\top\Delta G_G^M P$. \\

$W=C^{-1/2}RC^{-1/2}$
& Symmetric, transport-whitened switching operator. \\

$\lambda_{\max}(W),\ w_{\max}$
& Largest switching eigenvalue and an associated unit eigenvector. \\

$\Gamma(v),\ v_{\mathrm{rel}}$
& Relative switching strength $v^\top\Delta G_G^M v/(-v^\top G_T^M v)$ and its maximizing direction. \\

$d_{\mathrm{sw}},\ V_{\mathrm{sw}}$
& Effective switching dimension and a largest-dimensional subspace in which every nonzero direction switches. \\


\midrule
\notationgroup{Absolute reversal severity and certification}

$\tau(v),\ \tau_{\max}$
& Fraction of incoming squared $M$-magnitude removed by transport and supremal severity among switching directions. \\

$B,\ (\eta_i,q_i)$
& Projected metric $B=P^\top MP$ and generalized severity eigenpairs $Cq_i=\eta_i Bq_i$. \\

$\tau_0,\ S_{\tau_0},\ P_{\tau_0}$
& Severity threshold, uniformly contractive subspace, and its orthonormal basis matrix. \\

$C_{\tau_0},\ R_{\tau_0},\ W_{\tau_0}$
& Contraction, gating-correction, and whitened switching operators restricted to $S_{\tau_0}$. \\

$\tau_{\mathrm{cert}}$
& Strongest certified severity threshold, satisfying $\tau_{\mathrm{cert}}\leq\tau_{\max}$. \\

\end{longtable}
}
\newpage

\section{MATHEMATICAL DETAILS OF REPRESENTATION DYNAMICS}
\label{sec:app_theory}

\subsection{Notation and Technical Setup}
\label{subsec:app_notation}

We first fix the notation used throughout the supplementary material. A Transformer is viewed as a composition of blocks \(T_\ell\), where \(\ell\) always denotes the \emph{layer index}. The representation entering layer \(\ell\) is \(z_\ell\in\mathbb{R}^d\), and the block output is \(z_{\ell+1}=T_\ell(z_\ell)\). We study two nearby representations \(z_\ell\) and \(z_\ell+\delta z_\ell\), where \(\delta z_\ell\in\mathbb{R}^d\) denotes their local difference. The Jacobian of the block at \(z_\ell\) is \(J_\ell:=\partial T_\ell/\partial z_\ell\).

Throughout the local analysis, we assume that each block \(T_\ell\) is twice continuously differentiable in a neighborhood of the representation under consideration. This ensures that, for sufficiently small \(\delta z_\ell\), the block admits the first-order expansion
\begin{equation}
T_\ell(z_\ell+\delta z_\ell)-T_\ell(z_\ell)
=
J_\ell\delta z_\ell
+
\mathcal{O}(\|\delta z_\ell\|_2^2).
\label{eq:app_local_taylor_setup}
\end{equation}
The term \(\mathcal{O}(\|\delta z_\ell\|_2^2)\) collects the higher-order contribution that vanishes quadratically as the perturbation magnitude approaches zero. Unless stated otherwise, all directional statements in the main theory refer to this local linearization.

Representation magnitudes are measured using a fixed symmetric positive-definite matrix \(M=M^\top\succ0\). For \(u,v\in\mathbb{R}^d\), we write \(\langle u,v\rangle_M:=u^\top Mv\) and \(\|v\|_M^2:=v^\top Mv\). The Euclidean norm is denoted by \(\|\cdot\|_2\). For a matrix \(A\), \(\|A\|_2\) denotes its spectral norm, \(\lambda_{\max}(A)\) its largest eigenvalue when \(A\) is symmetric, \(\operatorname{Range}(A)\) its column space, and \(\operatorname{rank}(A)\) its rank.

For the self-attention analysis, we fix one Transformer layer and one attention head and suppress their layer and head superscripts whenever no ambiguity arises. Let \(n\) be the number of tokens and let \(x_i\in\mathbb{R}^d\) denote the representation of token \(i\) at this fixed layer. Token indices are kept distinct from the layer index: \(r\) denotes the query token, \(i\) denotes a token attended to by that query, and \(m\) denotes the token whose representation is perturbed. Thus, \(\ell\) is reserved exclusively for Transformer depth, whereas \(r,i,m\) index tokens.

For the single-head attention formulas, let \(W_Q,W_K\in\mathbb{R}^{d\times d_k}\) denote the query and key projection matrices, where \(d_k\) is the query/key dimension. We use the combined query--key operator \(W_{QK}:=W_QW_K^\top/\sqrt{d_k}\in\mathbb{R}^{d\times d}\), so that the score between query token \(r\) and token \(i\) is \(s_{ri}=x_r^\top W_{QK}x_i\). The corresponding attention weight is \(\alpha_{ri}=\exp(s_{ri})/\sum_j\exp(s_{rj})\). We denote by \(W_{VO}\in\mathbb{R}^{d\times d}\) the combined value and output transformation, so the attention output associated with query token \(r\) is
\begin{equation}
F_r
=
\sum_{i=1}^{n}\alpha_{ri}W_{VO}x_i.
\label{eq:app_attention_setup}
\end{equation}

We use \(\mu_r:=\sum_i\alpha_{ri}x_i\) for the attention-weighted mean of the token representations and \(\Sigma_r:=\sum_i\alpha_{ri}(x_i-\mu_r)(x_i-\mu_r)^\top\) for their attention-weighted covariance. The Jacobian \(J_{rm}:=\partial F_r/\partial x_m\) measures how perturbing token \(m\) changes the attention output of query token \(r\). Later, \(J_{\mathrm{transport}}\) and \(J_{\mathrm{gating}}\) denote the components of an attention Jacobian arising respectively from changes in the transported information and from changes in the state-dependent routing weights.

For a subspace \(\mathcal{S}\subseteq\mathbb{R}^d\), \(P\in\mathbb{R}^{d\times q}\) denotes a matrix whose \(q\) columns form a basis of \(\mathcal{S}\). The notation \(A\preceq B\) means that \(B-A\) is positive semidefinite, while \(A\prec B\) means that \(B-A\) is positive definite. These conventions will be used in the spectral characterization of gating-induced regime switching.

\subsection{Local Representation Dynamics and Growth Operator}
\label{subsec:app_local_dynamics}

We formalize how a small representation difference evolves through a Transformer block and derive an operator that determines whether this difference grows or shrinks along a given direction. Let \(z_{\ell+1}=T_\ell(z_\ell)\), where \(T_\ell:\mathbb{R}^d\rightarrow\mathbb{R}^d\) denotes the transformation implemented by layer \(\ell\). Consider two nearby representations \(z_\ell\) and \(z_\ell+\delta z_\ell\). Their difference after the block is \(\delta z_{\ell+1}:=T_\ell(z_\ell+\delta z_\ell)-T_\ell(z_\ell)\). Assuming that \(T_\ell\) is twice continuously differentiable in a neighborhood of \(z_\ell\), Taylor expansion gives
\begin{equation}
\delta z_{\ell+1}
=
J_\ell\delta z_\ell+R_\ell(\delta z_\ell),
\qquad
\|R_\ell(\delta z_\ell)\|_2
=
\mathcal{O}(\|\delta z_\ell\|_2^2),
\label{eq:app_local_dynamics}
\end{equation}
where \(J_\ell:=\partial T_\ell/\partial z_\ell\) is the Jacobian evaluated at \(z_\ell\), and \(R_\ell(\delta z_\ell)\) collects the higher-order terms. Thus, for sufficiently small perturbations, the block acts locally as the linear transformation \(J_\ell\).

\paragraph{From representation magnitude to the growth operator.}
To determine whether a representation difference grows or shrinks, we must compare its magnitude before and after the transformation. Let \(M=M^\top\succ0\) be a fixed symmetric positive-definite matrix defining the squared representation magnitude \(\|v\|_M^2:=v^\top Mv\). The choice \(M=I\) gives the usual Euclidean norm, while a general \(M\) allows different representation directions to be measured with different weights. Importantly, \(M\) is a choice of measurement geometry, not an additional transformation implemented by the Transformer. Unless otherwise specified, the following derivation applies to any fixed \(M\succ0\).

For a perturbation \(\delta z_\ell\), we are interested in the change in its squared magnitude,
\[
\Delta_\ell^M(\delta z_\ell)
:=
\|\delta z_{\ell+1}\|_M^2
-
\|\delta z_\ell\|_M^2.
\]
A positive value means that the representation difference has grown after the block, whereas a negative value means that it has shrunk. Substituting the local approximation \(\delta z_{\ell+1}\approx J_\ell\delta z_\ell\), we obtain
\begin{equation}
\begin{aligned}
\Delta_\ell^M(\delta z_\ell)
&\approx
\|J_\ell\delta z_\ell\|_M^2
-
\|\delta z_\ell\|_M^2\\
&=
(J_\ell\delta z_\ell)^\top
M(J_\ell\delta z_\ell)
-
\delta z_\ell^\top M\delta z_\ell\\
&=
\delta z_\ell^\top
\bigl(J_\ell^\top M J_\ell-M\bigr)
\delta z_\ell.
\end{aligned}
\label{eq:app_growth_derivation}
\end{equation}

This calculation shows that the change in squared representation magnitude is governed by the matrix \(J_\ell^\top M J_\ell-M\). The first term, \(J_\ell^\top M J_\ell\), measures the squared magnitude of a perturbation after applying the local transformation, while the second term, \(M\), measures its original squared magnitude. Their difference therefore determines whether the block increases or decreases the magnitude of a perturbation. This motivates the following definition of the \emph{representation growth operator}:
\begin{equation}
\boxed{
\mathcal{G}_\ell^M:=J_\ell^\top M J_\ell-M.
}
\label{eq:app_growth_operator}
\end{equation}
The growth operator is not an additional model parameter or a separate component of the Transformer. It is derived entirely from the block Jacobian \(J_\ell\) and the chosen metric \(M\), and summarizes how the local transformation changes squared representation magnitudes across different directions.

The approximation in Eq.~\eqref{eq:app_growth_derivation} can also be expressed with an explicit remainder. Substituting Eq.~\eqref{eq:app_local_dynamics} into the exact squared-magnitude difference gives
\begin{equation}
\Delta_\ell^M(\delta z_\ell)
=
\delta z_\ell^\top\mathcal{G}_\ell^M\delta z_\ell
+
\mathcal{O}(\|\delta z_\ell\|_2^3).
\label{eq:app_growth_remainder}
\end{equation}
Indeed, expanding \(\|J_\ell\delta z_\ell+R_\ell(\delta z_\ell)\|_M^2\) produces a cross term of order \(\mathcal{O}(\|\delta z_\ell\|_2^3)\) and a squared-remainder term of order \(\mathcal{O}(\|\delta z_\ell\|_2^4)\). Consequently, the quadratic form induced by \(\mathcal{G}_\ell^M\) gives the leading-order change in squared representation magnitude. It characterizes the local dynamics without asserting that the same growth behavior necessarily persists under arbitrarily large perturbations.

\paragraph{Directional amplification and contraction.}
For a nonzero perturbation direction \(v\), define its local amplification factor as \(a_\ell^M(v):=\|J_\ell v\|_M/\|v\|_M\). By construction, the growth operator satisfies the exact identity
\begin{equation}
v^\top\mathcal{G}_\ell^Mv
=
\|J_\ell v\|_M^2-\|v\|_M^2
=
\bigl((a_\ell^M(v))^2-1\bigr)\|v\|_M^2.
\label{eq:app_growth_identity}
\end{equation}
Therefore, \(v^\top\mathcal{G}_\ell^Mv>0\) indicates local amplification, \(v^\top\mathcal{G}_\ell^Mv<0\) indicates local contraction, and equality indicates preservation of squared magnitude at the linearized level. Because \(M\) is symmetric, the growth operator is also symmetric, satisfying \((\mathcal{G}_\ell^M)^\top=\mathcal{G}_\ell^M\). Its eigenvalues are therefore real, and the signs of its quadratic form provide a direct characterization of directional growth.

In particular, if \(\mathcal{G}_\ell^M\) has both positive and negative eigenvalues, there exist nonzero directions \(v_+\) and \(v_-\) such that \(v_+^\top\mathcal{G}_\ell^Mv_+>0\) and \(v_-^\top\mathcal{G}_\ell^Mv_-<0\). The same block then locally amplifies some representation differences while suppressing others. We refer to this coexistence of expanding and contracting directions as \emph{anisotropic representation dynamics}.

\paragraph{Connection to singular values.}
When \(M=I\), the growth operator becomes \(\mathcal{G}_\ell^I=J_\ell^\top J_\ell-I\). The matrix \(J_\ell^\top J_\ell\) is symmetric positive semidefinite, and its eigenvectors are the right singular vectors of \(J_\ell\), with eigenvalues equal to the squared singular values. Hence, if \(v_k\) is a right singular vector associated with singular value \(\sigma_k(J_\ell)\), then
\begin{equation}
v_k^\top\mathcal{G}_\ell^Iv_k
=
\bigl(\sigma_k(J_\ell)^2-1\bigr)\|v_k\|_2^2.
\label{eq:app_growth_singular_value}
\end{equation}
A singular value greater than one therefore identifies a locally expanding direction, while a singular value smaller than one identifies a contracting direction. The growth operator expresses these behaviors through a single symmetric matrix, with zero separating expansion from contraction.

\paragraph{A simple geometric example.}
Consider a two-dimensional local transformation with \(M=I\) and \(J_\ell=\operatorname{diag}(2,1/2)\). The transformation doubles perturbations along the first coordinate direction while halving those along the second. Its growth operator is
\[
\mathcal{G}_\ell^I
=
J_\ell^\top J_\ell-I
=
\begin{pmatrix}
3&0\\
0&-\frac34
\end{pmatrix}.
\]
For \(v_1=(1,0)^\top\), the squared magnitude changes from \(1\) to \(4\), giving \(v_1^\top\mathcal{G}_\ell^Iv_1=3>0\). For \(v_2=(0,1)^\top\), it changes from \(1\) to \(1/4\), giving \(v_2^\top\mathcal{G}_\ell^Iv_2=-3/4<0\). Thus, a single local transformation can simultaneously amplify and contract different directions, and the growth operator captures both behaviors through the signs of its quadratic form.

The growth operator provides a general description of these directional dynamics, but does not yet explain which components of self-attention produce them. In the following sections, we decompose the attention Jacobian into value transport, query gating, and key gating, and examine how their interaction determines the sign of directional growth.

\subsection{Self-Attention and the Transport--Gating Decomposition}
\label{subsec:app_transport_proof}

We begin with a general state-dependent mixture, which will later be specialized to self-attention. Let \(x\in\mathbb{R}^{p}\) denote the state on which the mixture depends, and consider
\(F(x)=\sum_{i=1}^{n}\alpha_i(x)f_i(x)\), where \(f_i(x)\in\mathbb{R}^{q}\) denotes the information contributed by candidate \(i\). The mixture weight \(\alpha_i(x)\) is obtained from a differentiable scalar score \(s_i(x)\in\mathbb{R}\) through the softmax
\[
\alpha_i(x)
=
\frac{\exp(s_i(x))}
{\sum_{j=1}^{n}\exp(s_j(x))}.
\]
A perturbation of \(x\) can therefore affect \(F(x)\) in two different ways: it can change the candidate information \(f_i(x)\), or it can change the weights \(\alpha_i(x)\) used to combine the candidates. Let \(J_F:=\partial F/\partial x\in\mathbb{R}^{q\times p}\) and \(J_{f_i}:=\partial f_i/\partial x\in\mathbb{R}^{q\times p}\). Differentiating the mixture output term by term gives
\begin{equation}
\begin{aligned}
dF
&=
d\left(
\sum_{i=1}^{n}
\alpha_i f_i
\right) \\
&=
\sum_{i=1}^{n}
d(\alpha_i f_i) \\
&=
\sum_{i=1}^{n}
\left(
\alpha_i\,df_i
+
f_i\,d\alpha_i
\right) \\
&=
\left(
\sum_{i=1}^{n}
\alpha_i J_{f_i}
\right)dx
+
\sum_{i=1}^{n}
f_i(\nabla\alpha_i)^\top dx .
\end{aligned}
\label{eq:app_mixture_differential}
\end{equation}
Since \(dF=J_Fdx\), Eq.~\eqref{eq:app_mixture_differential} implies
\begin{equation}
J_F
=
\sum_{i=1}^{n}
\alpha_iJ_{f_i}
+
\sum_{i=1}^{n}
f_i(\nabla\alpha_i)^\top .
\label{eq:app_mixture_product_rule}
\end{equation}

The first term in Eq.~\eqref{eq:app_mixture_product_rule} describes how the candidate information changes while the current mixture weights are held fixed. We therefore define the \emph{transport Jacobian} as
\begin{equation}
J_{\mathrm{transport}}
:=
\sum_{i=1}^{n}
\alpha_iJ_{f_i}.
\label{eq:app_transport_jacobian}
\end{equation}

It remains to simplify the second term, which is caused entirely by the state dependence of the mixture weights. To differentiate the softmax, define \(Z(x):=\sum_{j=1}^{n}\exp(s_j(x))\), so that \(\alpha_i=\exp(s_i)/Z\). Applying the quotient rule gives
\begin{equation}
\begin{aligned}
\nabla\alpha_i
&=
\nabla
\left(
\frac{\exp(s_i)}{Z}
\right) \\
&=
\frac{
Z\,\exp(s_i)\nabla s_i
-
\exp(s_i)\nabla Z
}{
Z^2
}.
\end{aligned}
\label{eq:app_softmax_quotient}
\end{equation}
The gradient of the normalizing factor is
\begin{equation}
\begin{aligned}
\nabla Z
&=
\nabla
\left(
\sum_{j=1}^{n}\exp(s_j)
\right) \\
&=
\sum_{j=1}^{n}
\exp(s_j)\nabla s_j.
\end{aligned}
\label{eq:app_softmax_partition_gradient}
\end{equation}
Substituting Eq.~\eqref{eq:app_softmax_partition_gradient} into Eq.~\eqref{eq:app_softmax_quotient} and using \(\alpha_i=\exp(s_i)/Z\) yields
\begin{equation}
\begin{aligned}
\nabla\alpha_i
&=
\frac{\exp(s_i)}{Z}
\nabla s_i
-
\frac{\exp(s_i)}{Z}
\sum_{j=1}^{n}
\frac{\exp(s_j)}{Z}
\nabla s_j \\
&=
\alpha_i\nabla s_i
-
\alpha_i
\sum_{j=1}^{n}
\alpha_j\nabla s_j \\
&=
\alpha_i
\left(
\nabla s_i-\bar g
\right),
\end{aligned}
\label{eq:app_softmax_gradient}
\end{equation}
where we define
\[
\bar g
:=
\sum_{j=1}^{n}\alpha_j\nabla s_j
\]
as the mixture-weighted mean score gradient. Thus, the change of a softmax weight is determined not by the absolute score response \(\nabla s_i\), but by its response relative to the weighted average response \(\bar g\).

Substituting Eq.~\eqref{eq:app_softmax_gradient} into the weight-dependent term of Eq.~\eqref{eq:app_mixture_product_rule} gives
\begin{equation}
\begin{aligned}
\sum_{i=1}^{n}
f_i(\nabla\alpha_i)^\top
&=
\sum_{i=1}^{n}
\alpha_i
f_i
(\nabla s_i-\bar g)^\top \\
&=
\sum_{i=1}^{n}
\alpha_if_i(\nabla s_i)^\top
-
\sum_{i=1}^{n}
\alpha_if_i\bar g^\top \\
&=
\sum_{i=1}^{n}
\alpha_if_i(\nabla s_i)^\top
-
\left(
\sum_{i=1}^{n}\alpha_if_i
\right)\bar g^\top \\
&=
\sum_{i=1}^{n}
\alpha_if_i(\nabla s_i)^\top
-
F\bar g^\top.
\end{aligned}
\label{eq:app_gating_expansion}
\end{equation}
Using \(\bar g=\sum_i\alpha_i\nabla s_i\), the last term can itself be written as
\begin{equation}
\begin{aligned}
F\bar g^\top
&=
F
\left(
\sum_{i=1}^{n}
\alpha_i\nabla s_i
\right)^\top \\
&=
\sum_{i=1}^{n}
\alpha_i
F(\nabla s_i)^\top.
\end{aligned}
\label{eq:app_mean_score_expansion}
\end{equation}
Combining Eqs.~\eqref{eq:app_gating_expansion} and \eqref{eq:app_mean_score_expansion} therefore gives
\begin{equation}
\begin{aligned}
\sum_{i=1}^{n}
f_i(\nabla\alpha_i)^\top
&=
\sum_{i=1}^{n}
\alpha_i
f_i(\nabla s_i)^\top
-
\sum_{i=1}^{n}
\alpha_i
F(\nabla s_i)^\top \\
&=
\sum_{i=1}^{n}
\alpha_i
(f_i-F)
(\nabla s_i)^\top.
\end{aligned}
\label{eq:app_centered_candidate_gating}
\end{equation}

The same quantity can also be centered in score-gradient space. Since the mixture output satisfies \(F=\sum_i\alpha_if_i\), we have
\begin{equation}
\begin{aligned}
\sum_{i=1}^{n}
\alpha_i(f_i-F)
&=
\sum_{i=1}^{n}\alpha_if_i
-
F\sum_{i=1}^{n}\alpha_i \\
&=
F-F \\
&=
0.
\end{aligned}
\label{eq:app_centered_candidate_zero}
\end{equation}
Consequently,
\begin{equation}
\begin{aligned}
\sum_{i=1}^{n}
\alpha_i
(f_i-F)
(\nabla s_i)^\top
&=
\sum_{i=1}^{n}
\alpha_i
(f_i-F)
(\nabla s_i)^\top \\
&\quad-
\left[
\sum_{i=1}^{n}
\alpha_i(f_i-F)
\right]
\bar g^\top \\
&=
\sum_{i=1}^{n}
\alpha_i
(f_i-F)
(\nabla s_i-\bar g)^\top.
\end{aligned}
\label{eq:app_centered_score_gating}
\end{equation}

We therefore define the \emph{gating Jacobian} as
\begin{equation}
J_{\mathrm{gating}}
:=
\sum_{i=1}^{n}
\alpha_i
(f_i-F)
(\nabla s_i-\bar g)^\top.
\label{eq:app_gating_jacobian}
\end{equation}
Substituting Eqs.~\eqref{eq:app_transport_jacobian} and \eqref{eq:app_gating_jacobian} into Eq.~\eqref{eq:app_mixture_product_rule} gives the transport--gating decomposition
\begin{equation}
\boxed{
J_{\mathrm{full}}
=
J_{\mathrm{transport}}
+
J_{\mathrm{gating}}.
}
\label{eq:app_transport_gating_decomposition}
\end{equation}

The two Jacobians correspond to different computational effects. \(J_{\mathrm{transport}}\) measures how the information carried by the candidates changes while their current routing weights are fixed. In contrast, \(J_{\mathrm{gating}}\) measures how the output changes because the state perturbation redistributes those routing weights. Its centered form is particularly informative: \(f_i-F\) measures how candidate \(i\) differs from the current mixture output, whereas \(\nabla s_i-\bar g\) measures how the routing preference for that candidate changes relative to the average routing response. The gating effect therefore depends jointly on \emph{where different candidate information lies} and \emph{how the perturbation changes routing among those candidates}.

Self-attention is a special case of this state-dependent mixture. For a fixed query token \(r\), recall from Eq.~\eqref{eq:app_attention_setup} that \(F_r=\sum_i\alpha_{ri}W_{VO}x_i\). Hence the candidate information is \(f_i=W_{VO}x_i\), while the weights \(\alpha_{ri}\) are determined by query--key scores. The generic decomposition above therefore separates representation changes propagated through the value pathway from changes caused by attention reallocation. The next subsection further resolves this gating term into the exact query and key pathways of self-attention.

\subsection{Exact Query, Key, and Value Jacobian Decomposition}
\label{app:attention_jacobian}

We now specialize the transport--gating decomposition to dot-product self-attention and derive the exact Jacobian with respect to an individual token representation. Recall that we fix one Transformer layer and one attention head, suppressing the corresponding layer and head indices. Let \(r\) denote the query token, \(i\) a candidate token, and \(m\) the token whose representation is perturbed. The attention score and weight are
\(s_{ri}=x_r^\top W_{QK}x_i\) and
\(\alpha_{ri}=\exp(s_{ri})/\sum_j\exp(s_{rj})\), respectively, and the attention output is \(F_r=\sum_i\alpha_{ri}W_{VO}x_i\).

Our goal is to compute
\(J_{rm}:=\partial F_r/\partial x_m\), which measures how a perturbation of token \(m\) changes the attention output associated with query token \(r\). Consider an infinitesimal perturbation \(dx_m\). Differentiating the attention output gives
\begin{equation}
\begin{aligned}
dF_r
&=
d\left(
\sum_{i=1}^{n}
\alpha_{ri}W_{VO}x_i
\right) \\
&=
\sum_{i=1}^{n}
d\alpha_{ri}\,W_{VO}x_i
+
\sum_{i=1}^{n}
\alpha_{ri}W_{VO}\,dx_i.
\end{aligned}
\label{eq:app_attention_output_differential}
\end{equation}
Because only \(x_m\) is perturbed, \(dx_i=0\) for \(i\neq m\) and \(dx_m\) is the only nonzero token differential. Hence the second term becomes
\begin{equation}
\begin{aligned}
\sum_{i=1}^{n}
\alpha_{ri}W_{VO}\,dx_i
&=
\alpha_{rm}W_{VO}\,dx_m.
\end{aligned}
\label{eq:app_value_transport_differential}
\end{equation}
This is the direct \emph{value transport} contribution. The remaining term arises because perturbing \(x_m\) may also change the attention weights. To compute this routing contribution, we first differentiate the score
\(s_{ri}=x_r^\top W_{QK}x_i\). The token \(x_m\) can affect this score in two different roles: it may be the query when \(m=r\), and it may be the key associated with candidate \(i\) when \(m=i\). To make these two pathways explicit, recall that
\(W_{QK}=W_QW_K^\top/\sqrt{d_k}\), so that
\[
s_{ri}
=
x_r^\top W_{QK}x_i
=
\frac{Q_r^\top k_i}{\sqrt{d_k}},
\]
where \(Q_r:=W_Q^\top x_r\) is the query vector produced by token \(r\), and \(k_i:=W_K^\top x_i\) is the key vector produced by token \(i\). Hence \(x_r\) affects the score through the query \(Q_r\), while \(x_i\) affects it through the key \(k_i\). Differentiating the score therefore gives
\begin{equation}
\begin{aligned}
ds_{ri}
&=
d\left(
x_r^\top W_{QK}x_i
\right) \\
&=
(dx_r)^\top W_{QK}x_i
+
x_r^\top W_{QK}\,dx_i.
\end{aligned}
\label{eq:app_attention_score_two_paths}
\end{equation}
The first term in Eq.~\eqref{eq:app_attention_score_two_paths} is produced by perturbing the representation \(x_r\) that generates the query. Indeed,
\begin{equation}
\begin{aligned}
\frac{1}{\sqrt{d_k}}(dQ_r)^\top k_i
&=
\frac{1}{\sqrt{d_k}}
(W_Q^\top dx_r)^\top W_K^\top x_i \\
&=
(dx_r)^\top
\frac{W_QW_K^\top}{\sqrt{d_k}}
x_i \\
&=
(dx_r)^\top W_{QK}x_i.
\end{aligned}
\label{eq:app_query_score_path}
\end{equation}
We therefore refer to this term as the \emph{query pathway}. A perturbation of the query token changes \(Q_r\), and consequently can change its compatibility with every candidate key \(k_i\). The second term is produced by perturbing the representation \(x_i\) that generates the key:
\begin{equation}
\begin{aligned}
\frac{1}{\sqrt{d_k}}Q_r^\top dk_i
&=
\frac{1}{\sqrt{d_k}}
(W_Q^\top x_r)^\top W_K^\top dx_i \\
&=
x_r^\top
\frac{W_QW_K^\top}{\sqrt{d_k}}
dx_i \\
&=
x_r^\top W_{QK}\,dx_i.
\end{aligned}
\label{eq:app_key_score_path}
\end{equation}
We refer to this term as the \emph{key pathway}. In this case, the query representation is fixed, while the perturbation changes the key of the candidate token and therefore changes how strongly that candidate matches the query.

Because our Jacobian \(J_{rm}\) considers a perturbation only to token \(m\), we have
\(dx_r=\mathbf{1}_{\{m=r\}}dx_m\) and
\(dx_i=\mathbf{1}_{\{i=m\}}dx_m\). Substituting these relations into Eq.~\eqref{eq:app_attention_score_two_paths} gives
\begin{equation}
\begin{aligned}
ds_{ri}
&=
\mathbf{1}_{\{m=r\}}
(dx_m)^\top W_{QK}x_i
+
\mathbf{1}_{\{i=m\}}
x_r^\top W_{QK}\,dx_m \\
&=
\left[
\mathbf{1}_{\{m=r\}}
W_{QK}x_i
+
\mathbf{1}_{\{i=m\}}
W_{QK}^\top x_r
\right]^\top
dx_m.
\end{aligned}
\label{eq:app_attention_score_differential}
\end{equation}
Here, \(\mathbf{1}_{\{\cdot\}}\) denotes the indicator function. The factor
\(\mathbf{1}_{\{m=r\}}\) activates the query pathway only when the perturbed token \(m\) is the query token itself, whereas
\(\mathbf{1}_{\{i=m\}}\) activates the key pathway only when the perturbed token is the candidate whose score is being evaluated. These two cases are not mutually exclusive: if \(m=r=i\), the same token generates both the query and the key, and both contributions to \(ds_{ri}\) are present.

The softmax differential derived in Section~\ref{subsec:app_transport_proof} gives
\begin{equation}
\begin{aligned}
d\alpha_{ri}
&=
\alpha_{ri}
\left(
ds_{ri}
-
\sum_{j=1}^{n}
\alpha_{rj}ds_{rj}
\right).
\end{aligned}
\label{eq:app_attention_weight_differential}
\end{equation}
We next evaluate the query and key contributions separately.

\paragraph{Query contribution.}
Suppose first that the perturbation acts through the query role. From Eq.~\eqref{eq:app_attention_score_differential}, the query-induced score differential is
\begin{equation}
\begin{aligned}
ds_{ri}^{(q)}
&=
\mathbf{1}_{\{m=r\}}
(dx_m)^\top W_{QK}x_i \\
&=
\mathbf{1}_{\{m=r\}}
x_i^\top W_{QK}^\top dx_m.
\end{aligned}
\label{eq:app_query_score_differential}
\end{equation}
Its attention-weighted average is
\begin{equation}
\begin{aligned}
\sum_{j=1}^{n}
\alpha_{rj}ds_{rj}^{(q)}
&=
\mathbf{1}_{\{m=r\}}
\sum_{j=1}^{n}
\alpha_{rj}
x_j^\top W_{QK}^\top dx_m \\
&=
\mathbf{1}_{\{m=r\}}
\left(
\sum_{j=1}^{n}
\alpha_{rj}x_j
\right)^\top
W_{QK}^\top dx_m \\
&=
\mathbf{1}_{\{m=r\}}
\mu_r^\top W_{QK}^\top dx_m,
\end{aligned}
\label{eq:app_query_mean_score_differential}
\end{equation}
where, recall, \(\mu_r:=\sum_j\alpha_{rj}x_j\) is the attention-weighted mean representation. Substituting Eqs.~\eqref{eq:app_query_score_differential} and \eqref{eq:app_query_mean_score_differential} into the softmax differential gives
\begin{equation}
\begin{aligned}
d\alpha_{ri}^{(q)}
&=
\alpha_{ri}
\left(
ds_{ri}^{(q)}
-
\sum_{j=1}^{n}
\alpha_{rj}ds_{rj}^{(q)}
\right) \\
&=
\mathbf{1}_{\{m=r\}}
\alpha_{ri}
\left(
x_i-\mu_r
\right)^\top
W_{QK}^\top dx_m.
\end{aligned}
\label{eq:app_query_attention_weight_differential}
\end{equation}

The corresponding contribution to the attention output is therefore
\begin{equation}
\begin{aligned}
dF_r^{(q)}
&=
\sum_{i=1}^{n}
W_{VO}x_i\,
d\alpha_{ri}^{(q)} \\
&=
\mathbf{1}_{\{m=r\}}
\sum_{i=1}^{n}
\alpha_{ri}
W_{VO}x_i
(x_i-\mu_r)^\top
W_{QK}^\top dx_m \\
&=
\mathbf{1}_{\{m=r\}}
W_{VO}
\left[
\sum_{i=1}^{n}
\alpha_{ri}
x_i(x_i-\mu_r)^\top
\right]
W_{QK}^\top dx_m.
\end{aligned}
\label{eq:app_query_gating_before_covariance}
\end{equation}

The matrix inside brackets is exactly the attention-weighted covariance. To see this, write \(x_i=(x_i-\mu_r)+\mu_r\). Then
\begin{equation}
\begin{aligned}
\sum_{i=1}^{n}
\alpha_{ri}
x_i(x_i-\mu_r)^\top
&=
\sum_{i=1}^{n}
\alpha_{ri}
\left[
(x_i-\mu_r)+\mu_r
\right]
(x_i-\mu_r)^\top \\
&=
\sum_{i=1}^{n}
\alpha_{ri}
(x_i-\mu_r)(x_i-\mu_r)^\top \\
&\quad+
\mu_r
\left[
\sum_{i=1}^{n}
\alpha_{ri}(x_i-\mu_r)
\right]^\top.
\end{aligned}
\label{eq:app_query_covariance_expansion}
\end{equation}
The second term vanishes because
\begin{equation}
\begin{aligned}
\sum_{i=1}^{n}
\alpha_{ri}(x_i-\mu_r)
&=
\sum_{i=1}^{n}
\alpha_{ri}x_i
-
\mu_r
\sum_{i=1}^{n}
\alpha_{ri} \\
&=
\mu_r-\mu_r \\
&=
0.
\end{aligned}
\label{eq:app_centered_attention_mean_zero}
\end{equation}
Hence, recalling the definition
\(\Sigma_r:=\sum_i\alpha_{ri}(x_i-\mu_r)(x_i-\mu_r)^\top\), Eq.~\eqref{eq:app_query_gating_before_covariance} reduces to
\begin{equation}
dF_r^{(q)}
=
\mathbf{1}_{\{m=r\}}
W_{VO}\Sigma_rW_{QK}^\top dx_m.
\label{eq:app_query_gating_differential}
\end{equation}
The exact query-gating Jacobian is therefore
\begin{equation}
\boxed{
J_{rm}^{(q)}
=
\mathbf{1}_{\{m=r\}}
W_{VO}\Sigma_rW_{QK}^\top.
}
\label{eq:app_query_gating_jacobian}
\end{equation}

\paragraph{Key contribution.}
We now isolate the effect of perturbing token \(m\) through its role as a key. From Eq.~\eqref{eq:app_attention_score_differential},
\begin{equation}
\begin{aligned}
ds_{ri}^{(k)}
&=
\mathbf{1}_{\{i=m\}}
x_r^\top W_{QK}\,dx_m \\
&=
\mathbf{1}_{\{i=m\}}
(W_{QK}^\top x_r)^\top dx_m.
\end{aligned}
\label{eq:app_key_score_differential}
\end{equation}
Taking the attention-weighted average over candidate tokens gives
\begin{equation}
\begin{aligned}
\sum_{j=1}^{n}
\alpha_{rj}ds_{rj}^{(k)}
&=
\sum_{j=1}^{n}
\alpha_{rj}
\mathbf{1}_{\{j=m\}}
(W_{QK}^\top x_r)^\top dx_m \\
&=
\alpha_{rm}
(W_{QK}^\top x_r)^\top dx_m.
\end{aligned}
\label{eq:app_key_mean_score_differential}
\end{equation}
Substituting into the softmax differential yields
\begin{equation}
\begin{aligned}
d\alpha_{ri}^{(k)}
&=
\alpha_{ri}
\left(
ds_{ri}^{(k)}
-
\sum_{j=1}^{n}
\alpha_{rj}ds_{rj}^{(k)}
\right) \\
&=
\alpha_{ri}
\left(
\mathbf{1}_{\{i=m\}}-\alpha_{rm}
\right)
(W_{QK}^\top x_r)^\top dx_m.
\end{aligned}
\label{eq:app_key_attention_weight_differential}
\end{equation}

The resulting change in the attention output is
\begin{equation}
\begin{aligned}
dF_r^{(k)}
&=
\sum_{i=1}^{n}
W_{VO}x_i\,
d\alpha_{ri}^{(k)} \\
&=
\sum_{i=1}^{n}
\alpha_{ri}
W_{VO}x_i
\left(
\mathbf{1}_{\{i=m\}}-\alpha_{rm}
\right)
(W_{QK}^\top x_r)^\top dx_m \\
&=
\left[
\sum_{i=1}^{n}
\alpha_{ri}
W_{VO}x_i
\mathbf{1}_{\{i=m\}}
-
\alpha_{rm}
\sum_{i=1}^{n}
\alpha_{ri}W_{VO}x_i
\right]
(W_{QK}^\top x_r)^\top dx_m \\
&=
\left[
\alpha_{rm}W_{VO}x_m
-
\alpha_{rm}F_r
\right]
(W_{QK}^\top x_r)^\top dx_m.
\end{aligned}
\label{eq:app_key_gating_expansion}
\end{equation}
Since \(F_r=\sum_i\alpha_{ri}W_{VO}x_i=W_{VO}\mu_r\), we obtain
\begin{equation}
\begin{aligned}
dF_r^{(k)}
&=
\alpha_{rm}
W_{VO}(x_m-\mu_r)
(W_{QK}^\top x_r)^\top dx_m.
\end{aligned}
\label{eq:app_key_gating_differential}
\end{equation}
Hence the exact key-gating Jacobian is
\begin{equation}
\boxed{
J_{rm}^{(k)}
=
\alpha_{rm}
W_{VO}(x_m-\mu_r)
(W_{QK}^\top x_r)^\top.
}
\label{eq:app_key_gating_jacobian}
\end{equation}

\paragraph{Complete decomposition.}
Combining the value contribution in Eq.~\eqref{eq:app_value_transport_differential}, the query-gating contribution in Eq.~\eqref{eq:app_query_gating_differential}, and the key-gating contribution in Eq.~\eqref{eq:app_key_gating_differential}, we have
\begin{equation}
\begin{aligned}
dF_r
&=
dF_r^{(v)}
+
dF_r^{(q)}
+
dF_r^{(k)} \\
&=
\Big[
\alpha_{rm}W_{VO}
+
\mathbf{1}_{\{m=r\}}
W_{VO}\Sigma_rW_{QK}^\top \\
&\qquad\qquad
+
\alpha_{rm}
W_{VO}(x_m-\mu_r)
(W_{QK}^\top x_r)^\top
\Big]dx_m.
\end{aligned}
\label{eq:app_full_attention_differential}
\end{equation}
Therefore,
\begin{equation}
\boxed{
J_{rm}
=
\underbrace{\alpha_{rm}W_{VO}}_{\text{value transport}}
+
\underbrace{\mathbf{1}_{\{m=r\}}W_{VO}\Sigma_rW_{QK}^{\top}}_{\text{query gating}}
+
\underbrace{
\alpha_{rm}W_{VO}(x_m-\mu_r)(W_{QK}^{\top}x_r)^\top
}_{\text{key gating}}.
}
\label{eq:app_full_qkv_jacobian}
\end{equation}

This decomposition is exact for the single-head dot-product attention map considered here. The first term propagates a perturbation directly through the value pathway. The second arises because perturbing the query changes its relative preference over all candidate tokens, and its dependence on \(\Sigma_r\) shows that this effect is mediated by variation among the currently attended representations. The third arises because perturbing token \(m\) changes its compatibility with the fixed query and therefore changes how strongly that token is selected as a key. When \(m=r\), the same representation participates simultaneously as value, query, and key, and all three terms in Eq.~\eqref{eq:app_full_qkv_jacobian} contribute.

Equation~\eqref{eq:app_full_qkv_jacobian} refines the generic decomposition of Section~\ref{subsec:app_transport_proof}: the value term is the transport contribution, whereas the query and key terms together form the state-dependent gating contribution. In the next subsection, we study the geometry of the query-gating operator \(W_{VO}\Sigma_rW_{QK}^\top\), which makes explicit the representation subspace in which changes to attention routing can act.

\subsection{Geometry and Support of Query Gating}
\label{subsec:app_gating_geometry}

We now examine how the geometry of attended representations constrains the query-gating response and provide the complete proof of Proposition~\ref{prop:query_gating_support}. For a fixed query token \(r\), recall that the query-gating component of the attention Jacobian is
\begin{equation}
J_r^{(q)}
:=
W_{VO}\Sigma_rW_{QK}^{\top},
\label{eq:app_query_gating_operator}
\end{equation}
where the indicator \(\mathbf{1}_{\{m=r\}}\) is omitted because we consider perturbations of the query token itself. Here, \(\Sigma_r\) denotes the attention-weighted covariance of the token representations,
\begin{equation}
\Sigma_r
=
\sum_{i=1}^{n}
\alpha_{ri}(x_i-\mu_r)(x_i-\mu_r)^\top,
\qquad
\mu_r=\sum_{i=1}^{n}\alpha_{ri}x_i.
\label{eq:app_query_covariance}
\end{equation}
The attention weights satisfy \(\alpha_{ri}\geq0\) and \(\sum_i\alpha_{ri}=1\).

The factorization in Eq.~\eqref{eq:app_query_gating_operator} suggests that query gating is governed by more than the query-key transformation alone. Although \(W_{QK}^{\top}\) determines how a query perturbation changes attention scores, the resulting change in the output also depends on whether the attended representations differ from one another. The covariance \(\Sigma_r\) captures precisely this variation, while \(W_{VO}\) maps the resulting redistribution into the output representation space. Our objective is therefore to identify the subspace on which \(\Sigma_r\) can act and determine how this subspace constrains the query-gating Jacobian.

\paragraph{Geometric support of query gating.}
Let \(\mathcal A_r:=\{i:\alpha_{ri}>0\}\) denote the active attention set, which also accommodates attention masks that assign zero weight to unavailable tokens. Define the \emph{attended variation subspace} as
\begin{equation}
\mathcal U_r
:=
\operatorname{span}
\left\{
x_i-\mu_r:i\in\mathcal A_r
\right\}
\subseteq\mathbb R^d.
\label{eq:app_attended_variation_subspace}
\end{equation}
This subspace contains all linear combinations of deviations of the attended representations from their attention-weighted mean. We show that the covariance acts only within this subspace and that its exact support determines the possible output directions, rank, and null directions of query gating.

\medskip
\noindent\textbf{Proposition~\ref{prop:query_gating_support} (restated: Geometric Support of Query Gating).}
Let \(\mathcal U_r\) be the attended variation subspace and \(J_r^{(q)}=W_{VO}\Sigma_rW_{QK}^{\top}\) the query-gating Jacobian. Then
\begin{equation}
\begin{aligned}
\operatorname{Range}(J_r^{(q)})
&\subseteq W_{VO}\mathcal U_r,\\
\operatorname{rank}(J_r^{(q)})
&\leq\dim(\mathcal U_r),\\
W_{QK}^{\top}v\perp\mathcal U_r
&\Longrightarrow J_r^{(q)}v=0.
\end{aligned}
\label{eq:app_query_gating_support}
\end{equation}

\begin{proof}
The central observation is that the range of the attention-weighted covariance is exactly the attended variation subspace. Once this relationship is established, the three claims follow directly from the factorization of the query-gating Jacobian.

To see why the covariance is restricted to \(\mathcal U_r\), consider its action on an arbitrary vector \(v\in\mathbb R^d\). Expanding Eq.~\eqref{eq:app_query_covariance} gives
\begin{equation}
\begin{aligned}
\Sigma_rv
&=
\left[
\sum_{i\in\mathcal A_r}
\alpha_{ri}(x_i-\mu_r)(x_i-\mu_r)^\top
\right]v\\
&=
\sum_{i\in\mathcal A_r}
\alpha_{ri}(x_i-\mu_r)
\left[(x_i-\mu_r)^\top v\right]\\
&=
\sum_{i\in\mathcal A_r}
\underbrace{
\alpha_{ri}(x_i-\mu_r)^\top v
}_{\text{scalar coefficient}}
(x_i-\mu_r).
\end{aligned}
\label{eq:app_covariance_action}
\end{equation}
The final expression is a linear combination of the centered attended representations \(x_i-\mu_r\). By the definition of their span, this implies
\begin{equation}
\operatorname{Range}(\Sigma_r)
\subseteq\mathcal U_r.
\label{eq:app_covariance_range_inclusion}
\end{equation}

The inclusion shows that the covariance cannot produce components outside \(\mathcal U_r\), but it does not yet establish whether every direction in \(\mathcal U_r\) belongs to its range. To prove equality, we characterize the null space of \(\Sigma_r\). Since \(\Sigma_r\) is a sum of symmetric rank-one matrices with nonnegative weights, it is symmetric positive semidefinite. Its quadratic form satisfies
\begin{equation}
\begin{aligned}
v^\top\Sigma_rv
&=
v^\top
\left[
\sum_{i\in\mathcal A_r}
\alpha_{ri}(x_i-\mu_r)(x_i-\mu_r)^\top
\right]v\\
&=
\sum_{i\in\mathcal A_r}
\alpha_{ri}
v^\top(x_i-\mu_r)(x_i-\mu_r)^\top v\\
&=
\sum_{i\in\mathcal A_r}
\alpha_{ri}
\left[v^\top(x_i-\mu_r)\right]
\left[(x_i-\mu_r)^\top v\right]\\
&=
\sum_{i\in\mathcal A_r}
\alpha_{ri}
\left[v^\top(x_i-\mu_r)\right]^2.
\end{aligned}
\label{eq:app_covariance_quadratic_form}
\end{equation}

Every summand is nonnegative, and \(\alpha_{ri}>0\) for all \(i\in\mathcal A_r\). Consequently, the quadratic form vanishes if and only if every inner product \(v^\top(x_i-\mu_r)\) vanishes. Furthermore, because \(\Sigma_r\) is positive semidefinite, \(v^\top\Sigma_rv=0\) holds if and only if \(\Sigma_rv=0\). We therefore obtain
\begin{equation}
\begin{aligned}
v\in\operatorname{Null}(\Sigma_r)
&\iff \Sigma_rv=0\\
&\iff v^\top\Sigma_rv=0\\
&\iff v^\top(x_i-\mu_r)=0,
\quad\forall i\in\mathcal A_r\\
&\iff v\perp
\operatorname{span}
\{x_i-\mu_r:i\in\mathcal A_r\}\\
&\iff v\in\mathcal U_r^\perp.
\end{aligned}
\label{eq:app_covariance_nullspace}
\end{equation}
Thus, \(\operatorname{Null}(\Sigma_r)=\mathcal U_r^\perp\). Since \(\Sigma_r\) is symmetric, its range is the orthogonal complement of its null space. Taking orthogonal complements yields
\begin{equation}
\begin{aligned}
\operatorname{Range}(\Sigma_r)
&=\operatorname{Null}(\Sigma_r)^\perp\\
&=(\mathcal U_r^\perp)^\perp\\
&=\mathcal U_r.
\end{aligned}
\label{eq:app_covariance_exact_support}
\end{equation}
The last equality follows because \(\mathcal U_r\) is a finite-dimensional linear subspace and is therefore equal to its double orthogonal complement. This establishes the exact geometric support of the covariance: its range consists precisely of the directions spanned by variation among the attended representations.

We now return to the query-gating Jacobian. Since \(J_r^{(q)}=W_{VO}\Sigma_rW_{QK}^{\top}\), any output response first passes through \(\Sigma_r\) and then through \(W_{VO}\). In particular, for an arbitrary perturbation \(v\),
\begin{equation}
\begin{aligned}
J_r^{(q)}v
&=
W_{VO}\Sigma_rW_{QK}^{\top}v\\
&=
W_{VO}
\sum_{i\in\mathcal A_r}
\alpha_{ri}(x_i-\mu_r)
\left[(x_i-\mu_r)^\top W_{QK}^{\top}v\right]\\
&=
\sum_{i\in\mathcal A_r}
\alpha_{ri}
\left[(x_i-\mu_r)^\top W_{QK}^{\top}v\right]
W_{VO}(x_i-\mu_r).
\end{aligned}
\label{eq:app_query_gating_direction}
\end{equation}
The last expression shows that every query-gating response is a linear combination of the transformed centered representations \(W_{VO}(x_i-\mu_r)\). We can express the same observation through the range of the matrix product:
\begin{equation}
\begin{aligned}
\operatorname{Range}(J_r^{(q)})
&=
\operatorname{Range}
\left(W_{VO}\Sigma_rW_{QK}^{\top}\right)\\
&\subseteq
\operatorname{Range}
\left(W_{VO}\Sigma_r\right)\\
&=
W_{VO}\operatorname{Range}(\Sigma_r)\\
&=
W_{VO}\mathcal U_r.
\end{aligned}
\label{eq:app_query_gating_range}
\end{equation}
Here, the inclusion follows from the fact that right multiplication cannot enlarge the column space of a matrix, while the final equality follows from Eq.~\eqref{eq:app_covariance_exact_support}. Therefore, query gating cannot generate an arbitrary output perturbation: its response must remain within \(W_{VO}\mathcal U_r\).

This output-support restriction also limits the dimensionality of query gating. Since the rank of a matrix is the dimension of its range, and multiplication cannot increase rank beyond that of either factor, we have
\begin{equation}
\begin{aligned}
\operatorname{rank}(J_r^{(q)})
&=
\operatorname{rank}
\left(W_{VO}\Sigma_rW_{QK}^{\top}\right)\\
&\leq
\operatorname{rank}
\left(\Sigma_rW_{QK}^{\top}\right)\\
&\leq
\operatorname{rank}(\Sigma_r)\\
&=
\dim\left(\operatorname{Range}(\Sigma_r)\right)\\
&=
\dim(\mathcal U_r).
\end{aligned}
\label{eq:app_query_gating_rank}
\end{equation}
Thus, even if the ambient representation space has dimension \(d\), the rank of the query-gating Jacobian cannot exceed the dimension of the attended variation subspace.

It remains to identify perturbation directions that produce no query-gating response. The null-space characterization in Eq.~\eqref{eq:app_covariance_nullspace} provides this condition directly. Suppose that \(W_{QK}^{\top}v\perp\mathcal U_r\). Then
\begin{equation}
\begin{aligned}
W_{QK}^{\top}v\perp\mathcal U_r
&\iff
W_{QK}^{\top}v
\in\mathcal U_r^\perp\\
&\iff
W_{QK}^{\top}v
\in\operatorname{Null}(\Sigma_r)\\
&\Longrightarrow
\Sigma_rW_{QK}^{\top}v=0.
\end{aligned}
\end{equation}
Substituting this relation into the query-gating Jacobian yields
\begin{equation}
\begin{aligned}
J_r^{(q)}v
&=
W_{VO}\Sigma_rW_{QK}^{\top}v\\
&=
W_{VO}\left(\Sigma_rW_{QK}^{\top}v\right)\\
&=
W_{VO}0\\
&=0.
\end{aligned}
\label{eq:app_query_gating_null_direction}
\end{equation}
Equivalently, for any \(u\in\mathcal U_r\), the identity \(v^\top W_{QK}u=(W_{QK}^{\top}v)^\top u\) shows that \(W_{QK}^{\top}v\perp\mathcal U_r\) is equivalent to \(v\perp W_{QK}\mathcal U_r\). Hence perturbations orthogonal to the transformed variation subspace cannot activate query gating. This establishes the final claim and completes the proof.
\end{proof}

The proposition gives both an output-side and an input-side characterization of query gating. On the output side, the response is confined to \(W_{VO}\mathcal U_r\), with rank at most \(\dim(\mathcal U_r)\). On the input side, a perturbation cannot induce query gating when its transformation through \(W_{QK}^{\top}\) is orthogonal to the attended variation subspace. The latter condition is sufficient but not necessary: the response may also vanish when \(W_{VO}\) annihilates a nonzero covariance response. These distinctions are important because the exact covariance support does not imply that the full query-gating Jacobian has the same range or rank.

\paragraph{Pairwise form of the covariance.}
The preceding proof shows that the covariance is supported precisely on variation among attended representations. We now express this variation directly through pairwise differences, rather than deviations from the attention-weighted mean. Besides providing a geometric interpretation, this alternative representation will be useful for bounding the magnitude of query gating.

Starting from the covariance definition, expanding the centered outer product gives
\begin{equation}
\begin{aligned}
\Sigma_r
&=
\sum_i\alpha_{ri}
(x_i-\mu_r)(x_i-\mu_r)^\top\\
&=
\sum_i\alpha_{ri}
\left(
x_ix_i^\top-x_i\mu_r^\top
-\mu_rx_i^\top+\mu_r\mu_r^\top
\right)\\
&=
\sum_i\alpha_{ri}x_ix_i^\top
-\left(\sum_i\alpha_{ri}x_i\right)\mu_r^\top\\
&\quad
-\mu_r\left(\sum_i\alpha_{ri}x_i\right)^\top
+\left(\sum_i\alpha_{ri}\right)\mu_r\mu_r^\top\\
&=
\sum_i\alpha_{ri}x_ix_i^\top
-\mu_r\mu_r^\top-\mu_r\mu_r^\top
+\mu_r\mu_r^\top\\
&=
\sum_i\alpha_{ri}x_ix_i^\top-\mu_r\mu_r^\top.
\end{aligned}
\label{eq:app_covariance_second_moment}
\end{equation}
The fourth equality uses \(\sum_i\alpha_{ri}x_i=\mu_r\) and \(\sum_i\alpha_{ri}=1\). Thus, the covariance equals the attention-weighted second moment minus the outer product of the weighted mean.

To recover this expression from pairwise differences, consider
\[
S_r:=
\frac12\sum_{i,j}
\alpha_{ri}\alpha_{rj}
(x_i-x_j)(x_i-x_j)^\top.
\]
Expanding the outer product gives
\begin{equation}
\begin{aligned}
S_r
&=
\frac12\sum_{i,j}
\alpha_{ri}\alpha_{rj}
\left(
x_ix_i^\top-x_ix_j^\top
-x_jx_i^\top+x_jx_j^\top
\right)\\
&=
\frac12\sum_{i,j}
\alpha_{ri}\alpha_{rj}x_ix_i^\top
+\frac12\sum_{i,j}
\alpha_{ri}\alpha_{rj}x_jx_j^\top\\
&\quad
-\frac12\sum_{i,j}
\alpha_{ri}\alpha_{rj}x_ix_j^\top
-\frac12\sum_{i,j}
\alpha_{ri}\alpha_{rj}x_jx_i^\top.
\end{aligned}
\label{eq:app_pairwise_covariance_expansion}
\end{equation}

The first two sums contain terms depending on only one of the indices. Since the attention weights sum to one, they reduce to
\begin{equation}
\begin{aligned}
\sum_{i,j}\alpha_{ri}\alpha_{rj}x_ix_i^\top
&=
\sum_i\alpha_{ri}x_ix_i^\top
\left(\sum_j\alpha_{rj}\right)\\
&=
\sum_i\alpha_{ri}x_ix_i^\top,\\
\sum_{i,j}\alpha_{ri}\alpha_{rj}x_jx_j^\top
&=
\left(\sum_i\alpha_{ri}\right)
\sum_j\alpha_{rj}x_jx_j^\top\\
&=
\sum_j\alpha_{rj}x_jx_j^\top.
\end{aligned}
\label{eq:app_pairwise_second_moment_terms}
\end{equation}
The two cross terms instead factor into products of the attention-weighted means:
\begin{equation}
\begin{aligned}
\sum_{i,j}\alpha_{ri}\alpha_{rj}x_ix_j^\top
&=
\left(\sum_i\alpha_{ri}x_i\right)
\left(\sum_j\alpha_{rj}x_j\right)^\top\\
&=\mu_r\mu_r^\top,\\
\sum_{i,j}\alpha_{ri}\alpha_{rj}x_jx_i^\top
&=
\left(\sum_j\alpha_{rj}x_j\right)
\left(\sum_i\alpha_{ri}x_i\right)^\top\\
&=\mu_r\mu_r^\top.
\end{aligned}
\label{eq:app_pairwise_terms}
\end{equation}
Substituting these identities into Eq.~\eqref{eq:app_pairwise_covariance_expansion}, we obtain
\begin{equation}
\begin{aligned}
S_r
&=
\frac12
\left[
2\sum_i\alpha_{ri}x_ix_i^\top
-2\mu_r\mu_r^\top
\right]\\
&=
\sum_i\alpha_{ri}x_ix_i^\top
-\mu_r\mu_r^\top\\
&=\Sigma_r,
\end{aligned}
\label{eq:app_pairwise_covariance_derivation}
\end{equation}
where the last equality follows from Eq.~\eqref{eq:app_covariance_second_moment}. We therefore obtain the pairwise covariance identity
\begin{equation}
\boxed{
\Sigma_r
=
\frac12\sum_{i,j}
\alpha_{ri}\alpha_{rj}
(x_i-x_j)(x_i-x_j)^\top.
}
\label{eq:app_pairwise_covariance}
\end{equation}

Substituting this identity into the query-gating Jacobian gives
\begin{equation}
\begin{aligned}
J_r^{(q)}
&=
W_{VO}
\left[
\frac12\sum_{i,j}
\alpha_{ri}\alpha_{rj}
(x_i-x_j)(x_i-x_j)^\top
\right]W_{QK}^{\top}\\
&=
\frac12\sum_{i,j}
\alpha_{ri}\alpha_{rj}
W_{VO}(x_i-x_j)(x_i-x_j)^\top
W_{QK}^{\top}.
\end{aligned}
\label{eq:app_pairwise_query_gating}
\end{equation}

This representation makes the role of attended-representation diversity explicit. Each contribution depends on a pairwise difference \(x_i-x_j\), scaled by the corresponding attention weights. If all active attended representations coincide, every difference vanishes, and hence \(\Sigma_r=0\) and \(J_r^{(q)}=0\). The routing distribution may still be sensitive to the query in principle, but redistributing attention among identical representations cannot change the query-gating output. More generally, small pairwise differences constrain the strength of the response, as quantified next.

\paragraph{A geometric upper bound.}
The pairwise covariance identity allows us to relate the magnitude of query gating to two properties of the attended representations: their geometric diversity and the concentration of the attention distribution. Define
\[
\operatorname{diam}_r
:=
\max_{i,j\in\mathcal A_r}\|x_i-x_j\|_2,
\qquad
\kappa_r:=\sum_i\alpha_{ri}^2.
\]
The first quantity is the diameter of the active attended representations. The second measures attention concentration, with \(\kappa_r=1\) for one-hot attention and smaller values when attention is spread across multiple tokens.

To derive the bound, we start from Eq.~\eqref{eq:app_pairwise_covariance} and apply the triangle inequality for the spectral norm. Since every \(\alpha_{ri}\alpha_{rj}\) is nonnegative,
\begin{equation}
\begin{aligned}
\|\Sigma_r\|_2
&=
\left\|
\frac12\sum_{i,j}
\alpha_{ri}\alpha_{rj}
(x_i-x_j)(x_i-x_j)^\top
\right\|_2\\
&\leq
\frac12\sum_{i,j}
\alpha_{ri}\alpha_{rj}
\left\|
(x_i-x_j)(x_i-x_j)^\top
\right\|_2\\
&=
\frac12\sum_{i,j}
\alpha_{ri}\alpha_{rj}\|x_i-x_j\|_2^2.
\end{aligned}
\label{eq:app_covariance_norm_bound_step1}
\end{equation}
The last equality uses the rank-one identity \(\|aa^\top\|_2=\|a\|_2^2\), which follows because \(aa^\top\) has a single nonzero eigenvalue equal to \(a^\top a\).

The terms with \(i=j\) vanish because \(x_i-x_i=0\). For the remaining terms, the squared distance is bounded by \(\operatorname{diam}_r^2\). Therefore,
\begin{equation}
\begin{aligned}
\|\Sigma_r\|_2
&\leq
\frac12\sum_{i\neq j}
\alpha_{ri}\alpha_{rj}\|x_i-x_j\|_2^2\\
&\leq
\frac{\operatorname{diam}_r^2}{2}
\sum_{i\neq j}\alpha_{ri}\alpha_{rj}.
\end{aligned}
\label{eq:app_covariance_diameter_bound}
\end{equation}

The remaining sum measures the total probability mass assigned to pairs of distinct tokens. We can express this quantity through the attention concentration:
\begin{equation}
\begin{aligned}
\sum_{i\neq j}\alpha_{ri}\alpha_{rj}
&=
\sum_{i,j}\alpha_{ri}\alpha_{rj}
-\sum_i\alpha_{ri}^2\\
&=
\left(\sum_i\alpha_{ri}\right)
\left(\sum_j\alpha_{rj}\right)
-\sum_i\alpha_{ri}^2\\
&=
1-\kappa_r.
\end{aligned}
\label{eq:app_attention_concentration_identity}
\end{equation}
Combining the last two equations gives
\begin{equation}
\boxed{
\|\Sigma_r\|_2
\leq
\frac{\operatorname{diam}_r^2}{2}(1-\kappa_r).
}
\label{eq:app_covariance_norm_bound}
\end{equation}

It remains to transfer this covariance bound to the query-gating Jacobian. By submultiplicativity of the spectral norm,
\begin{equation}
\begin{aligned}
\|J_r^{(q)}\|_2
&=
\|W_{VO}\Sigma_rW_{QK}^{\top}\|_2\\
&\leq
\|W_{VO}\|_2
\|\Sigma_rW_{QK}^{\top}\|_2\\
&\leq
\|W_{VO}\|_2
\|\Sigma_r\|_2
\|W_{QK}^{\top}\|_2\\
&=
\|W_{VO}\|_2
\|\Sigma_r\|_2
\|W_{QK}\|_2.
\end{aligned}
\label{eq:app_query_gating_norm_bound}
\end{equation}
The final equality follows from the invariance of the spectral norm under transposition. Substituting Eq.~\eqref{eq:app_covariance_norm_bound} yields
\begin{equation}
\begin{aligned}
\|J_r^{(q)}\|_2
&\leq
\|W_{VO}\|_2\|W_{QK}\|_2
\frac{\operatorname{diam}_r^2}{2}(1-\kappa_r)\\
&=
\frac12\|W_{VO}\|_2\|W_{QK}\|_2
\operatorname{diam}_r^2(1-\kappa_r).
\end{aligned}
\end{equation}
Hence,
\begin{equation}
\boxed{
\|J_r^{(q)}\|_2
\leq
\frac12
\|W_{VO}\|_2
\|W_{QK}\|_2
\operatorname{diam}_r^2(1-\kappa_r).
}
\label{eq:app_query_gating_geometric_bound}
\end{equation}

The bound identifies two limiting regimes in which query gating becomes weak. When the attended representations approach one another, their diameter decreases, suppressing the covariance through the squared-distance factor. When attention becomes nearly one-hot, \(1-\kappa_r\) approaches zero, reducing the effect of reallocating probability mass among different representations. These conclusions concern the corresponding limits with bounded representations and transformation norms. Conversely, a large diameter and diffuse attention do not guarantee a large response, because the query-key and value-output transformations may eliminate or attenuate the relevant directions.

\paragraph{Geometric interpretation.}
The preceding results provide a geometric description of how query gating responds to representation perturbations. For a perturbation \(v\), the operator \(W_{QK}^{\top}\) determines the direction in which the query attempts to change its routing preferences. The covariance \(\Sigma_r\) then restricts this response to variation that actually exists among the attended representations. Finally, \(W_{VO}\) maps the surviving variation into the output space.

This interpretation explains why the range restriction is generally an inclusion rather than an equality. Although \(\operatorname{Range}(\Sigma_r)=\mathcal U_r\), the surrounding transformations can remove additional directions. The operator \(W_{QK}^{\top}\) may fail to reach some directions in \(\mathcal U_r\), while \(W_{VO}\) may map some nonzero covariance responses to zero. Consequently, query gating may operate in a subspace of substantially smaller dimension than the attended variation subspace.

A simple two-dimensional example illustrates this geometry. Consider two equally attended representations,
\[
x_1=
\begin{pmatrix}
-1\\0
\end{pmatrix},
\qquad
x_2=
\begin{pmatrix}
1\\0
\end{pmatrix},
\qquad
\alpha_{r1}=\alpha_{r2}=\frac12.
\]
Their attention-weighted mean is
\[
\begin{aligned}
\mu_r
&=
\alpha_{r1}x_1+\alpha_{r2}x_2\\
&=
\frac12
\begin{pmatrix}
-1\\0
\end{pmatrix}
+
\frac12
\begin{pmatrix}
1\\0
\end{pmatrix}\\
&=
\begin{pmatrix}
0\\0
\end{pmatrix}.
\end{aligned}
\]
Since the weighted mean is zero, the covariance becomes
\begin{equation}
\begin{aligned}
\Sigma_r
&=
\frac12x_1x_1^\top+\frac12x_2x_2^\top\\
&=
\frac12
\begin{pmatrix}
1&0\\
0&0
\end{pmatrix}
+
\frac12
\begin{pmatrix}
1&0\\
0&0
\end{pmatrix}\\
&=
\begin{pmatrix}
1&0\\
0&0
\end{pmatrix}.
\end{aligned}
\label{eq:app_query_gating_2d_example}
\end{equation}
Thus, the covariance has rank one, and its range is
\[
\mathcal U_r
=
\operatorname{span}
\left\{
\begin{pmatrix}
1\\0
\end{pmatrix}
\right\}.
\]
The attended representations vary only along the horizontal axis. To isolate the covariance effect, suppose \(W_{QK}=W_{VO}=I\), giving \(J_r^{(q)}=\Sigma_r\).

For a horizontal perturbation \(v_x=(1,0)^\top\), the query-gating response is
\[
\begin{aligned}
J_r^{(q)}v_x
&=
\begin{pmatrix}
1&0\\
0&0
\end{pmatrix}
\begin{pmatrix}
1\\0
\end{pmatrix}\\
&=
\begin{pmatrix}
1\\0
\end{pmatrix}.
\end{aligned}
\]
The response is nonzero because the perturbation acts along the direction in which the two attended representations differ. In contrast, for a vertical perturbation \(v_y=(0,1)^\top\),
\[
\begin{aligned}
J_r^{(q)}v_y
&=
\begin{pmatrix}
1&0\\
0&0
\end{pmatrix}
\begin{pmatrix}
0\\1
\end{pmatrix}\\
&=
\begin{pmatrix}
0\\0
\end{pmatrix}.
\end{aligned}
\]
Although the query representation changes, the two attended representations have no variation along the vertical direction. The covariance therefore removes the perturbation, producing no query-gating response.

The geometric norm bound is also exact in this example. The representation diameter and attention concentration are
\[
\begin{aligned}
\operatorname{diam}_r
&=\|x_1-x_2\|_2=2,\\
\kappa_r
&=\alpha_{r1}^2+\alpha_{r2}^2
=\frac14+\frac14=\frac12.
\end{aligned}
\]
Since both transformation matrices are identities, Eq.~\eqref{eq:app_query_gating_geometric_bound} gives
\[
\begin{aligned}
\|J_r^{(q)}\|_2
&\leq
\frac12(1)(1)(2^2)
\left(1-\frac12\right)\\
&=1.
\end{aligned}
\]
Directly from Eq.~\eqref{eq:app_query_gating_2d_example}, we have \(\|J_r^{(q)}\|_2=1\), so the bound is attained. The example therefore illustrates both the exact directional restriction imposed by the covariance and the dependence of query-gating magnitude on representation separation and attention concentration.

Taken together, these results establish that query gating is governed by the geometry of the representations currently receiving attention. The attended variation subspace determines the covariance support, the surrounding query-key and value-output transformations constrain which of those directions remain accessible, and the pairwise covariance identity quantifies how representation diversity and attention concentration bound the response magnitude. These results characterize where query gating can act and how strongly it can act. They do not, however, determine whether the resulting perturbation amplifies or suppresses a representation difference. That signed effect depends on how query gating interacts with value transport and key gating, motivating the subsequent analysis of directional growth and regime switching.

\subsection{Directional Routing Alignment}
\label{subsec:app_directional_alignment}

We now ask a complementary question: once a direction \(v\) is fixed, does gating reinforce or oppose motion along that direction? This signed directional effect is important because the same routing mechanism can strengthen one representation difference while suppressing another. We return to the state-dependent mixture formulation from Section~\ref{subsec:app_transport_proof}. From this point onward, we consider representation maps from \(\mathbb{R}^d\) to \(\mathbb{R}^d\), so that input and output perturbations lie in the same representation space. Recall that
\[
F(x)
=
\sum_{i=1}^{n}
\alpha_i(x)f_i(x),
\]
with gating Jacobian
\begin{equation}
J_{\mathrm{gating}}
=
\sum_{i=1}^{n}
\alpha_i
(f_i-F)
(\nabla s_i-\bar g)^\top,
\qquad
\bar g
:=
\sum_{j=1}^{n}\alpha_j\nabla s_j.
\label{eq:app_directional_gating_jacobian}
\end{equation}
When specialized to self-attention, \(J_{\mathrm{gating}}\) collects the state-dependent routing contributions, including query and key gating. Let \(M=M^\top\succ0\) be the representation metric and let \(v\in\mathbb{R}^d\setminus\{0\}\) be a representation direction. We define the signed directional gating contribution as
\begin{equation}
\mathcal{I}_{\mathrm{gate}}(v)
:=
v^\top M J_{\mathrm{gating}}v.
\label{eq:app_directional_gating_scalar}
\end{equation}
The sign of \(\mathcal{I}_{\mathrm{gate}}(v)\) indicates whether the first-order gating response reinforces or opposes displacement along \(v\): \(\mathcal{I}_{\mathrm{gate}}(v)>0\) is reinforcing, \(\mathcal{I}_{\mathrm{gate}}(v)<0\) is opposing, and \(\mathcal{I}_{\mathrm{gate}}(v)=0\) gives no signed first-order contribution along that direction. Substituting Eq.~\eqref{eq:app_directional_gating_jacobian} into Eq.~\eqref{eq:app_directional_gating_scalar} yields
\begin{equation}
\begin{aligned}
\mathcal{I}_{\mathrm{gate}}(v)
&=
v^\top M
\left[
\sum_{i=1}^{n}
\alpha_i
(f_i-F)
(\nabla s_i-\bar g)^\top
\right]
v \\
&=
\sum_{i=1}^{n}
\alpha_i
\left[
v^\top M(f_i-F)
\right]
\left[
(\nabla s_i-\bar g)^\top v
\right].
\end{aligned}
\label{eq:app_directional_gating_expansion}
\end{equation}

For each candidate \(i\), we define
\begin{equation}
a_i(v)
:=
v^\top M(f_i-F),
\qquad
b_i(v)
:=
(\nabla s_i-\bar g)^\top v.
\label{eq:app_directional_ab}
\end{equation}
The quantity \(a_i(v)\) measures where candidate \(f_i\) lies relative to the current mixture output \(F\) along direction \(v\). In contrast, \(b_i(v)\) measures how a perturbation along \(v\) changes the relative routing preference for candidate \(i\). With this notation,
\begin{equation}
\mathcal{I}_{\mathrm{gate}}(v)
=
\sum_{i=1}^{n}
\alpha_i a_i(v)b_i(v).
\label{eq:app_directional_gating_ab}
\end{equation}

Both directional quantities are centered under the routing distribution
\(\{\alpha_i\}_{i=1}^{n}\). Since the softmax weights satisfy
\(\alpha_i\geq0\) and \(\sum_i\alpha_i=1\), they define a discrete probability
distribution over the candidates. For any candidate-dependent scalar quantity
\(c_i\), we write its expectation under this routing distribution as
\[
\mathbb{E}_{\alpha}[c]
:=
\sum_{i=1}^{n}\alpha_i c_i.
\]

For the directional candidate displacement \(a_i(v)\), we have
\begin{equation}
\begin{aligned}
\mathbb{E}_{\alpha}[a(v)]
&=
\sum_{i=1}^{n}\alpha_i a_i(v) \\
&=
\sum_{i=1}^{n}
\alpha_i
v^\top M(f_i-F) \\
&=
v^\top M
\left[
\sum_{i=1}^{n}\alpha_if_i
-
F\sum_{i=1}^{n}\alpha_i
\right] \\
&=
v^\top M(F-F) \\
&=
0.
\end{aligned}
\label{eq:app_directional_a_centered}
\end{equation}
The equality follows because \(F=\sum_i\alpha_if_i\) is the routing-weighted
mean of the candidate representations. Thus \(f_i-F\) is already a centered
candidate displacement, and its weighted average is zero. Similarly, for the directional routing response \(b_i(v)\),
\begin{equation}
\begin{aligned}
\mathbb{E}_{\alpha}[b(v)]
&=
\sum_{i=1}^{n}\alpha_i b_i(v) \\
&=
\sum_{i=1}^{n}
\alpha_i
(\nabla s_i-\bar g)^\top v \\
&=
\left[
\sum_{i=1}^{n}\alpha_i\nabla s_i
-
\bar g\sum_{i=1}^{n}\alpha_i
\right]^\top v \\
&=
(\bar g-\bar g)^\top v \\
&=
0,
\end{aligned}
\label{eq:app_directional_b_centered}
\end{equation}
where \(\bar g=\sum_i\alpha_i\nabla s_i\) is the routing-weighted mean score
gradient. Hence \(\nabla s_i-\bar g\) is also centered under the same routing
distribution. The weighted covariance between \(a_i(v)\) and \(b_i(v)\) is defined as
\begin{equation}
\operatorname{Cov}_{\alpha}
\left(
a(v),b(v)
\right)
:=
\sum_{i=1}^{n}
\alpha_i
\left(
a_i(v)-\mathbb{E}_{\alpha}[a(v)]
\right)
\left(
b_i(v)-\mathbb{E}_{\alpha}[b(v)]
\right).
\label{eq:app_weighted_covariance_definition}
\end{equation}
Because both weighted expectations are zero, this simplifies to
\begin{equation}
\begin{aligned}
\operatorname{Cov}_{\alpha}
\left(
a(v),b(v)
\right)
&=
\sum_{i=1}^{n}
\alpha_i
\left(
a_i(v)-0
\right)
\left(
b_i(v)-0
\right) \\
&=
\sum_{i=1}^{n}
\alpha_i
a_i(v)b_i(v).
\end{aligned}
\label{eq:app_weighted_covariance_reduction}
\end{equation}
Comparing Eq.~\eqref{eq:app_weighted_covariance_reduction} with
Eq.~\eqref{eq:app_directional_gating_ab}, we obtain
\begin{equation}
\boxed{
\mathcal{I}_{\mathrm{gate}}(v)
=
\operatorname{Cov}_{\alpha}
\left(
a(v),b(v)
\right).
}
\label{eq:app_directional_gating_covariance}
\end{equation}

This covariance identity gives a direct interpretation of the sign of the
directional gating effect. Recall that \(a_i(v)\) measures where candidate
\(f_i\) lies relative to the current mixture output along direction \(v\),
whereas \(b_i(v)\) measures whether a perturbation along \(v\) increases or
decreases the relative routing preference for that candidate. If candidates
with \(a_i(v)>0\) tend to have \(b_i(v)>0\), while candidates with
\(a_i(v)<0\) tend to have \(b_i(v)<0\), then the two quantities vary together,
their weighted covariance is positive, and gating reinforces motion along
\(v\). Conversely, if routing increasingly favors candidates lying on the
opposite side of \(v\), then \(a_i(v)\) and \(b_i(v)\) tend to have opposite
signs, the covariance becomes negative, and gating opposes motion along
\(v\). To separate magnitude from sign, define the \emph{directional candidate diversity}
\begin{equation}
D_v
:=
\left(
\sum_{i=1}^{n}
\alpha_i a_i(v)^2
\right)^{1/2},
\label{eq:app_directional_diversity}
\end{equation}
the \emph{directional routing sensitivity}
\begin{equation}
S_v
:=
\left(
\sum_{i=1}^{n}
\alpha_i b_i(v)^2
\right)^{1/2},
\label{eq:app_directional_sensitivity}
\end{equation}
and, whenever \(D_vS_v>0\), the \emph{directional routing alignment}
\begin{equation}
\rho_v
:=
\frac{
\sum_{i=1}^{n}
\alpha_i a_i(v)b_i(v)
}{
D_vS_v
}.
\label{eq:app_directional_alignment}
\end{equation}

To characterize the range of \(\rho_v\), apply the weighted
Cauchy--Schwarz inequality to the two weighted sequences
\(\{\sqrt{\alpha_i}a_i(v)\}_{i=1}^{n}\) and
\(\{\sqrt{\alpha_i}b_i(v)\}_{i=1}^{n}\):
\begin{equation}
\begin{aligned}
\left|
\sum_{i=1}^{n}
\alpha_i a_i(v)b_i(v)
\right|^2
&=
\left|
\sum_{i=1}^{n}
\left(\sqrt{\alpha_i}a_i(v)\right)
\left(\sqrt{\alpha_i}b_i(v)\right)
\right|^2 \\
&\leq
\left(
\sum_{i=1}^{n}
\alpha_i a_i(v)^2
\right)
\left(
\sum_{i=1}^{n}
\alpha_i b_i(v)^2
\right) \\
&=
D_v^2S_v^2.
\end{aligned}
\label{eq:app_directional_cauchy_schwarz}
\end{equation}

Taking the nonnegative square root of both sides gives
\begin{equation}
\left|
\sum_{i=1}^{n}
\alpha_i a_i(v)b_i(v)
\right|
\leq
D_vS_v.
\label{eq:app_directional_cauchy_schwarz_root}
\end{equation}
Since \(D_vS_v>0\), dividing both sides by \(D_vS_v\) yields
\begin{equation}
\begin{aligned}
|\rho_v|
&=
\left|
\frac{
\sum_{i=1}^{n}\alpha_i a_i(v)b_i(v)
}{
D_vS_v
}
\right| \\
&=
\frac{
\left|
\sum_{i=1}^{n}\alpha_i a_i(v)b_i(v)
\right|
}{
D_vS_v
} \\
&\leq
1.
\end{aligned}
\label{eq:app_directional_alignment_bound_derivation}
\end{equation}
Therefore,
\begin{equation}
-1
\leq
\rho_v
\leq
1.
\label{eq:app_directional_alignment_bound}
\end{equation}

Equivalently, \(\rho_v\) is the cosine similarity between the two weighted
vectors
\[
\widetilde a(v)
:=
\left(
\sqrt{\alpha_1}a_1(v),\ldots,
\sqrt{\alpha_n}a_n(v)
\right)^\top
\]
and
\[
\widetilde b(v)
:=
\left(
\sqrt{\alpha_1}b_1(v),\ldots,
\sqrt{\alpha_n}b_n(v)
\right)^\top.
\]
Indeed,
\begin{equation}
\begin{aligned}
\rho_v
&=
\frac{
\widetilde a(v)^\top\widetilde b(v)
}{
\|\widetilde a(v)\|_2
\|\widetilde b(v)\|_2
}.
\end{aligned}
\label{eq:app_directional_alignment_cosine}
\end{equation}
Hence \(\rho_v=1\) corresponds to perfect directional alignment,
\(\rho_v=-1\) to perfect anti-alignment, and \(\rho_v=0\) to zero weighted
linear alignment between candidate displacement and routing response. We now recover the directional gating contribution. From
Eq.~\eqref{eq:app_directional_gating_ab},
\begin{equation}
\mathcal{I}_{\mathrm{gate}}(v)
=
\sum_{i=1}^{n}
\alpha_i a_i(v)b_i(v).
\label{eq:app_directional_gating_recall}
\end{equation}
When \(D_vS_v>0\), rearranging the definition of \(\rho_v\) gives
\begin{equation}
\begin{aligned}
\rho_v
&=
\frac{
\sum_{i=1}^{n}
\alpha_i a_i(v)b_i(v)
}{
D_vS_v
} \\
\Longrightarrow\qquad
\sum_{i=1}^{n}
\alpha_i a_i(v)b_i(v)
&=
\rho_vD_vS_v.
\end{aligned}
\label{eq:app_directional_alignment_rearrangement}
\end{equation}
Combining this identity with
\(\mathcal{I}_{\mathrm{gate}}(v)=v^\top M J_{\mathrm{gating}}v\) therefore yields
\begin{equation}
\boxed{
v^\top M J_{\mathrm{gating}}v
=
\mathcal{I}_{\mathrm{gate}}(v)
=
\rho_vD_vS_v.
}
\label{eq:app_directional_gating_law}
\end{equation}

The same identity also remains valid when \(D_vS_v=0\). In that case,
either \(D_v=0\) or \(S_v=0\). If \(D_v=0\), then
\(\sum_i\alpha_i a_i(v)^2=0\), which implies
\(a_i(v)=0\) for every candidate with \(\alpha_i>0\). Similarly, if
\(S_v=0\), then \(b_i(v)=0\) for every candidate with positive routing
weight. In either case,
\[
\sum_i\alpha_i a_i(v)b_i(v)=0,
\]
and because we define \(\rho_v=0\), the right-hand side
\(\rho_vD_vS_v\) is also zero. Thus
Eq.~\eqref{eq:app_directional_gating_law} holds in all cases. Equation~\eqref{eq:app_directional_gating_law} separates three distinct requirements for a strong directional gating effect. \(D_v\) measures whether the available candidate representations contain meaningful variation along \(v\); \(S_v\) measures whether routing is locally responsive to perturbations along \(v\); and \(\rho_v\) determines whether this routing response favors candidates that reinforce or oppose the direction.

\begin{corollary}[Three Mechanisms of Weak Gating]
\label{cor:weak_gating}
If $D_v = 0$, $S_v = 0$, or $\rho_v = 0$, then
\[
I_{\mathrm{gate}}(v) = 0.
\]
\end{corollary}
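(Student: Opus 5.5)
The plan is to read the corollary off the factorization already established in Eq.~\eqref{eq:app_directional_gating_law}, namely \(\mathcal{I}_{\mathrm{gate}}(v)=\rho_vD_vS_v\), which the preceding discussion shows holds in all cases. This includes the degenerate case \(D_vS_v=0\), where \(\rho_v\) is set to zero by convention. Since the statement writes \(I_{\mathrm{gate}}\), I will first note that this is the same quantity as \(\mathcal{I}_{\mathrm{gate}}(v)=v^\top MJ_{\mathrm{gating}}v\) from Eq.~\eqref{eq:app_directional_gating_scalar}.

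I will then split into the three hypotheses. If \(\rho_v=0\), the product \(\rho_vD_vS_v\) vanishes immediately; this uses only the identity in its nondegenerate form, where \(\rho_v\) is defined by Eq.~\eqref{eq:app_directional_alignment}. If \(D_v=0\) or \(S_v=0\), I will avoid relying on the convention for \(\rho_v\) and argue directly from the covariance form \(\mathcal{I}_{\mathrm{gate}}(v)=\sum_i\alpha_ia_i(v)b_i(v)\) in Eq.~\eqref{eq:app_directional_gating_ab}.

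For the case \(D_v=0\), I will observe that \(\sum_i\alpha_ia_i(v)^2=0\) with nonnegative summands. This forces \(a_i(v)=0\) whenever \(\alpha_i>0\), so every term \(\alpha_ia_i(v)b_i(v)\) is zero: either \(\alpha_i=0\) or \(a_i(v)=0\). The case \(S_v=0\) is symmetric, with \(b_i\) in place of \(a_i\). As an alternative route, the weighted Cauchy--Schwarz bound \eqref{eq:app_directional_cauchy_schwarz_root}, \(|\mathcal{I}_{\mathrm{gate}}(v)|\le D_vS_v=0\), gives both cases at once.

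There is no genuine obstacle here. The only point requiring care is ensuring that the \(D_v=0\) and \(S_v=0\) cases do not depend circularly on the convention \(\rho_v:=0\). Using Cauchy--Schwarz directly avoids this.
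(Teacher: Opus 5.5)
Your proposal is correct and follows essentially the same route as the paper, which reads the corollary off the factorization $\mathcal{I}_{\mathrm{gate}}(v)=\rho_vD_vS_v$ after checking the degenerate case $D_vS_v=0$ exactly as you do: $D_v=0$ (resp.\ $S_v=0$) forces $a_i(v)=0$ (resp.\ $b_i(v)=0$) wherever $\alpha_i>0$. Your Cauchy--Schwarz alternative is also in the paper as Eq.~\eqref{eq:app_directional_cauchy_schwarz_root}, and it handles the $D_v=0$ and $S_v=0$ cases without relying on the convention $\rho_v:=0$.
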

These cases have different interpretations. \(D_v=0\) means that the candidates are indistinguishable along \(v\), so changing their routing weights cannot move the representation in that direction. \(S_v=0\) means that routing itself is locally insensitive to \(v\). Finally, \(\rho_v=0\) means that candidate displacement and routing response are individually present but cancel because they are directionally unaligned. This distinction is useful diagnostically: an observed lack of representation change may arise because the relevant information is absent, because routing does not respond to it, or because routing responds in a directionally neutral manner.

\begin{corollary}[Directional Upper Bound]
\label{cor:directional_upper_bound}
For any direction $v \neq 0$,
\[
|I_{\mathrm{gate}}(v)| \leq D_v S_v.
\]
\end{corollary}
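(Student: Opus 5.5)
The plan is to read the bound off the factorization already established in Eq.~\eqref{eq:app_directional_gating_law}, together with the weighted Cauchy--Schwarz estimate in Eq.~\eqref{eq:app_directional_cauchy_schwarz_root}. First, we rewrite the signed contribution as $\mathcal{I}_{\mathrm{gate}}(v)=\sum_i\alpha_i a_i(v)b_i(v)$ using Eq.~\eqref{eq:app_directional_gating_ab}. This expression holds for every nonzero $v$ and does not depend on whether $D_vS_v$ vanishes.

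Next, we split into two cases. If $D_vS_v>0$, the identity $\mathcal{I}_{\mathrm{gate}}(v)=\rho_vD_vS_v$ and the range bound $|\rho_v|\le 1$ from Eq.~\eqref{eq:app_directional_alignment_bound} give $|\mathcal{I}_{\mathrm{gate}}(v)|=|\rho_v|D_vS_v\le D_vS_v$ at once. If $D_vS_v=0$, we use the discussion following Eq.~\eqref{eq:app_directional_gating_law}. There, $D_v=0$ forces $a_i(v)=0$ on every candidate with $\alpha_i>0$, and similarly $S_v=0$ forces $b_i(v)=0$ on that support. Either way the sum vanishes, so $|\mathcal{I}_{\mathrm{gate}}(v)|=0\le D_vS_v$. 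This case is also covered by Corollary~\ref{cor:weak_gating}.

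The cleanest route avoids the case split altogether. We apply Cauchy--Schwarz directly to the weighted vectors $\widetilde a(v)$ and $\widetilde b(v)$, whose Euclidean norms are exactly $D_v$ and $S_v$. Since all weights satisfy $\alpha_i\ge0$, the square roots $\sqrt{\alpha_i}$ are well-defined, and Eq.~\eqref{eq:app_directional_cauchy_schwarz_root} is precisely the claimed inequality. I expect no real obstacle. The only point that needs care is the degenerate case, where $\rho_v$ is defined by convention; the direct Cauchy--Schwarz argument sidesteps this convention, so I would present that as the main proof and the $\rho_v$ form as the interpretation.
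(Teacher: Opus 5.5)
Your proposal is correct and matches the paper's argument: the corollary is read off from the weighted Cauchy--Schwarz bound \eqref{eq:app_directional_cauchy_schwarz_root} applied to $\mathcal{I}_{\mathrm{gate}}(v)=\sum_i\alpha_i a_i(v)b_i(v)$, or equivalently from $\mathcal{I}_{\mathrm{gate}}(v)=\rho_vD_vS_v$ with $|\rho_v|\le 1$ and the convention $\rho_v=0$ when $D_vS_v=0$. Leading with the direct Cauchy--Schwarz form is sensible, because that inequality never uses $D_vS_v>0$ and so covers the degenerate case without appealing to the convention.
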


Thus diversity and routing sensitivity determine the maximum possible magnitude of the local gating effect, while \(\rho_v\) determines how much of this potential effect is realized with positive or negative sign.

\paragraph{Example: reinforcing versus suppressing routing.}
Consider a one-dimensional directional view with two equally weighted candidates, \(\alpha_1=\alpha_2=1/2\). Suppose their projected displacements along \(v\) are
\[
a_1(v)=1,
\qquad
a_2(v)=-1.
\]
The two candidates therefore lie symmetrically on opposite sides of the current mixture output along \(v\). The directional diversity is
\begin{equation}
\begin{aligned}
D_v
&=
\left[
\frac{1}{2}(1)^2
+
\frac{1}{2}(-1)^2
\right]^{1/2} \\
&=
1.
\end{aligned}
\label{eq:app_directional_example_diversity}
\end{equation}

Now suppose that a perturbation along \(v\) increases the relative score of the positive-side candidate and decreases that of the negative-side candidate:
\[
b_1(v)=s,
\qquad
b_2(v)=-s,
\qquad
s>0.
\]
Then
\begin{equation}
\begin{aligned}
S_v
&=
\left[
\frac{1}{2}s^2
+
\frac{1}{2}(-s)^2
\right]^{1/2} \\
&=
s,
\end{aligned}
\label{eq:app_directional_example_sensitivity}
\end{equation}
and
\begin{equation}
\begin{aligned}
\rho_v
&=
\frac{
\frac{1}{2}(1)(s)
+
\frac{1}{2}(-1)(-s)
}{
(1)(s)
} \\
&=
1.
\end{aligned}
\label{eq:app_directional_example_positive_alignment}
\end{equation}
Therefore,
\begin{equation}
\mathcal{I}_{\mathrm{gate}}(v)
=
\rho_vD_vS_v
=
s
>
0.
\label{eq:app_directional_example_positive}
\end{equation}
Routing shifts mass toward the candidate lying in the positive \(v\)-direction and away from the candidate lying in the negative direction, so gating reinforces the perturbation. If instead the routing response is reversed,
\[
b_1(v)=-s,
\qquad
b_2(v)=s,
\]
then
\begin{equation}
\begin{aligned}
\rho_v
&=
\frac{
\frac{1}{2}(1)(-s)
+
\frac{1}{2}(-1)(s)
}{
(1)(s)
} \\
&=
-1,
\end{aligned}
\label{eq:app_directional_example_negative_alignment}
\end{equation}
and hence
\begin{equation}
\mathcal{I}_{\mathrm{gate}}(v)
=
-s
<
0.
\label{eq:app_directional_example_negative}
\end{equation}
The candidate geometry and routing sensitivity are unchanged: \(D_v=1\) and \(S_v=s\) in both cases. Only the alignment changes. Nevertheless, the qualitative behavior reverses from reinforcement to suppression. This example illustrates why large candidate diversity and large routing sensitivity alone are insufficient to determine directional growth; their alignment is essential.

\paragraph{Implications for downstream interventions.}
The decomposition above provides a local diagnostic for several downstream settings in which an intervention is useful only if its representation effect is not immediately suppressed. Consider, for example, a candidate steering or editing direction \(v\). A positive value of \(\mathcal{I}_{\mathrm{gate}}(v)\) indicates that state-dependent routing locally reinforces this intervention, whereas a negative value indicates that routing locally opposes it. The quantities \(D_v\), \(S_v\), and \(\rho_v\) further identify why: the relevant feature may be absent from the available candidate representations, routing may be insensitive to it, or routing may respond but redirect information in the unfavorable direction.

The same distinction is useful for adversarial perturbations. A perturbation direction with large \(S_v\) is not necessarily effective if the candidate representations exhibit little diversity along that direction or if \(\rho_v<0\). Directions with large positive \(\rho_vD_vS_v\) are therefore natural candidates for further downstream evaluation, whereas directions with negative alignment are locally resisted by routing.

For pruning and head-level diagnosis, the decomposition separates routing activity from functional routing effect. A head may exhibit substantial changes in its attention weights, corresponding to non-negligible routing sensitivity, while still having small \(D_v\) because the representations between which it redistributes mass are nearly identical. Such a head can appear dynamically active in attention space while producing little directional representation change. Conversely, a head with substantial \(D_vS_v\) and positive alignment may have a larger functional effect despite less visually dramatic attention changes.

For robustness, the same factorization identifies distinct ways of reducing sensitivity to an undesirable direction: decreasing the available candidate diversity \(D_v\), decreasing routing sensitivity \(S_v\), or reducing the directional alignment \(|\rho_v|\). These mechanisms are not equivalent, and distinguishing them can help separate representational causes of instability from routing-induced causes.

These interpretations remain local. A positive gating contribution at one layer does not by itself guarantee that the corresponding representation difference will survive to the model output, because subsequent layers may rotate or suppress the direction. Likewise, a large local value of \(\mathcal{I}_{\mathrm{gate}}(v)\) does not by itself guarantee a label change or successful steering outcome. Downstream behavior additionally depends on later Jacobian composition
and output alignment, which lie beyond the scope of the present
local analysis.

Most importantly, Eq.~\eqref{eq:app_directional_gating_law} evaluates gating after a direction \(v\) has already been specified. Our central theoretical question is stronger: whether there exists any direction for which gating is sufficiently strong to overturn contraction induced by transport, and which direction maximizes this effect. The next section converts the directional analysis above into a spectral characterization of such gating-induced regime switches.

\subsection{Extensions to Modern Transformer Blocks}
\label{subsec:app_modern_blocks}

The preceding analysis used a single-head dot-product attention map to expose the transport--gating mechanism in its simplest form. We now show how the same decomposition extends to the components commonly surrounding self-attention in a Transformer block. The central transport--gating distinction is preserved under multi-head aggregation and differentiable composition, although some of the more specific factorizations, such as \(W_{VO}\Sigma_rW_{QK}^{\top}\), must be modified when the score operator itself depends on token position.

\paragraph{Multi-head self-attention.}
Consider \(H\) attention heads. For head \(h\in\{1,\ldots,H\}\), let
\(\alpha_{ri}^{(h)}\) denote the attention weight from query token \(r\) to candidate token \(i\), and let \(W_{VO}^{(h)}\) denote the corresponding value transformation together with the portion of the output projection associated with that head. The multi-head output can then be written as
\begin{equation}
F_r^{\mathrm{MHA}}
=
\sum_{h=1}^{H}
\sum_{i=1}^{n}
\alpha_{ri}^{(h)}
W_{VO}^{(h)}x_i.
\label{eq:app_multihead_output}
\end{equation}
This representation is equivalent to the usual concatenation followed by an output projection, after partitioning the output projection into the blocks associated with individual heads.

Because differentiation is linear, the Jacobian with respect to a perturbation of token \(m\) is the sum of the head-wise Jacobians:
\begin{equation}
\begin{aligned}
J_{rm}^{\mathrm{MHA}}
&:=
\frac{\partial F_r^{\mathrm{MHA}}}{\partial x_m} \\
&=
\sum_{h=1}^{H}
\frac{\partial F_r^{(h)}}{\partial x_m} \\
&=
\sum_{h=1}^{H}
J_{rm}^{(h)}.
\end{aligned}
\label{eq:app_multihead_jacobian}
\end{equation}
For ordinary dot-product attention without position-dependent score transformations, each head satisfies the decomposition derived in Section~\ref{app:attention_jacobian}:
\begin{equation}
\begin{aligned}
J_{rm}^{(h)}
&=
\underbrace{
\alpha_{rm}^{(h)}W_{VO}^{(h)}
}_{\text{value transport}} \\
&\quad+
\underbrace{
\mathbf{1}_{\{m=r\}}
W_{VO}^{(h)}
\Sigma_r^{(h)}
\bigl(W_{QK}^{(h)}\bigr)^\top
}_{\text{query gating}} \\
&\quad+
\underbrace{
\alpha_{rm}^{(h)}
W_{VO}^{(h)}
\bigl(x_m-\mu_r^{(h)}\bigr)
\left[
\bigl(W_{QK}^{(h)}\bigr)^\top x_r
\right]^\top
}_{\text{key gating}},
\end{aligned}
\label{eq:app_multihead_individual_jacobian}
\end{equation}
where
\(\mu_r^{(h)}:=\sum_i\alpha_{ri}^{(h)}x_i\) and
\(\Sigma_r^{(h)}:=\sum_i\alpha_{ri}^{(h)}
(x_i-\mu_r^{(h)})(x_i-\mu_r^{(h)})^\top\).

Thus the complete multi-head Jacobian retains the additive structure
\begin{equation}
\boxed{
J_{rm}^{\mathrm{MHA}}
=
J_{rm}^{\mathrm{transport}}
+
J_{rm}^{\mathrm{gating}},
}
\label{eq:app_multihead_transport_gating}
\end{equation}
with the transport and gating terms obtained by summing the corresponding contributions across heads. Different heads may therefore reinforce, cancel, or act on different representation subspaces.

\paragraph{Pre-normalization and residual connections.}
We next incorporate token-wise normalization. Let
\(\mathcal{N}:\mathbb{R}^d\rightarrow\mathbb{R}^d\) denote either LayerNorm, RMSNorm, or another differentiable normalization map, and define
\[
\widetilde{x}_i
:=
\mathcal{N}(x_i),
\qquad
D_i
:=
\frac{\partial \mathcal{N}(x_i)}{\partial x_i}.
\]
Suppose attention is evaluated on the normalized representations,
\(F_r(\widetilde{x}_1,\ldots,\widetilde{x}_n)\). If
\(\widetilde J_{rm}:=\partial F_r/\partial\widetilde{x}_m\) denotes the attention Jacobian with respect to the normalized token, then the chain rule gives
\begin{equation}
\begin{aligned}
\frac{\partial F_r}{\partial x_m}
&=
\frac{\partial F_r}{\partial\widetilde{x}_m}
\frac{\partial\widetilde{x}_m}{\partial x_m} \\
&=
\widetilde J_{rm}D_m.
\end{aligned}
\label{eq:app_prenorm_attention_jacobian}
\end{equation}
Hence normalization does not remove the transport--gating decomposition; it transforms the input directions through the local normalization Jacobian \(D_m\).

For a pre-normalized residual attention sublayer,
\[
y_r
=
x_r
+
F_r\bigl(\mathcal{N}(x_1),\ldots,\mathcal{N}(x_n)\bigr),
\]
the token-to-token Jacobian becomes
\begin{equation}
\boxed{
\frac{\partial y_r}{\partial x_m}
=
\mathbf{1}_{\{m=r\}}I
+
\widetilde J_{rm}D_m.
}
\label{eq:app_residual_attention_jacobian}
\end{equation}
The identity term propagates the residual stream directly, while
\(\widetilde J_{rm}D_m\) contains the attention transport and gating effects.

For completeness, consider LayerNorm
\[
\mathcal{N}_{\mathrm{LN}}(x)
=
\operatorname{Diag}(\gamma)
\frac{x-\mu(x)\mathbf{1}}{\sigma(x)}
+
\beta,
\]
where
\(\mu(x):=d^{-1}\mathbf{1}^\top x\),
\(\Pi_{\mathrm{ctr}}:=I-d^{-1}\mathbf{1}\mathbf{1}^\top\),
\(c:=\Pi_{\mathrm{ctr}}x\), and
\(\sigma(x):=\sqrt{d^{-1}\|c\|_2^2+\varepsilon}\).
Its Jacobian is
\begin{equation}
D_{\mathrm{LN}}(x)
=
\operatorname{Diag}(\gamma)
\left[
\frac{\Pi_{\mathrm{ctr}}}{\sigma(x)}
-
\frac{cc^\top}{d\,\sigma(x)^3}
\right].
\label{eq:app_layernorm_jacobian}
\end{equation}
For RMSNorm,
\[
\mathcal{N}_{\mathrm{RMS}}(x)
=
\operatorname{Diag}(\gamma)\frac{x}{\eta(x)},
\qquad
\eta(x)
:=
\sqrt{\frac{\|x\|_2^2}{d}+\varepsilon},
\]
and
\begin{equation}
D_{\mathrm{RMS}}(x)
=
\operatorname{Diag}(\gamma)
\left[
\frac{I}{\eta(x)}
-
\frac{xx^\top}{d\,\eta(x)^3}
\right].
\label{eq:app_rmsnorm_jacobian}
\end{equation}
These expressions show explicitly that normalization can rotate and rescale perturbation directions before they enter the attention mechanism. The attention decomposition itself is applied after this local transformation.

\paragraph{Feed-forward and gated feed-forward sublayers.}
The feed-forward network is token-wise and therefore does not introduce additional cross-token routing. Nevertheless, it can transform an attention-induced representation difference before that difference reaches later layers.

For a standard feed-forward map
\[
H(x)
=
W_2\phi(W_1x+b_1)+b_2,
\]
its Jacobian is
\begin{equation}
J_H(x)
=
W_2
\operatorname{Diag}
\left(
\phi'(W_1x+b_1)
\right)
W_1.
\label{eq:app_ffn_jacobian}
\end{equation}

For a gated feed-forward map of the form
\[
H_{\mathrm{gate}}(x)
=
W_2
\left[
\phi(W_gx)
\odot
(W_ux)
\right],
\]
which includes SwiGLU-type structures for the appropriate choice of \(\phi\), define
\(g:=W_gx\) and \(u:=W_ux\). Differentiating the element-wise product gives
\begin{equation}
\begin{aligned}
dH_{\mathrm{gate}}
&=
W_2\,
d\left[
\phi(g)\odot u
\right] \\
&=
W_2
\left[
\operatorname{Diag}(u)
\operatorname{Diag}\bigl(\phi'(g)\bigr)\,dg
+
\operatorname{Diag}\bigl(\phi(g)\bigr)\,du
\right] \\
&=
W_2
\left[
\operatorname{Diag}(u)
\operatorname{Diag}\bigl(\phi'(g)\bigr)W_g
+
\operatorname{Diag}\bigl(\phi(g)\bigr)W_u
\right]dx.
\end{aligned}
\label{eq:app_gated_ffn_differential}
\end{equation}
Hence
\begin{equation}
J_{H_{\mathrm{gate}}}(x)
=
W_2
\left[
\operatorname{Diag}(W_ux)
\operatorname{Diag}\bigl(\phi'(W_gx)\bigr)W_g
+
\operatorname{Diag}\bigl(\phi(W_gx)\bigr)W_u
\right].
\label{eq:app_gated_ffn_jacobian}
\end{equation}
The feed-forward sublayer therefore contributes an additional local transformation of the directions produced by attention, but it does not alter the distinction between value transport and state-dependent attention routing inside the attention sublayer.

\paragraph{Full pre-normalized Transformer block.}
The preceding components can be combined through the chain rule. Let \(X\) denote the vectorized sequence representation, let \(\mathcal{A}\) denote multi-head attention, and let \(\mathcal{H}\) denote the token-wise feed-forward map. A pre-normalized block has the form
\[
Y
=
X+\mathcal{A}(\mathcal{N}_1(X)),
\qquad
Z
=
Y+\mathcal{H}(\mathcal{N}_2(Y)).
\]
Let \(D_1\) and \(D_2\) denote the block-diagonal Jacobians of the two token-wise normalization maps, and let \(J_{\mathcal A}\) and \(J_{\mathcal H}\) denote the Jacobians of attention and the feed-forward map with respect to their normalized inputs. Then
\begin{equation}
\begin{aligned}
J_{Y\leftarrow X}
&=
I+J_{\mathcal A}D_1, \\
J_{Z\leftarrow Y}
&=
I+J_{\mathcal H}D_2,
\end{aligned}
\label{eq:app_block_sub_jacobians}
\end{equation}
and therefore
\begin{equation}
\boxed{
J_{\mathrm{block}}
=
\left(
I+J_{\mathcal H}D_2
\right)
\left(
I+J_{\mathcal A}D_1
\right).
}
\label{eq:app_full_prenorm_block_jacobian}
\end{equation}
If
\(J_{\mathcal A}=J_{\mathrm{transport}}+J_{\mathrm{gating}}\),
then
\begin{equation}
\begin{aligned}
J_{\mathrm{block}}
&=
\left(
I+J_{\mathcal H}D_2
\right)
\left[
I+
\left(
J_{\mathrm{transport}}
+
J_{\mathrm{gating}}
\right)D_1
\right].
\end{aligned}
\label{eq:app_full_block_transport_gating}
\end{equation}
Thus the attention routing effect remains explicitly identifiable inside the full block, while normalization, residual connections, and the feed-forward sublayer determine how that effect is subsequently transformed.

\paragraph{Rotary positional embeddings.}
Rotary positional embeddings require slightly more care because the query--key interaction becomes position dependent. Let \(R_r,R_i\in\mathbb{R}^{d_k\times d_k}\) denote the rotations applied at query position \(r\) and key position \(i\), respectively. Define
\[
Q_r
=
R_rW_Q^\top x_r,
\qquad
k_i
=
R_iW_K^\top x_i.
\]
The attention score becomes
\begin{equation}
\begin{aligned}
s_{ri}
&=
\frac{Q_r^\top k_i}{\sqrt{d_k}} \\
&=
x_r^\top
\left(
\frac{
W_QR_r^\top R_iW_K^\top
}{
\sqrt{d_k}
}
\right)
x_i.
\end{aligned}
\label{eq:app_rope_score}
\end{equation}
Define the position-dependent score operator
\[
A_{ri}
:=
\frac{
W_QR_r^\top R_iW_K^\top
}{
\sqrt{d_k}
}.
\]
Then \(s_{ri}=x_r^\top A_{ri}x_i\), and the query-score gradient is
\begin{equation}
\nabla_{x_r}s_{ri}
=
A_{ri}x_i.
\label{eq:app_rope_query_gradient}
\end{equation}

Because \(A_{ri}\) now depends on the candidate position \(i\), a single common matrix \(W_{QK}\) can no longer, in general, be factored outside the attention-weighted sum. The query-gating Jacobian instead follows directly from the generic gating formula. Define
\[
g_{ri}
:=
A_{ri}x_i,
\qquad
\bar g_r
:=
\sum_i\alpha_{ri}g_{ri}.
\]
Then
\begin{equation}
\begin{aligned}
J_r^{(q)}
&=
\sum_{i=1}^{n}
\alpha_{ri}
W_{VO}(x_i-\mu_r)
(g_{ri}-\bar g_r)^\top \\
&=
W_{VO}
\left[
\sum_{i=1}^{n}
\alpha_{ri}
(x_i-\mu_r)
(g_{ri}-\bar g_r)^\top
\right].
\end{aligned}
\label{eq:app_rope_query_gating}
\end{equation}
The matrix in brackets is an attention-weighted \emph{cross-covariance} between candidate representation variation and query-score-gradient variation. Define
\begin{equation}
C_{xg,r}
:=
\sum_{i=1}^{n}
\alpha_{ri}
(x_i-\mu_r)
(g_{ri}-\bar g_r)^\top.
\label{eq:app_rope_cross_covariance}
\end{equation}
Then
\begin{equation}
\boxed{
J_r^{(q)}
=
W_{VO}C_{xg,r}.
}
\label{eq:app_rope_query_gating_compact}
\end{equation}

The covariance factorization from Section~\ref{subsec:app_gating_geometry} is recovered as a special case. Without position-dependent score rotations, \(A_{ri}=W_{QK}\) for every \(i\), so
\begin{equation}
\begin{aligned}
g_{ri}-\bar g_r
&=
W_{QK}x_i
-
W_{QK}\mu_r \\
&=
W_{QK}(x_i-\mu_r).
\end{aligned}
\label{eq:app_no_rope_gradient_centering}
\end{equation}
Consequently,
\begin{equation}
\begin{aligned}
C_{xg,r}
&=
\sum_i
\alpha_{ri}
(x_i-\mu_r)
\left[
W_{QK}(x_i-\mu_r)
\right]^\top \\
&=
\sum_i
\alpha_{ri}
(x_i-\mu_r)(x_i-\mu_r)^\top
W_{QK}^\top \\
&=
\Sigma_rW_{QK}^\top,
\end{aligned}
\label{eq:app_rope_reduction_to_covariance}
\end{equation}
and therefore
\(J_r^{(q)}=W_{VO}\Sigma_rW_{QK}^\top\), as derived previously. Thus RoPE does not remove the transport--gating mechanism; it replaces the simple covariance factorization by a position-dependent representation--routing cross-covariance.

\paragraph{Grouped-query and multi-query attention.}
Grouped-query attention and multi-query attention share key and/or value projections across multiple query heads. Let \(g(h)\) denote the key--value group associated with query head \(h\). A head-specific score may then be written as
\[
s_{ri}^{(h)}
=
\frac{
\left(Q_r^{(h)}\right)^\top
k_i^{(g(h))}
}{
\sqrt{d_k}
}.
\]
The corresponding output remains a state-dependent weighted mixture, so the head-wise transport--gating decomposition continues to hold. Sharing changes the parameterization of the score and value maps but not the product-rule argument that separates changes in candidate information from changes in routing weights. The full Jacobian is again obtained by summing the head-wise contributions, with the shared key and value projections appearing in the corresponding terms.

\paragraph{Attention masks.}
Causal, padding, or structured masks restrict the candidate set available to a query. If
\(\mathcal{A}_r\) denotes the set of unmasked candidates, then all sums above can be restricted to \(i\in\mathcal{A}_r\). In particular,
\[
\sum_{i\in\mathcal{A}_r}\alpha_{ri}=1,
\]
so the centering, covariance, and transport--gating derivations remain unchanged on the active candidate set. The geometric support of gating is consequently determined by the variation among the representations that are actually available to the query.

\paragraph{Summary.}
The extensions above separate architecture-specific transformations from the underlying routing mechanism. Multi-head aggregation adds head-wise transport and gating contributions; normalization preconditions the perturbation directions; residual connections add a direct identity path; feed-forward sublayers post-process the resulting representation differences; and positional transformations such as RoPE modify the precise score geometry. In all cases, however, the local attention response retains the same fundamental distinction: one component changes the information being transported, while another arises because the state perturbation changes how information is routed. The regime-switch analysis in the next section is formulated at the operator level and therefore applies once these architecture-specific contributions have been incorporated into the corresponding local transport and gating Jacobians.


\section{THEORY OF GATING-INDUCED REGIME SWITCHING}
\label{sec:app_regime_theory}

The previous section showed how transport and gating contribute to the local
attention dynamics. We now ask whether gating can turn a representation direction that contracts under transport alone into one that expands under the full dynamics. To answer this,
we compare two dynamics for the same direction. In the first, the attention
routing is held fixed, so the direction evolves only through transport. In the
second, routing is allowed to change with the representation, so both transport
and gating are present. We call it a \emph{gating-induced regime switch} when a
direction contracts in the transport-only dynamics but expands in the full
dynamics.

The main problem is therefore to determine whether such a direction exists, and
if so, which direction is affected most strongly by gating. We formulate this
comparison using the growth operators of the transport-only and full dynamics.
Their difference measures how much adding gating changes directional growth.
This leads to a spectral criterion that is necessary and sufficient for the
existence of a switching direction, and also identifies the direction for which
gating most strongly overcomes transport-induced contraction. We then derive
conditions under which switching cannot occur, study how many independent
switching directions can exist, and analyze how robust the criterion is when the
relevant operators are estimated approximately.

\subsection{Transport-Only and Full Representation Dynamics}
\label{subsec:app_transport_full_growth}

Recall that the local attention Jacobian separates as
\(J_{\mathrm{full}}
=
J_{\mathrm{transport}}
+
J_{\mathrm{gating}}\).
We compare the growth of the same representation direction under two
counterfactual dynamics: one in which the routing weights are held fixed and
only transport remains, and one in which state-dependent gating is restored.
Using the growth operator from
Section~\ref{subsec:app_local_dynamics}, these two cases are
\begin{equation}
\mathcal{G}_{\mathrm{transport}}^M
=
J_{\mathrm{transport}}^\top
M
J_{\mathrm{transport}}
-M,
\qquad
\mathcal{G}_{\mathrm{full}}^M
=
J_{\mathrm{full}}^\top
M
J_{\mathrm{full}}
-M.
\label{eq:app_transport_full_growth_operators}
\end{equation}

\paragraph{Definition~\ref{def:gating_regime_switch} (restated): Gating-Induced Regime Switch.}
A nonzero direction \(v\) undergoes a
\emph{gating-induced regime switch} if
\begin{equation}
v^\top\mathcal{G}_{\mathrm{transport}}^M v<0,
\qquad
v^\top\mathcal{G}_{\mathrm{full}}^M v>0.
\label{eq:app_regime_switch_definition}
\end{equation}
Thus, the direction contracts when routing is fixed but expands once
state-dependent gating is restored.

To quantify how restoring gating changes directional growth, define
\begin{equation}
\Delta\mathcal{G}_{\mathrm{gating}}^M
:=
\mathcal{G}_{\mathrm{full}}^M
-
\mathcal{G}_{\mathrm{transport}}^M.
\label{eq:app_gating_growth_correction}
\end{equation}

\medskip

\noindent\textbf{Proposition~\ref{prop:gating_growth_correction}.}
\label{prop:app_exact_gating_growth_correction}
The change in the growth operator induced by gating is
\begin{equation}
\begin{aligned}
\Delta\mathcal{G}_{\mathrm{gating}}^M
&=
J_{\mathrm{transport}}^\top
M
J_{\mathrm{gating}}
+
J_{\mathrm{gating}}^\top
M
J_{\mathrm{transport}} \\
&\qquad
+
J_{\mathrm{gating}}^\top
M
J_{\mathrm{gating}}.
\end{aligned}
\label{eq:app_exact_gating_growth_correction}
\end{equation}

\begin{proof}
Since
\(J_{\mathrm{full}}
=
J_{\mathrm{transport}}
+
J_{\mathrm{gating}}\),
we have
\begin{equation}
\begin{aligned}
\mathcal{G}_{\mathrm{full}}^M
&=
J_{\mathrm{full}}^\top
M
J_{\mathrm{full}}
-M \\
&=
\left(
J_{\mathrm{transport}}
+
J_{\mathrm{gating}}
\right)^\top
M
\left(
J_{\mathrm{transport}}
+
J_{\mathrm{gating}}
\right)
-M.
\end{aligned}
\end{equation}
Expanding the product gives
\begin{equation}
\begin{aligned}
\mathcal{G}_{\mathrm{full}}^M
&=
J_{\mathrm{transport}}^\top
M
J_{\mathrm{transport}}
-M \\
&\quad+
J_{\mathrm{transport}}^\top
M
J_{\mathrm{gating}}
+
J_{\mathrm{gating}}^\top
M
J_{\mathrm{transport}} \\
&\quad+
J_{\mathrm{gating}}^\top
M
J_{\mathrm{gating}}.
\end{aligned}
\end{equation}
The first line is exactly
\(\mathcal{G}_{\mathrm{transport}}^M\).
Therefore,
\begin{equation}
\begin{aligned}
\mathcal{G}_{\mathrm{full}}^M
-
\mathcal{G}_{\mathrm{transport}}^M
&=
J_{\mathrm{transport}}^\top
M
J_{\mathrm{gating}}
+
J_{\mathrm{gating}}^\top
M
J_{\mathrm{transport}} \\
&\qquad
+
J_{\mathrm{gating}}^\top
M
J_{\mathrm{gating}},
\end{aligned}
\end{equation}
which proves the result.
\end{proof}

Equation~\eqref{eq:app_exact_gating_growth_correction} shows that gating
changes representation growth through two distinct effects. The first two
terms describe the interaction between the transport and gating responses,
and can either increase or decrease growth depending on their alignment.
The final term is the growth contributed by gating itself and is
nonnegative in every direction because \(M\succ0\). Thus, the effect of
gating cannot be characterized by its magnitude alone; its interaction
with transport also matters.

For any direction \(v\),
\begin{equation}
v^\top\mathcal{G}_{\mathrm{full}}^M v
=
v^\top\mathcal{G}_{\mathrm{transport}}^M v
+
v^\top\Delta\mathcal{G}_{\mathrm{gating}}^M v.
\label{eq:app_full_growth_relative_to_transport}
\end{equation}
If transport contracts \(v\), so that
\(v^\top\mathcal{G}_{\mathrm{transport}}^M v<0\),
then gating reverses this contraction exactly when
\begin{equation}
v^\top\Delta\mathcal{G}_{\mathrm{gating}}^M v
>
-
v^\top\mathcal{G}_{\mathrm{transport}}^M v.
\label{eq:app_exact_switch_condition}
\end{equation}
The left-hand side measures how much restoring gating changes growth along
\(v\), whereas the right-hand side is the amount of transport-induced
contraction that must be overcome. The next subsection asks whether such a
direction exists without searching over directions individually.

\subsection{Spectral Criterion for Regime Switching}
\label{subsec:app_spectral_switch}

Equation~\eqref{eq:app_exact_switch_condition} determines whether a fixed
direction undergoes a regime switch. We now ask whether such a direction
exists within an entire region in which transport alone is contracting. Let \(\mathcal S\subseteq\mathbb R^d\) be an \(r\)-dimensional subspace, and
let \(P\in\mathbb R^{d\times r}\) have orthonormal columns
\(p_1,\ldots,p_r\) spanning \(\mathcal S\). Thus, every direction
\(v\in\mathcal S\) can be written as
\(v=Pu\) for some coordinate vector \(u\in\mathbb R^r\). We call
\(\mathcal S\) transport-contractive when transport decreases the
\(M\)-energy of every nonzero direction in this subspace, i.e.,
\begin{equation}
    v^\top
\mathcal G_{\mathrm{transport}}^M
v
<0
\qquad
\text{for all }v\in\mathcal S\setminus\{0\}.
\end{equation}
Such a subspace exists whenever $\mathcal{G}_{\text {transport }}^M$ has at least one negative eigenvalue. More generally, if it has $r$ negative eigenvalues, their associated eigenvectors span an $r$-dimensional transport-contractive subspace. Substituting \(v=Pu\) gives
\begin{equation}
    v^\top
\mathcal G_{\mathrm{transport}}^M
v
=
u^\top
P^\top
\mathcal G_{\mathrm{transport}}^M
P
u.
\end{equation}
Therefore, requiring every nonzero direction in \(\mathcal S\) to be
transport-contractive is equivalent to
\begin{equation}
P^\top
\mathcal G_{\mathrm{transport}}^M
P
\prec 0.
\label{eq:app_transport_contractive_subspace}
\end{equation}
In other words, the projected operator
\(P^\top\mathcal G_{\mathrm{transport}}^M P\) describes transport growth
using coordinates inside \(\mathcal S\), and its negative definiteness
guarantees contraction along every nonzero direction in that subspace.
Because this projected operator is negative definite, its negation is
positive definite. We denote $C
:=
-
P^\top
\mathcal G_{\mathrm{transport}}^M
P
\succ0,$ so that \(u^\top C u\) measures the transport contraction experienced by
the direction \(v=Pu\). On the same subspace, the change in growth produced
by restoring gating is represented by $R
:=
P^\top
\Delta\mathcal G_{\mathrm{gating}}^M
P.$ Hence the full growth operator restricted to \(\mathcal S\) takes the simple
matrix form
\begin{equation}
P^\top
\mathcal G_{\mathrm{full}}^M
P
=
-C+R.
\label{eq:app_projected_full_growth}
\end{equation}

A regime switch therefore exists whenever the gating-induced growth \(R\)
can overcome the transport-contraction operator \(C\) along at least one
direction. The following result gives an exact criterion.

\paragraph{Theorem 1 (restated): Spectral Criterion for Gating-Induced Regime Switching.}
Let \(\mathcal S\) be a transport-contractive subspace satisfying
Eq.~\eqref{eq:app_transport_contractive_subspace}.
Define \(\mathcal W:=C^{-1/2}RC^{-1/2}\).
Then there exists a nonzero direction \(v\in\mathcal S\) that undergoes
a gating-induced regime switch if and only if
\begin{equation}
\lambda_{\max}(\mathcal W)>1.
\label{eq:app_spectral_switch_criterion}
\end{equation}
Moreover, whenever this condition holds, a switching direction is
\[
v_{\mathrm{sw}}=PC^{-1/2}w_{\max},
\]
where \(w_{\max}\) is a unit eigenvector of \(\mathcal W\)
associated with \(\lambda_{\max}(\mathcal W)\).

\begin{proof}
Because the columns of \(P\) form an orthonormal basis of
\(\mathcal S\), every nonzero direction \(v\in\mathcal S\) can be written
uniquely as
\[
v=Pu,
\qquad
u\in\mathbb R^r\setminus\{0\}.
\]
Since \(\mathcal S\) is transport-contractive,
\(P^\top\mathcal G_{\mathrm{transport}}^M P\prec0\). Therefore, for every
\(u\neq0\),
\begin{equation}
\begin{aligned}
v^\top
\mathcal G_{\mathrm{transport}}^M
v
&=
(Pu)^\top
\mathcal G_{\mathrm{transport}}^M
(Pu) \\
&=
u^\top
P^\top
\mathcal G_{\mathrm{transport}}^M
P
u \\
&=
-\,u^\top C u \\
&<
0,
\end{aligned}
\label{eq:app_transport_contraction_on_subspace}
\end{equation}
where
\(C=-P^\top\mathcal G_{\mathrm{transport}}^M P\succ0\).
Hence the transport-contraction part of
Definition~\ref{def:gating_regime_switch} is automatically satisfied
throughout \(\mathcal S\). It remains to determine whether the full dynamics expand at least one
direction in this subspace. Using
\(
\mathcal G_{\mathrm{full}}^M
=
\mathcal G_{\mathrm{transport}}^M
+
\Delta\mathcal G_{\mathrm{gating}}^M
\),
we obtain
\begin{equation}
\begin{aligned}
v^\top
\mathcal G_{\mathrm{full}}^M
v
&=
(Pu)^\top
\mathcal G_{\mathrm{full}}^M
(Pu) \\
&=
u^\top
P^\top
\mathcal G_{\mathrm{full}}^M
P
u \\
&=
u^\top
P^\top
\left(
\mathcal G_{\mathrm{transport}}^M
+
\Delta\mathcal G_{\mathrm{gating}}^M
\right)
P
u \\
&=
u^\top
P^\top
\mathcal G_{\mathrm{transport}}^M
P
u
+
u^\top
P^\top
\Delta\mathcal G_{\mathrm{gating}}^M
P
u \\
&=
-\,u^\top C u
+
u^\top R u \\
&=
u^\top(R-C)u,
\end{aligned}
\label{eq:app_full_growth_projected}
\end{equation}
where
\(
R=P^\top\Delta\mathcal G_{\mathrm{gating}}^M P
\). Thus, a gating-induced regime switch exists in \(\mathcal S\) if and only if
there exists \(u\neq0\) such that
\begin{equation}
\begin{aligned}
u^\top(R-C)u
&>
0 \\
\Longleftrightarrow\qquad
u^\top R u-u^\top C u
&>
0 \\
\Longleftrightarrow\qquad
u^\top R u
&>
u^\top C u.
\end{aligned}
\label{eq:app_switch_quadratic_inequality}
\end{equation}

Because \(C\succ0\), we have \(u^\top C u>0\) for every nonzero \(u\).
Therefore, dividing both sides of
Eq.~\eqref{eq:app_switch_quadratic_inequality} by \(u^\top C u\) preserves
the inequality and gives
\begin{equation}
\frac{u^\top R u}{u^\top C u}>1.
\label{eq:app_generalized_rayleigh_switch}
\end{equation}
The regime-switch problem is therefore equivalent to asking whether the
generalized Rayleigh quotient associated with the pair \((R,C)\) can exceed
one. We next transform this generalized quotient into an ordinary Rayleigh
quotient. Since \(C\succ0\), the symmetric matrices \(C^{1/2}\) and
\(C^{-1/2}\) exist and are nonsingular. For simplicity, we introduce the change of coordinates
\begin{equation}
w=C^{1/2}u,
\qquad
u=C^{-1/2}w.
\label{eq:app_whitening_change_coordinates}
\end{equation}
Because \(C^{1/2}\) is invertible, \(u\neq0\) if and only if \(w\neq0\). Under this change of coordinates, the denominator in
Eq.~\eqref{eq:app_generalized_rayleigh_switch} becomes
\begin{equation}
\begin{aligned}
u^\top C u
&=
\left(C^{-1/2}w\right)^\top
C
\left(C^{-1/2}w\right) \\
&=
w^\top
\left(C^{-1/2}\right)^\top
C
C^{-1/2}
w \\
&=
w^\top
C^{-1/2}
C
C^{-1/2}
w \\
&=
w^\top I w \\
&=
w^\top w,
\end{aligned}
\label{eq:app_whitened_denominator}
\end{equation}
where we used the symmetry of \(C^{-1/2}\). Similarly, the numerator becomes
\begin{equation}
\begin{aligned}
u^\top R u
&=
\left(C^{-1/2}w\right)^\top
R
\left(C^{-1/2}w\right) \\
&=
w^\top
\left(C^{-1/2}\right)^\top
R
C^{-1/2}
w \\
&=
w^\top
C^{-1/2}
R
C^{-1/2}
w.
\end{aligned}
\label{eq:app_whitened_numerator}
\end{equation}

Substituting
Eqs.~\eqref{eq:app_whitened_denominator} and
\eqref{eq:app_whitened_numerator} into
Eq.~\eqref{eq:app_generalized_rayleigh_switch} yields
\begin{equation}
\begin{aligned}
\frac{u^\top R u}{u^\top C u}
&=
\frac{
w^\top C^{-1/2}RC^{-1/2}w
}{
w^\top w
} \\
&>1.
\end{aligned}
\label{eq:app_whitened_rayleigh_quotient}
\end{equation}

Let
\begin{equation}
\mathcal W
:=
C^{-1/2}RC^{-1/2}.
\label{eq:app_switching_operator}
\end{equation}
Since both
\(\mathcal G_{\mathrm{full}}^M\) and
\(\mathcal G_{\mathrm{transport}}^M\) are symmetric,
\(\Delta\mathcal G_{\mathrm{gating}}^M\) is symmetric, and therefore
\(R=P^\top\Delta\mathcal G_{\mathrm{gating}}^M P\) is symmetric. Hence
\(\mathcal W\) is also symmetric. Equation~\eqref{eq:app_whitened_rayleigh_quotient} therefore shows that a
regime switch exists if and only if
\begin{equation}
\exists\,w\neq0
\quad\text{such that}\quad
\frac{w^\top\mathcal W w}{w^\top w}>1.
\label{eq:app_exist_whitened_direction}
\end{equation}
By the Rayleigh--Ritz characterization for symmetric matrices,
\begin{equation}
\begin{aligned}
\max_{w\neq0}
\frac{w^\top\mathcal W w}{w^\top w}
&=
\lambda_{\max}(\mathcal W) \\
&=
\lambda_{\max}
\left(
C^{-1/2}RC^{-1/2}
\right).
\end{aligned}
\label{eq:app_rayleigh_ritz_switch}
\end{equation}
Consequently, we have
\begin{equation}
\begin{aligned}
\exists\,v\in\mathcal S\setminus\{0\}
\ \text{undergoing a regime switch}
&\Longleftrightarrow
\exists\,w\neq0:
\frac{w^\top\mathcal W w}{w^\top w}>1 \\
&\Longleftrightarrow
\max_{w\neq0}
\frac{w^\top\mathcal W w}{w^\top w}>1 \\
&\Longleftrightarrow
\lambda_{\max}(\mathcal W)>1 \\
&\Longleftrightarrow
\lambda_{\max}
\left(
C^{-1/2}RC^{-1/2}
\right)>1.
\end{aligned}
\end{equation}
This proves the necessary and sufficient condition.
\end{proof}

It remains to verify the explicit construction of a switching
direction. Let \(w_{\max}\) be a unit eigenvector of \(\mathcal W\)
associated with \(\lambda_{\max}(\mathcal W)>1\), and define
\[
v_{\mathrm{sw}}:=PC^{-1/2}w_{\max}.
\]
Since \(C\succ0\) and \(P\) has orthonormal columns,
\(v_{\mathrm{sw}}\neq0\). Its transport-only growth satisfies
\begin{align}
v_{\mathrm{sw}}^\top G_{\mathrm{transport}}^M v_{\mathrm{sw}}
&=-w_{\max}^\top w_{\max}\\
&=-1<0.
\end{align}
Meanwhile, its full growth is
\begin{align}
v_{\mathrm{sw}}^\top G_{\mathrm{full}}^M v_{\mathrm{sw}}
&=w_{\max}^\top(\mathcal W-I)w_{\max}\\
&=\lambda_{\max}(\mathcal W)-1>0.
\end{align}
Therefore, \(v_{\mathrm{sw}}\) satisfies both inequalities
in the definition of a gating-induced regime switch,
which completes the proof of the constructive claim.

The whitening by \(C^{-1/2}\) has a direct interpretation. Transport may
contract different directions in \(\mathcal S\) by very different amounts,
so comparing gating growth in its original coordinates would not account
for how difficult each direction is to reverse. The matrix
\begin{equation}
\mathcal W
:=
C^{-1/2}RC^{-1/2}
\label{eq:app_switching_operator}
\end{equation}
measures the gating-induced change after normalizing by this transport
contraction. Its threshold therefore has a fixed meaning: eigenvalues below
one correspond to directions in which gating is insufficient to overcome
transport, whereas an eigenvalue above one certifies a contraction-to-expansion
switch.

\subsection{Strength of Regime Switching}
\label{subsec:app_switch_strength}

The existence criterion in the previous subsection is binary: it determines
whether gating can reverse transport-induced contraction somewhere inside
\(\mathcal S\), but does not quantify how strong that reversal is. Two
complementary notions are useful for this purpose. The first measures how
strongly gating overcomes transport relative to the contraction it faces,
whereas the second measures how severe the reversed transport contraction is
in absolute terms. These notions distinguish a direction that is easy to flip
because transport is nearly neutral from one that remains switchable despite
substantial transport contraction.

\paragraph{Relative switching strength.}
For a nonzero direction \(v=Pu\in\mathcal S\), the gating-induced growth
correction is \(u^\top Ru\), while the magnitude of the transport contraction
is \(u^\top Cu\). Their ratio
\begin{equation}
\Gamma(v)
=
\frac{
v^\top\Delta\mathcal G_{\mathrm{gating}}^M v
}{
-
v^\top\mathcal G_{\mathrm{transport}}^M v
}
=
\frac{u^\top Ru}{u^\top Cu}
\label{eq:app_relative_switch_strength}
\end{equation}
measures how much gating-induced growth is obtained per unit of contraction
that must be overcome. Because every direction in \(\mathcal S\) is
transport-contractive, a direction \(v\in\mathcal S\) switches precisely when
\(\Gamma(v)>1\).

\paragraph{Corollary~\ref{cor:max_relative_switching} (restated): Maximum Relative Switching Strength.}
Let \(\mathcal W=C^{-1/2}RC^{-1/2}\), and let \(w_\star\) be a unit
eigenvector associated with \(\lambda_{\max}(\mathcal W)\). Then
\(v_{\mathrm{rel}}=PC^{-1/2}w_\star\) maximizes \(\Gamma(v)\) over all
nonzero \(v\in\mathcal S\), up to arbitrary nonzero scaling, and
\(\Gamma(v_{\mathrm{rel}})=\lambda_{\max}(\mathcal W)\). Consequently,
if \(\lambda_{\max}(\mathcal W)>1\), then \(v_{\mathrm{rel}}\) undergoes
a gating-induced regime switch.

\begin{proof}
Because the columns of \(P\) form an orthonormal basis of
\(\mathcal S\), every nonzero direction \(v\in\mathcal S\) can be
written uniquely as \(v=Pu\) for some \(u\neq0\). By the definitions
\(C=-P^\top\mathcal G_{\mathrm{transport}}^M P\) and
\(R=P^\top\Delta\mathcal G_{\mathrm{gating}}^M P\), the transport
contraction along \(v\) satisfies
\begin{equation}
\begin{aligned}
-v^\top\mathcal G_{\mathrm{transport}}^M v
&=
-(Pu)^\top
\mathcal G_{\mathrm{transport}}^M
(Pu) \\
&=
-u^\top
P^\top\mathcal G_{\mathrm{transport}}^M P
u \\
&=
u^\top C u,
\end{aligned}
\label{eq:app_relative_transport}
\end{equation}
while the gating-induced growth correction satisfies
\begin{equation}
\begin{aligned}
v^\top\Delta\mathcal G_{\mathrm{gating}}^M v
&=
(Pu)^\top
\Delta\mathcal G_{\mathrm{gating}}^M
(Pu) \\
&=
u^\top
P^\top\Delta\mathcal G_{\mathrm{gating}}^M P
u \\
&=
u^\top R u.
\end{aligned}
\label{eq:app_relative_gating}
\end{equation}
Therefore
\begin{equation}
\Gamma(v)
=
\frac{u^\top Ru}{u^\top Cu}.
\label{eq:app_relative_generalized_rayleigh}
\end{equation}

Since \(C\succ0\), introduce the whitened coordinate
\begin{equation}
w=C^{1/2}u,
\qquad
u=C^{-1/2}w.
\label{eq:app_relative_whitening}
\end{equation}
The denominator becomes
\begin{equation}
\begin{aligned}
u^\top C u
&=
(C^{-1/2}w)^\top C(C^{-1/2}w) \\
&=
w^\top C^{-1/2}CC^{-1/2}w \\
&=
w^\top w,
\end{aligned}
\label{eq:app_relative_whitened_denominator}
\end{equation}
and similarly
\begin{equation}
\begin{aligned}
u^\top R u
&=
(C^{-1/2}w)^\top R(C^{-1/2}w) \\
&=
w^\top C^{-1/2}RC^{-1/2}w \\
&=
w^\top\mathcal W w.
\end{aligned}
\label{eq:app_relative_whitened_numerator}
\end{equation}
Hence
\begin{equation}
\Gamma(v)
=
\frac{
w^\top\mathcal W w
}{
w^\top w
}.
\label{eq:app_relative_standard_rayleigh}
\end{equation}

The matrix \(\mathcal W=C^{-1/2}RC^{-1/2}\) is symmetric. Therefore,
by the Rayleigh--Ritz theorem,
\begin{equation}
\begin{aligned}
\max_{v\in\mathcal S\setminus\{0\}}
\Gamma(v)
&=
\max_{u\neq0}
\frac{u^\top Ru}{u^\top Cu} \\
&=
\max_{w\neq0}
\frac{
w^\top\mathcal W w
}{
w^\top w
} \\
&=
\lambda_{\max}(\mathcal W).
\end{aligned}
\label{eq:app_relative_rayleigh_ritz}
\end{equation}

Let \(w_\star\) be a unit eigenvector associated with
\(\lambda_{\max}(\mathcal W)\). This vector is optimal in the whitened
coordinate system, rather than directly in the original representation
space. To recover the corresponding representation direction, we undo the
two coordinate transformations introduced above. First, based on equation \ref{eq:app_whitening_change_coordinates}, the whitening transformation
\(w=C^{1/2}u\) implies
\begin{equation}
u_\star
=
C^{-1/2}w_\star.
\label{eq:app_relative_optimal_subspace_coordinate}
\end{equation}
Thus, \(u_\star\) is the maximizing direction expressed in the coordinates
of the transport-contractive subspace \(\mathcal S\). Since the columns of
\(P\) form an orthonormal basis of \(\mathcal S\), a coordinate vector
\(u\) corresponds to the representation direction \(v=Pu\). Mapping
\(u_\star\) back to the original representation space therefore gives
\begin{equation}
\begin{aligned}
v_{\mathrm{rel}}
&=
Pu_\star \\
&=
PC^{-1/2}w_\star.
\end{aligned}
\label{eq:app_relative_optimal_direction}
\end{equation}
Hence \(w_\star\) is the optimal direction after normalizing by transport
contraction, \(C^{-1/2}w_\star\) is the same direction in the coordinates
of \(\mathcal S\), and \(PC^{-1/2}w_\star\) is the corresponding direction
in the original representation space. Substituting \(w_\star\) into
\eqref{eq:app_relative_standard_rayleigh} then yields
\begin{equation}
\begin{aligned}
\Gamma(v_{\mathrm{rel}})
&=
\frac{
w_\star^\top\mathcal W w_\star
}{
w_\star^\top w_\star
} \\
&=
\lambda_{\max}(\mathcal W),
\end{aligned}
\label{eq:app_relative_optimal_value}
\end{equation}
where the final equality follows because
\(\mathcal W w_\star
=
\lambda_{\max}(\mathcal W)w_\star\)
and \(\|w_\star\|_2=1\). Finally, if \(\lambda_{\max}(\mathcal W)>1\), then
\(\Gamma(v_{\mathrm{rel}})>1\). Since
\(-v_{\mathrm{rel}}^\top
\mathcal G_{\mathrm{transport}}^M
v_{\mathrm{rel}}>0\),
\begin{equation}
\begin{aligned}
\Gamma(v_{\mathrm{rel}})>1
&\Longrightarrow
v_{\mathrm{rel}}^\top
\Delta\mathcal G_{\mathrm{gating}}^M
v_{\mathrm{rel}}
>
-
v_{\mathrm{rel}}^\top
\mathcal G_{\mathrm{transport}}^M
v_{\mathrm{rel}} \\
&\Longrightarrow
v_{\mathrm{rel}}^\top
\mathcal G_{\mathrm{full}}^M
v_{\mathrm{rel}}
>0.
\end{aligned}
\label{eq:app_relative_switch_completion}
\end{equation}
At the same time,
\(v_{\mathrm{rel}}^\top
\mathcal G_{\mathrm{transport}}^M
v_{\mathrm{rel}}<0\)
because \(v_{\mathrm{rel}}\in\mathcal S\). Therefore
\(v_{\mathrm{rel}}\) undergoes a gating-induced regime switch.
\end{proof}

Relative strength is useful because it compares directions on a common scale
and admits an exact spectral optimum. However, it can become large when
transport contraction is very small. For example, a direction with transport
growth \(-0.001\) and gating correction \(1\) has
\(\Gamma(v)=1000\), even though only a very weak contraction must be
reversed. Relative strength therefore does not by itself quantify how severe
the underlying transport contraction is.

\paragraph{Absolute reversal severity.}
Relative switching strength identifies directions for which gating is large
compared with the transport contraction that it must overcome. This relative
comparison is useful, but it can favor directions that are already almost
neutral under transport. A complementary question is therefore whether
gating can reverse a direction that transport suppresses \emph{strongly}.

For a transport-contractive direction \(v\), the squared \(M\)-energy removed
by transport is
\(-v^\top\mathcal G_{\mathrm{transport}}^M v>0\).
Its raw magnitude, however, scales quadratically with \(v\), so it cannot be
compared meaningfully across directions of different size. The natural
quantity is instead the fraction of the incoming energy that transport
removes:
\begin{equation}
\tau(v)
=
\frac{
-v^\top\mathcal G_{\mathrm{transport}}^M v
}{
v^\top Mv
}.
\label{eq:app_absolute_switch_severity_ratio}
\end{equation}

This quantity has a direct interpretation in terms of the transported
representation. Recall that
\(\mathcal G_{\mathrm{transport}}^M
=
J_{\mathrm{transport}}^\top M J_{\mathrm{transport}}-M\).
Therefore,
\begin{equation}
\begin{aligned}
-v^\top\mathcal G_{\mathrm{transport}}^M v
&=
-v^\top
\left(
J_{\mathrm{transport}}^\top
M
J_{\mathrm{transport}}
-
M
\right)v \\
&=
v^\top Mv
-
v^\top
J_{\mathrm{transport}}^\top
M
J_{\mathrm{transport}}v \\
&=
\|v\|_M^2
-
\|J_{\mathrm{transport}}v\|_M^2.
\end{aligned}
\label{eq:app_absolute_removed_energy}
\end{equation}
Substituting this identity into
\eqref{eq:app_absolute_switch_severity_ratio} gives
\begin{equation}
\boxed{
\tau(v)
=
1-
\frac{
\|J_{\mathrm{transport}}v\|_M^2
}{
\|v\|_M^2
}.
}
\label{eq:app_absolute_switch_severity}
\end{equation}
Hence \(\tau(v)\) is exactly the fraction of incoming squared
\(M\)-energy removed by transport. Because \(v\) is
transport-contractive, \(0<\tau(v)\le1\). Values near zero correspond
to weak suppression, whereas \(\tau(v)\) near one means that transport
removes almost all of the incoming energy. For example,
\(\tau(v)=0.8\) means that only \(20\%\) of the incoming squared
\(M\)-energy remains after transport.

We are interested specifically in directions for which this contraction is
subsequently reversed by state-dependent gating. The strongest such reversal
is characterized by
\begin{equation}
\tau_{\max}
=
\begin{cases}
\displaystyle
\sup_{\substack{
v\in\mathcal S\setminus\{0\}\\
v^\top\mathcal G_{\mathrm{full}}^M v>0
}}
\tau(v),
&
\text{if a switching direction exists in }\mathcal S,\\[3mm]
0,
&
\text{otherwise}.
\end{cases}
\label{eq:app_max_absolute_switch_severity}
\end{equation}
Since every nonzero direction in \(\mathcal S\) is already
transport-contractive, the remaining constraint
\(v^\top\mathcal G_{\mathrm{full}}^M v>0\) selects exactly the directions
that undergo a regime switch. Thus, \(\tau_{\max}\) asks how severe the
transport contraction can be while still being overturned into expansion by
the full state-dependent dynamics.

Unlike the relative objective \(\Gamma(v)\), this optimization does not
reduce directly to a single generalized Rayleigh quotient. The direction
must simultaneously exhibit sufficiently strong transport contraction and
positive full growth. Rather than optimizing these two requirements jointly,
we first isolate directions whose transport contraction is guaranteed to
exceed a prescribed severity level, and then ask whether any direction in
that region switches.

For \(v=Pu\in\mathcal S\), the removed transport energy is
\(u^\top Cu\). Its incoming energy is
\(v^\top Mv=u^\top(P^\top MP)u\). Let
\(B:=P^\top MP\succ0\). Then
\begin{equation}
\tau(Pu)
=
\frac{u^\top Cu}{u^\top Bu}.
\label{eq:app_absolute_generalized_rayleigh}
\end{equation}
Absolute contraction severity is therefore a generalized Rayleigh quotient
for the pair \((C,B)\). This observation provides a natural coordinate
system in which transport modes are ordered by how strongly they contract.

Let \((\eta_i,q_i)\) satisfy
\(Cq_i=\eta_iBq_i\), with the generalized eigenvectors chosen
\(B\)-orthonormally so that \(q_i^\top Bq_j=\delta_{ij}\), and order the
eigenvalues as \(\eta_1\ge\cdots\ge\eta_r\). Along an individual generalized
eigenvector,
\[
\tau(Pq_i)
=
\frac{q_i^\top Cq_i}{q_i^\top Bq_i}
=
\eta_i,
\]
so \(\eta_i\) has a direct meaning: it is the absolute transport severity of
that mode. Moreover,
\[
B-C
=
P^\top J_{\mathrm{transport}}^\top
M
J_{\mathrm{transport}}P
\succeq0,
\]
and hence \(C\preceq B\). Together with \(C\succ0\), this implies
\(0<\eta_i\le1\).

Now fix a desired severity level \(\tau_0\in(0,1]\). We retain only the
generalized transport modes whose contraction is at least this threshold:
\begin{equation}
\mathcal S_{\tau_0}
=
P\,
\operatorname{span}
\left\{
q_i:\eta_i\ge\tau_0
\right\}
\subseteq\mathcal S.
\label{eq:app_absolute_threshold_subspace}
\end{equation}
The important point is not merely that each selected basis direction has
severity at least \(\tau_0\), but that \emph{every} nonzero direction in
their span does as well. Thus, \(\mathcal S_{\tau_0}\) is a region in which
transport contraction is uniformly strong. For example,
\(\tau_0=0.8\) restricts the analysis to a subspace in which transport removes
at least \(80\%\) of the incoming squared \(M\)-energy along every direction.

The remaining question is whether gating can reverse any direction inside
this strongly contractive subspace. Let \(P_{\tau_0}\) be an orthonormal
basis of \(\mathcal S_{\tau_0}\), assuming that the subspace is nontrivial,
and restrict the transport and gating-growth operators to it:
\begin{equation}
C_{\tau_0}
=
-
P_{\tau_0}^\top
\mathcal G_{\mathrm{transport}}^M
P_{\tau_0},
\qquad
R_{\tau_0}
=
P_{\tau_0}^\top
\Delta\mathcal G_{\mathrm{gating}}^M
P_{\tau_0}.
\label{eq:app_absolute_restricted_CR}
\end{equation}
Because
\(\mathcal S_{\tau_0}\subseteq\mathcal S\),
we still have \(C_{\tau_0}\succ0\). The same whitening used in the
existence theorem therefore yields the restricted switching operator
\begin{equation}
\mathcal W_{\tau_0}
=
C_{\tau_0}^{-1/2}
R_{\tau_0}
C_{\tau_0}^{-1/2}.
\label{eq:app_absolute_restricted_W}
\end{equation}

\paragraph{Theorem~\ref{thm:absolute_reversal_severity} (restated): Certified Absolute Reversal Severity.}
Fix $\tau_0\in(0,1]$ and suppose $\mathcal S_{\tau_0}\neq\{0\}$.
Every nonzero direction in $\mathcal S_{\tau_0}$ has absolute
transport severity at least $\tau_0$.
Moreover, $\lambda_{\max}(\mathcal W_{\tau_0})>1$ if and only if
$\mathcal S_{\tau_0}$ contains a gating-induced regime-switch direction.
Consequently, whenever $\lambda_{\max}(\mathcal W_{\tau_0})>1$,
the strongest absolute reversal satisfies $\tau_{\max}\geq\tau_0$.

\begin{proof}
We first verify that the severity threshold holds uniformly over
\(\mathcal S_{\tau_0}\). Any nonzero
\(v\in\mathcal S_{\tau_0}\) can be written as \(v=Pu\), where
\[
u
=
\sum_{i:\eta_i\ge\tau_0}
a_iq_i
\]
for coefficients not all zero. Since the generalized eigenvectors are
\(B\)-orthonormal and satisfy \(Cq_i=\eta_iBq_i\),
\begin{equation}
\begin{aligned}
u^\top Bu
&=
\sum_{i:\eta_i\ge\tau_0}a_i^2, \\
u^\top Cu
&=
\sum_{i:\eta_i\ge\tau_0}\eta_i a_i^2
\ge
\tau_0
\sum_{i:\eta_i\ge\tau_0}a_i^2.
\end{aligned}
\label{eq:app_absolute_threshold_energy}
\end{equation}
Therefore,
\begin{equation}
\tau(v)
=
\frac{u^\top Cu}{u^\top Bu}
=
\frac{
\sum_i\eta_i a_i^2
}{
\sum_i a_i^2
}
\ge\tau_0.
\label{eq:app_absolute_threshold_guarantee}
\end{equation}
Thus every nonzero direction in
\(\mathcal S_{\tau_0}\) is guaranteed to experience at least the prescribed
transport contraction.

It remains to determine whether any such direction becomes expanding after
gating is restored. Write \(v=P_{\tau_0}a\). Its transport-only and full
growths are
\begin{equation}
v^\top\mathcal G_{\mathrm{transport}}^M v
=
-a^\top C_{\tau_0}a,
\qquad
v^\top\mathcal G_{\mathrm{full}}^M v
=
a^\top(R_{\tau_0}-C_{\tau_0})a.
\label{eq:app_absolute_restricted_growth}
\end{equation}
Since \(C_{\tau_0}\succ0\), transport contracts every nonzero direction in
this subspace. A regime switch therefore exists precisely when some
\(a\neq0\) satisfies
\[
a^\top(R_{\tau_0}-C_{\tau_0})a>0,
\]
or, equivalently,
\[
\frac{
a^\top R_{\tau_0}a
}{
a^\top C_{\tau_0}a
}
>1.
\]

To evaluate the largest value of this ratio, set
\(y=C_{\tau_0}^{1/2}a\), so that
\(a=C_{\tau_0}^{-1/2}y\). Then
\[
\frac{
a^\top R_{\tau_0}a
}{
a^\top C_{\tau_0}a
}
=
\frac{
y^\top
\mathcal W_{\tau_0}
y
}{
y^\top y
}.
\]
Rayleigh--Ritz therefore gives
\begin{equation}
\max_{a\neq0}
\frac{
a^\top R_{\tau_0}a
}{
a^\top C_{\tau_0}a
}
=
\lambda_{\max}(\mathcal W_{\tau_0}).
\label{eq:app_absolute_spectral_certificate}
\end{equation}
Hence a switching direction exists inside
\(\mathcal S_{\tau_0}\) if and only if
\(\lambda_{\max}(\mathcal W_{\tau_0})>1\).

Whenever this condition holds, the resulting direction is both switching and
contained in \(\mathcal S_{\tau_0}\). The first part of the proof therefore
guarantees \(\tau(v)\ge\tau_0\), and consequently
\[
\tau_{\max}
\ge
\tau(v)
\ge
\tau_0.
\]
This proves the certificate.
\end{proof}

The theorem converts absolute reversal severity into a sequence of
increasingly demanding spectral tests. If
\(\tau_2>\tau_1\), then
\(\mathcal S_{\tau_2}\subseteq\mathcal S_{\tau_1}\): raising the threshold
removes weaker transport modes and asks whether switching persists after
progressively stronger contraction. The largest threshold certified in this
way is
\begin{equation}
\tau_{\mathrm{cert}}
=
\sup
\left\{
\tau_0\in(0,1]:
\mathcal S_{\tau_0}\neq\{0\},
\;
\lambda_{\max}(\mathcal W_{\tau_0})>1
\right\},
\label{eq:app_certified_absolute_severity}
\end{equation}
with \(\tau_{\mathrm{cert}}=0\) if no threshold is certified. The theorem
immediately implies
\begin{equation}
\boxed{
\tau_{\max}\ge\tau_{\mathrm{cert}}.
}
\label{eq:app_absolute_certified_lower_bound}
\end{equation}

The inequality is intentional. The condition
\(\tau(v)\ge\tau_0\) describes a quadratic region of representation space
and need not itself form a linear subspace. Our construction instead extracts
a linear subspace on which the severity threshold holds uniformly, and then
applies the exact spectral switch criterion inside that subspace. Consequently,
\(\tau_{\mathrm{cert}}\) is a certified lower bound on the strongest absolute
reversal, rather than necessarily its exact value.

When \(M=I\), the orthonormality of \(P\) gives \(B=I\). The generalized
eigenproblem then reduces to \(Cq_i=\eta_iq_i\), and
\(\mathcal S_{\tau_0}\) is simply the span of the transport-contraction
eigenvectors with eigenvalues at least \(\tau_0\). This is the
nested-subspace form used in our experiments.

\paragraph{Interpretation.}
Relative and absolute switching strength answer different questions.
Suppose \(v_1\) has transport growth \(-0.2\) and gating correction \(1.0\),
while \(v_2\) has transport growth \(-0.8\) and gating correction \(2.0\),
with both directions normalized to the same incoming \(M\)-energy. Both
directions switch, with full growth \(0.8\) and \(1.2\), respectively.
Relative strength favors \(v_1\), because
\(\Gamma(v_1)=5>\Gamma(v_2)=2.5\): gating is larger relative to what must be
overcome. Absolute severity instead favors \(v_2\), because transport removes
a substantially larger fraction of its incoming energy before the reversal.

Thus, \(\lambda_{\max}(\mathcal W)\) characterizes the most favorable
gating-to-transport competition, whereas \(\tau_{\max}\) asks how severe
transport contraction can be while still being overturned. The certified
quantity \(\tau_{\mathrm{cert}}\) makes the latter operational: a large value
shows that regime switching persists even after restricting attention to
directions that transport suppresses strongly, rather than arising only near
the transport-neutral boundary.

\subsection{No-Switch Conditions and Effective Switching Dimension}
\label{subsec:app_no_switch}

The previous results characterize whether a regime switch exists and how
strong such a reversal can be. We now ask two complementary questions. First,
when is gating provably unable to overturn transport contraction anywhere in
\(\mathcal S\)? Second, when switching is possible, is it confined to a
low-dimensional portion of representation space or supported over a broader
family of directions? Both questions are determined by the spectrum of the
whitened switching operator
\(\mathcal W=C^{-1/2}RC^{-1/2}\).

Recall that every nonzero \(v\in\mathcal S\) can be written as \(v=Pu\), and
that its full growth satisfies
\begin{equation}
\begin{aligned}
v^\top\mathcal G_{\mathrm{full}}^M v
&=
u^\top(R-C)u.
\end{aligned}
\label{eq:app_full_growth_RC}
\end{equation}
Thus, a regime switch occurs precisely when the quadratic form \(R-C\) is
positive along at least one direction.

\paragraph{Theorem~\ref{thm:effective_switching_dimension} (restated): No-Switch Criterion and Effective Switching Dimension.}
Let \(\mathcal W=C^{-1/2}RC^{-1/2}\), with eigenvalues
\(\lambda_1(\mathcal W)\ge\cdots\ge\lambda_r(\mathcal W)\). There is no
gating-induced regime switch anywhere in \(\mathcal S\) if and only if
\(\mathcal W\preceq I\), equivalently
\(\lambda_{\max}(\mathcal W)\le1\), or equivalently \(R\preceq C\).
Moreover, if
\(d_{\mathrm{sw}}:=\#\{i:\lambda_i(\mathcal W)>1\}\), then
\(d_{\mathrm{sw}}\) is the largest dimension of a linear subspace of
\(\mathcal S\) in which every nonzero direction undergoes a regime switch.

\begin{proof}
For any nonzero \(v\in\mathcal S\), write \(v=Pu\) with \(u\neq0\).
Using the definitions of \(C\) and \(R\), the full directional growth becomes
\begin{equation}
\begin{aligned}
v^\top\mathcal G_{\mathrm{full}}^M v
&=
v^\top
\left(
\mathcal G_{\mathrm{transport}}^M
+
\Delta\mathcal G_{\mathrm{gating}}^M
\right)
v \\
&=
(Pu)^\top
\mathcal G_{\mathrm{transport}}^M
(Pu)
+
(Pu)^\top
\Delta\mathcal G_{\mathrm{gating}}^M
(Pu) \\
&=
u^\top
P^\top\mathcal G_{\mathrm{transport}}^M P
u
+
u^\top
P^\top\Delta\mathcal G_{\mathrm{gating}}^M P
u \\
&=
-u^\top C u
+
u^\top R u \\
&=
u^\top(R-C)u.
\end{aligned}
\label{eq:app_no_switch_growth_expand}
\end{equation}
Because every direction in \(\mathcal S\) is already transport-contractive,
a regime switch exists exactly when there is some \(u\neq0\) such that
\(u^\top(R-C)u>0\). Therefore, no switch occurs anywhere in
\(\mathcal S\) exactly when
\begin{equation}
\begin{aligned}
u^\top(R-C)u
&\le0
\qquad
\text{for every }u\neq0,
\end{aligned}
\end{equation}
which is equivalent to
\begin{equation}
R-C\preceq0,
\qquad\text{or equivalently}\qquad
R\preceq C.
\label{eq:app_no_switch_RC}
\end{equation}

Since \(C\succ0\), we may again whiten the transport contraction by writing
\(w=C^{1/2}u\), so that \(u=C^{-1/2}w\). Substituting this relation gives
\begin{equation}
\begin{aligned}
u^\top(R-C)u
&=
(C^{-1/2}w)^\top
(R-C)
(C^{-1/2}w) \\
&=
w^\top
C^{-1/2}RC^{-1/2}
w
-
w^\top
C^{-1/2}CC^{-1/2}
w \\
&=
w^\top\mathcal W w
-
w^\top w \\
&=
w^\top(\mathcal W-I)w.
\end{aligned}
\label{eq:app_no_switch_whitened}
\end{equation}
Hence, no switch exists if and only if
\begin{equation}
\begin{aligned}
w^\top(\mathcal W-I)w
&\le0
\qquad
\text{for every }w\neq0.
\end{aligned}
\end{equation}
Because \(\mathcal W-I\) is symmetric, this condition is equivalent to
\begin{equation}
\mathcal W-I\preceq0,
\qquad\text{or equivalently}\qquad
\mathcal W\preceq I.
\end{equation}
By the spectral theorem,
\(\mathcal W\preceq I\) holds exactly when every eigenvalue of
\(\mathcal W\) is at most one, which yields
\begin{equation}
\lambda_{\max}(\mathcal W)\le1.
\end{equation}
This proves the no-switch equivalences.

We next characterize the largest linear subspace on which switching holds
uniformly for every nonzero direction. Let
\(w_1,\ldots,w_r\) be an orthonormal eigenbasis of \(\mathcal W\), with
\(\mathcal Ww_i=\lambda_i(\mathcal W)w_i\), and define
\begin{equation}
\mathcal E_{\mathrm{sw}}
=
\operatorname{span}
\left\{
w_i:
\lambda_i(\mathcal W)>1
\right\}.
\label{eq:app_switch_eigenspace}
\end{equation}
Its dimension is
\(\dim(\mathcal E_{\mathrm{sw}})=d_{\mathrm{sw}}\). For any nonzero
\(w\in\mathcal E_{\mathrm{sw}}\), write
\begin{equation}
w
=
\sum_{i:\lambda_i(\mathcal W)>1}
a_iw_i,
\end{equation}
with coefficients not all zero. Then
\begin{equation}
\begin{aligned}
w^\top(\mathcal W-I)w
&=
\left(
\sum_i a_iw_i
\right)^\top
(\mathcal W-I)
\left(
\sum_j a_jw_j
\right) \\
&=
\sum_{i:\lambda_i(\mathcal W)>1}
a_i^2
\left(
\lambda_i(\mathcal W)-1
\right) \\
&>
0.
\end{aligned}
\label{eq:app_switch_positive_subspace}
\end{equation}
Therefore every nonzero vector in
\(\mathcal E_{\mathrm{sw}}\) produces positive full growth after mapping back
to \(\mathcal S\). The corresponding representation subspace is
\begin{equation}
\mathcal V_{\mathrm{sw}}
=
PC^{-1/2}\mathcal E_{\mathrm{sw}}
=
\left\{
PC^{-1/2}w:
w\in\mathcal E_{\mathrm{sw}}
\right\},
\label{eq:app_representation_switch_subspace}
\end{equation}
and the invertibility of \(C^{-1/2}\), together with the full column rank of
\(P\), implies
\begin{equation}
\dim(\mathcal V_{\mathrm{sw}})
=
\dim(\mathcal E_{\mathrm{sw}})
=
d_{\mathrm{sw}}.
\end{equation}
Hence every nonzero direction in the
\(d_{\mathrm{sw}}\)-dimensional subspace
\(\mathcal V_{\mathrm{sw}}\) undergoes a regime switch.

It remains to show that no larger all-switching linear subspace can exist.
The symmetric matrix \(\mathcal W-I\) has exactly
\(d_{\mathrm{sw}}\) positive eigenvalues, since
\begin{equation}
\lambda_i(\mathcal W-I)
=
\lambda_i(\mathcal W)-1.
\end{equation}
By the inertia theorem, the maximum dimension of a linear subspace on which
the quadratic form
\(w^\top(\mathcal W-I)w\) is positive for every nonzero vector is exactly the
number of positive eigenvalues of \(\mathcal W-I\). Therefore no linear
subspace of dimension greater than \(d_{\mathrm{sw}}\) can have the property
that every nonzero direction switches. This proves that
\(d_{\mathrm{sw}}\) is the effective switching dimension.
\end{proof}

\paragraph{Interpretation.}
The effective switching dimension measures the breadth of the regime-switching
phenomenon rather than the number of individual switching directions.
Specifically, \(d_{\mathrm{sw}}\) is the largest dimension of a linear
subspace in which \emph{every} nonzero direction is converted from transport
contraction into full-dynamics expansion. This distinction is important:
individual switching directions may exist outside
\(\mathcal V_{\mathrm{sw}}\), but arbitrary linear combinations of such
directions need not themselves switch. Thus, \(d_{\mathrm{sw}}\) should not
be interpreted as the number of linearly independent switching vectors.

When \(d_{\mathrm{sw}}=0\), gating cannot overturn transport contraction
anywhere in \(\mathcal S\). When \(d_{\mathrm{sw}}=1\), there exists an
all-switching one-dimensional subspace, but there is no two-dimensional
linear subspace in which every nonzero direction is guaranteed to switch.
Larger values indicate increasingly broad switching behavior: gating can
reverse transport contraction uniformly over a higher-dimensional region of
representation space rather than only along specially selected directions.

This distinction also has a practical interpretation. A small
\(d_{\mathrm{sw}}\) indicates that routing-induced reversal is geometrically
narrow, so an intervention or perturbation must be well aligned with a
specific portion of representation space to benefit from the effect. A large
\(d_{\mathrm{sw}}\), in contrast, indicates that an entire
higher-dimensional family of perturbations is guaranteed to undergo the same
contraction-to-expansion reversal. The quantity can therefore be used as a
measure of how broadly routing-induced sensitivity or controllability is
distributed across representation space, and can support comparisons across
layers, heads, or model architectures.

A simpler sufficient no-switch condition can also be obtained without
forming the whitened operator explicitly. Since
\(u^\top Ru\le\lambda_{\max}(R)\|u\|_2^2\) and
\(u^\top Cu\ge\lambda_{\min}(C)\|u\|_2^2\), the condition
\(\lambda_{\max}(R)\le\lambda_{\min}(C)\) implies
\(u^\top(R-C)u\le0\) for every \(u\), and therefore rules out regime
switching throughout \(\mathcal S\). This bound is generally more
conservative than the exact spectral criterion, but makes the competition
between the largest gating correction and the weakest transport contraction
explicit.

\subsection{Robustness to Operator Estimation Error}
\label{subsec:app_switch_robustness}

The previous results are stated for exact local operators. In practice,
however, Jacobians and growth operators may be obtained through numerical
differentiation, finite samples, or approximate Jacobian-vector products.
We therefore ask whether the predicted regime switch remains valid when these
operators are only approximately known. The relevant quantity is the margin
to the switching boundary: conclusions far from zero directional growth, or
far from the spectral threshold \(1\), remain stable under sufficiently small
operator perturbations.

\paragraph{Robustness of an individual switching direction.}
Let
\(\widehat{\mathcal G}_{\mathrm{transport}}^M\) and
\(\widehat{\mathcal G}_{\mathrm{full}}^M\) denote estimated growth operators
satisfying
\[
\left\|
\widehat{\mathcal G}_{\mathrm{transport}}^M
-
\mathcal G_{\mathrm{transport}}^M
\right\|_2
\le \varepsilon_{\mathrm{t}},
\qquad
\left\|
\widehat{\mathcal G}_{\mathrm{full}}^M
-
\mathcal G_{\mathrm{full}}^M
\right\|_2
\le \varepsilon_{\mathrm{f}}.
\]

\begin{proposition}[Margin-based robustness of regime switching]
\label{prop:app_directional_robustness}
For any unit direction \(v\), if
\(v^\top\widehat{\mathcal G}_{\mathrm{transport}}^M v<-\varepsilon_{\mathrm{t}}\)
and
\(v^\top\widehat{\mathcal G}_{\mathrm{full}}^M v>\varepsilon_{\mathrm{f}}\),
then the exact operators satisfy
\(v^\top\mathcal G_{\mathrm{transport}}^M v<0\) and
\(v^\top\mathcal G_{\mathrm{full}}^M v>0\). Hence the estimated switch is
guaranteed to persist under the stated operator errors.
\end{proposition}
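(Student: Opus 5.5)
The argument rests on a single inequality: for a symmetric perturbation \(E\) and a unit vector \(v\), the quadratic form can move by at most \(\|E\|_2\). I will apply this twice, once to the transport operator and once to the full operator. The two conclusions are independent, so they can be treated separately and then combined with Definition~\ref{def:gating_regime_switch}.

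\textbf{Step 1: quadratic-form perturbation bound.} Let \(E_{\mathrm t}:=\widehat{\mathcal G}_{\mathrm{transport}}^M-\mathcal G_{\mathrm{transport}}^M\). For a unit \(v\), Cauchy--Schwarz and the definition of the spectral norm give
\[
\bigl|v^\top E_{\mathrm t}v\bigr|\le\|v\|_2\,\|E_{\mathrm t}v\|_2\le\|E_{\mathrm t}\|_2\le\varepsilon_{\mathrm t}.
\]
Symmetry of \(E_{\mathrm t}\) is not even required for this. In the same way, with \(E_{\mathrm f}:=\widehat{\mathcal G}_{\mathrm{full}}^M-\mathcal G_{\mathrm{full}}^M\), we get \(|v^\top E_{\mathrm f}v|\le\varepsilon_{\mathrm f}\).

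\textbf{Step 2: transport sign.} Write
\[
v^\top\mathcal G_{\mathrm{transport}}^M v=v^\top\widehat{\mathcal G}_{\mathrm{transport}}^M v-v^\top E_{\mathrm t}v\le v^\top\widehat{\mathcal G}_{\mathrm{transport}}^M v+\varepsilon_{\mathrm t}.
\]
By the hypothesis \(v^\top\widehat{\mathcal G}_{\mathrm{transport}}^M v<-\varepsilon_{\mathrm t}\), the right-hand side is strictly negative.

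\textbf{Step 3: full sign.} Similarly,
\[
v^\top\mathcal G_{\mathrm{full}}^M v\ge v^\top\widehat{\mathcal G}_{\mathrm{full}}^M v-\varepsilon_{\mathrm f}>0.
\]
Steps 2 and 3 together are exactly the two inequalities of Definition~\ref{def:gating_regime_switch} for the exact operators, so the estimated switch persists.

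\textbf{Remarks.} There is no real obstacle in this argument. The only point needing care is the unit-norm assumption: if \(v\) is not normalized, the bounds scale by \(\|v\|_2^2\), and the margins would need to be scaled accordingly.
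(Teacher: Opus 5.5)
Your proof is correct and follows the paper's argument: you write each exact quadratic form as the estimated form plus an error term, bound that term by $\|E\|_2\|v\|_2^2\le\varepsilon$ for unit $v$, and read off the two strict signs. Your side remarks are also accurate: symmetry of the error is not needed, and for non-unit $v$ the margins must be scaled by $\|v\|_2^2$.
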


\begin{proof}
For the transport operator, write
\begin{equation}
\begin{aligned}
v^\top\mathcal G_{\mathrm{transport}}^M v
&=
v^\top\widehat{\mathcal G}_{\mathrm{transport}}^M v
+
v^\top
\left(
\mathcal G_{\mathrm{transport}}^M
-
\widehat{\mathcal G}_{\mathrm{transport}}^M
\right)
v \\
&\le
v^\top\widehat{\mathcal G}_{\mathrm{transport}}^M v
+
\left|
v^\top
\left(
\mathcal G_{\mathrm{transport}}^M
-
\widehat{\mathcal G}_{\mathrm{transport}}^M
\right)
v
\right| \\
&\le
v^\top\widehat{\mathcal G}_{\mathrm{transport}}^M v
+
\left\|
\mathcal G_{\mathrm{transport}}^M
-
\widehat{\mathcal G}_{\mathrm{transport}}^M
\right\|_2
\|v\|_2^2 \\
&\le
v^\top\widehat{\mathcal G}_{\mathrm{transport}}^M v
+
\varepsilon_{\mathrm{t}} \\
&<0.
\end{aligned}
\end{equation}
Similarly, for the full operator,
\begin{equation}
\begin{aligned}
v^\top\mathcal G_{\mathrm{full}}^M v
&=
v^\top\widehat{\mathcal G}_{\mathrm{full}}^M v
+
v^\top
\left(
\mathcal G_{\mathrm{full}}^M
-
\widehat{\mathcal G}_{\mathrm{full}}^M
\right)
v \\
&\ge
v^\top\widehat{\mathcal G}_{\mathrm{full}}^M v
-
\left|
v^\top
\left(
\mathcal G_{\mathrm{full}}^M
-
\widehat{\mathcal G}_{\mathrm{full}}^M
\right)
v
\right| \\
&\ge
v^\top\widehat{\mathcal G}_{\mathrm{full}}^M v
-
\left\|
\mathcal G_{\mathrm{full}}^M
-
\widehat{\mathcal G}_{\mathrm{full}}^M
\right\|_2
\|v\|_2^2 \\
&\ge
v^\top\widehat{\mathcal G}_{\mathrm{full}}^M v
-
\varepsilon_{\mathrm{f}} \\
&>0.
\end{aligned}
\end{equation}
Thus the direction remains contractive under transport-only dynamics and
expansive under the full dynamics.
\end{proof}

The proposition shows that a regime switch is reliable whenever both signs
are separated from zero by more than the corresponding estimation errors.
In particular, a strong reversal such as a substantially negative transport
growth followed by substantially positive full growth is intrinsically more
robust than a switch occurring arbitrarily close to the zero-growth boundary.

\paragraph{Robustness of the spectral switching criterion.}
The same margin principle applies to the whitened switching operator. Let
\(\widehat{\mathcal W}\) be an estimate of
\(\mathcal W=C^{-1/2}RC^{-1/2}\) satisfying
\[
\|\widehat{\mathcal W}-\mathcal W\|_2
\le \varepsilon_{\mathcal W}.
\]

\begin{theorem}[Robust spectral certification]
\label{thm:app_spectral_robustness}
Let the eigenvalues of \(\mathcal W\) and \(\widehat{\mathcal W}\) be ordered
nonincreasingly. Then
\(\left|\lambda_i(\widehat{\mathcal W})-\lambda_i(\mathcal W)\right|
\le\varepsilon_{\mathcal W}\) for every \(i\). Consequently,
\(\lambda_{\max}(\widehat{\mathcal W})>1+\varepsilon_{\mathcal W}\)
certifies the existence of a true regime switch, while
\(\lambda_{\max}(\widehat{\mathcal W})<1-\varepsilon_{\mathcal W}\)
certifies that no regime switch exists in \(\mathcal S\). Moreover, the true
effective switching dimension \(d_{\mathrm{sw}}\) satisfies
\[
\#\left\{i:
\lambda_i(\widehat{\mathcal W})>1+\varepsilon_{\mathcal W}
\right\}
\le
d_{\mathrm{sw}}
\le
\#\left\{i:
\lambda_i(\widehat{\mathcal W})>1-\varepsilon_{\mathcal W}
\right\}.
\]
\end{theorem}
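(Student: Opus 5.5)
The plan is to reduce everything to Weyl's eigenvalue perturbation inequality and then reuse the exact characterizations from Theorems~\ref{thm:spectral_regime_switching} and \ref{thm:effective_switching_dimension}.

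\textbf{Step 1: eigenvalue bound.} We write \(\widehat{\mathcal W}=\mathcal W+E\) with \(E:=\widehat{\mathcal W}-\mathcal W\) and \(\|E\|_2\le\varepsilon_{\mathcal W}\). Some care is needed because the statement only says \(\widehat{\mathcal W}\) estimates \(\mathcal W\); I will assume \(\widehat{\mathcal W}\) is symmetric, as when it is formed as \(\widehat C^{-1/2}\widehat R\widehat C^{-1/2}\), or else replace it by its symmetric part, which does not increase the error. Then \(E\) is symmetric with \(-\varepsilon_{\mathcal W}I\preceq E\preceq\varepsilon_{\mathcal W}I\). The Courant--Fischer min--max characterization gives, for every \(i\),
\[
\lambda_i(\mathcal W)-\varepsilon_{\mathcal W}\le\lambda_i(\widehat{\mathcal W})\le\lambda_i(\mathcal W)+\varepsilon_{\mathcal W}.
\]
To see this, bound the Rayleigh quotient of \(\mathcal W+E\) above and below by that of \(\mathcal W\) plus or minus \(\varepsilon_{\mathcal W}\), uniformly over each candidate \(i\)-dimensional subspace, and then take the max--min. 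This is Weyl's inequality.

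\textbf{Step 2: existence and absence certificates.} Specializing Step~1 to \(i=1\), the condition \(\lambda_{\max}(\widehat{\mathcal W})>1+\varepsilon_{\mathcal W}\) gives \(\lambda_{\max}(\mathcal W)\ge\lambda_{\max}(\widehat{\mathcal W})-\varepsilon_{\mathcal W}>1\). Theorem~\ref{thm:spectral_regime_switching} then yields a true switching direction in \(\mathcal S\). In the other direction, \(\lambda_{\max}(\widehat{\mathcal W})<1-\varepsilon_{\mathcal W}\) gives \(\lambda_{\max}(\mathcal W)<1\), and the no-switch half of Theorem~\ref{thm:effective_switching_dimension} rules out any switch in \(\mathcal S\).

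\textbf{Step 3: dimension sandwich.} By Theorem~\ref{thm:effective_switching_dimension}, \(d_{\mathrm{sw}}=\#\{i:\lambda_i(\mathcal W)>1\}\). Each estimated index with \(\lambda_i(\widehat{\mathcal W})>1+\varepsilon_{\mathcal W}\) has \(\lambda_i(\mathcal W)>1\), which gives the lower bound. Each true index with \(\lambda_i(\mathcal W)>1\) has \(\lambda_i(\widehat{\mathcal W})\ge\lambda_i(\mathcal W)-\varepsilon_{\mathcal W}>1-\varepsilon_{\mathcal W}\), which gives the upper bound. Because the eigenvalues are ordered nonincreasingly, each count is the length of an initial segment of indices. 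Index-wise implication between the two conditions therefore translates directly into inequalities between the counts.

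\textbf{Main obstacle.} The only real issue is symmetry of \(\widehat{\mathcal W}\), which I handle as in Step~1 rather than in the perturbation argument itself. Note that the certification is stated on the fixed subspace \(\mathcal S\) with the exact \(C\), so no perturbation of the whitening is needed.
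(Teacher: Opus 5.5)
Your proposal is correct and follows the paper's proof step for step. Both arguments use Weyl's inequality for the symmetric pair $\mathcal W,\widehat{\mathcal W}$, then feed the index-wise implications $\lambda_i(\widehat{\mathcal W})>1+\varepsilon_{\mathcal W}\Rightarrow\lambda_i(\mathcal W)>1\Rightarrow\lambda_i(\widehat{\mathcal W})>1-\varepsilon_{\mathcal W}$ into Theorems~\ref{thm:spectral_regime_switching} and~\ref{thm:effective_switching_dimension}. The only differences are that you derive Weyl from Courant--Fischer and flag the symmetry of $\widehat{\mathcal W}$, which the paper simply assumes; assuming it outright is cleaner, since passing to the symmetric part would change which eigenvalues the statement certifies.
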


\begin{proof}
Since both \(\mathcal W\) and \(\widehat{\mathcal W}\) are symmetric,
Weyl's eigenvalue perturbation inequality gives
\begin{equation}
\begin{aligned}
\left|
\lambda_i(\widehat{\mathcal W})
-
\lambda_i(\mathcal W)
\right|
&\le
\left\|
\widehat{\mathcal W}
-
\mathcal W
\right\|_2 \\
&\le
\varepsilon_{\mathcal W}
\end{aligned}
\end{equation}
for every \(i\). Therefore, if
\(\lambda_{\max}(\widehat{\mathcal W})>1+\varepsilon_{\mathcal W}\),
then
\begin{equation}
\begin{aligned}
\lambda_{\max}(\mathcal W)
&\ge
\lambda_{\max}(\widehat{\mathcal W})
-
\varepsilon_{\mathcal W} \\
&>
1,
\end{aligned}
\end{equation}
which, by the spectral switching criterion, guarantees the existence of a
true regime switch. Conversely, if
\(\lambda_{\max}(\widehat{\mathcal W})<1-\varepsilon_{\mathcal W}\), then
\begin{equation}
\begin{aligned}
\lambda_{\max}(\mathcal W)
&\le
\lambda_{\max}(\widehat{\mathcal W})
+
\varepsilon_{\mathcal W} \\
&<
1,
\end{aligned}
\end{equation}
so no regime switch can occur in \(\mathcal S\).

For the switching dimension, suppose
\(\lambda_i(\widehat{\mathcal W})>1+\varepsilon_{\mathcal W}\). Then
\begin{equation}
\begin{aligned}
\lambda_i(\mathcal W)
&\ge
\lambda_i(\widehat{\mathcal W})
-
\varepsilon_{\mathcal W} \\
&>
1,
\end{aligned}
\end{equation}
so every estimated eigenvalue above
\(1+\varepsilon_{\mathcal W}\) corresponds to a true switching mode.
This yields the lower bound
\begin{equation}
\#\left\{i:
\lambda_i(\widehat{\mathcal W})
>
1+\varepsilon_{\mathcal W}
\right\}
\le d_{\mathrm{sw}}.
\end{equation}
Conversely, if \(\lambda_i(\mathcal W)>1\), then
\begin{equation}
\begin{aligned}
\lambda_i(\widehat{\mathcal W})
&\ge
\lambda_i(\mathcal W)
-
\varepsilon_{\mathcal W} \\
&>
1-\varepsilon_{\mathcal W}.
\end{aligned}
\end{equation}
Hence every true switching mode must appear among the estimated eigenvalues
above \(1-\varepsilon_{\mathcal W}\), giving
\begin{equation}
d_{\mathrm{sw}}
\le
\#\left\{i:
\lambda_i(\widehat{\mathcal W})
>
1-\varepsilon_{\mathcal W}
\right\}.
\end{equation}
Combining the two bounds proves the result.
\end{proof}

The theorem gives a direct interpretation of spectral margin. Eigenvalues far
above \(1\) correspond to switching modes that are stable to substantial
operator error, whereas eigenvalues close to \(1\) are intrinsically
ambiguous under finite estimation accuracy. In particular, if no estimated
eigenvalue lies within
\([1-\varepsilon_{\mathcal W},1+\varepsilon_{\mathcal W}]\), then the
effective switching dimension is exactly identified despite the operator
perturbation.

\section{EXPERIMENTAL DETAILS AND ADDITIONAL RESULTS}
\label{sec:app_experiments}

The main experiments in Section~\ref{sec:experiments} establish the central
empirical observations: query gating has restricted geometric support, query
and key pathways can reverse transport contraction, and spectral operators
identify switching directions without direction-by-direction search. Here we
examine what those headline results do not fully resolve. We first clarify
the operating-point and measurement protocol, then use additional directional
counterfactuals to identify how a switch is produced. We subsequently examine
what a severity certificate does and does not establish, quantify the
interaction between the gating pathways in extreme directions, and report the
search budgets and projection sensitivities that limit the interpretation of
the experiments. Throughout, the reported properties are local to the tested
operating points and perturbation subspaces.

\subsection{Models, Datasets, and Operator Estimation}
\label{subsec:app_exp_setup}

\paragraph{Operating points and local operators.}
We use pretrained GPT-2 Medium, Gemma-2-2B, Llama-3.2-1B, and
Qwen2.5-0.5B with inputs from the WikiText-2 validation set. Detailed
mechanism and severity measurements use a selected operating point in layer
23 of GPT-2 Medium. The cross-architecture experiment instead probes several
layers of each model, as specified in
Section~\ref{subsec:app_exp_cross_arch}. We analyze residual self-attention
sublayers and construct transport, query-gating, and key-gating Jacobians
$J_T$, $J_Q$, and $J_K$ using the decomposition in
Appendix~\ref{app:attention_jacobian} and the architectural extensions in
Appendix~\ref{subsec:app_modern_blocks}. The common residual pathway is
included consistently in transport-only and full-attention comparisons.
Unless otherwise specified, $M=I$.

\paragraph{What is evaluated.}
For a perturbation direction $v$, we compare the quadratic growth forms
$g_T^M(v)$, $g_{T+Q}^M(v)$, $g_{T+K}^M(v)$, and $g_{\mathrm{full}}^M(v)$,
corresponding to the transport-only, transport-plus-query,
transport-plus-key, and complete Jacobians. Their signs specify switching
status and pathway sufficiency; differences between the values quantify how
far a direction lies from a growth-sign boundary. Analytical Jacobians are
checked against automatic differentiation and finite-difference or
finite-forward evaluations. Restricted growth and switching operators are
formed in the chosen perturbation subspace and analyzed by symmetric
eigendecomposition. Experiments were performed on an NVIDIA A100 GPU with
40\,GB of memory. These measurements concern local residual-attention
dynamics, not end-to-end changes in model predictions.

\subsection{Anisotropic Dynamics and Routing Mechanisms}
\label{subsec:app_exp_mechanisms}

\paragraph{What the attended representations actually contribute.}
The main paper tests the support condition for query gating in
Figure~\ref{fig:exp_query_support}. A separate issue is how to interpret the
attended representations from which that support is constructed. For the
selected head, we center the token representations at their
attention-weighted mean $\mu_r$ and visualize a three-dimensional
projection in Figure~\ref{fig:app_attended_variation}. The points are not
weighted equally: their marker sizes and colors encode the attention weights
$\alpha_{ri}$ that enter $\Sigma_r$. Consequently, the relevant geometry is
not merely the unweighted spread of visible points, but the weighted
variation of the attended representations in the original hidden space.

\begin{wrapfigure}{r}{0.46\columnwidth}
    \vspace{-8pt}
    \centering
    \includegraphics[width=\linewidth]{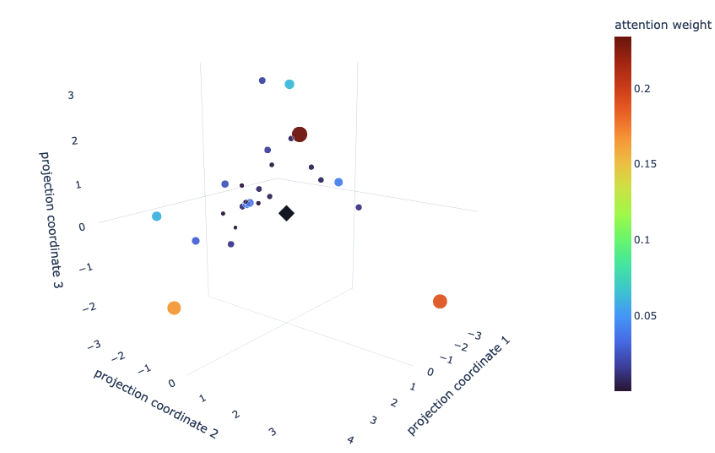}
    \caption{\textbf{Attended-representation geometry.}
    Three-dimensional projection of the centered representations
    $x_i-\mu_r$; marker size and color indicate attention weights
    $\alpha_{ri}$. The plot visualizes a selected head, not the full
    attended-variation subspace.}
    \label{fig:app_attended_variation}
    \vspace{-8pt}
\end{wrapfigure}

The attended-variation subspace at this operating point has dimension
$\dim(\mathcal U_r)=25$. Thus, Figure~\ref{fig:app_attended_variation}
shows only a three-dimensional view of a higher-dimensional object. In
particular, neither the apparent separation of points in this projection nor
the attention weights alone establish which input perturbations lie in the
null space of $J_r^{(q)}$. That question depends on the full operator
$J_r^{(q)}=W_{VO}\Sigma_rW_{QK}^{\top}$, as characterized by
Proposition~\ref{prop:query_gating_support} and tested in the main paper.

\paragraph{Insight.}
The attended-representation plot gives a concrete interpretation of the
covariance entering query gating, but it is not itself a test of query-gating
rank or response. This distinction matters: the attended tokens can exhibit
substantial visible variation while a given query perturbation still
produces no first-order output response, because the perturbation may miss
the transformed support or be annihilated by the output projection. Geometry
therefore constrains the \emph{available routing response}; the sign of
representation growth must be determined by its interaction with transport.
The next experiment examines that interaction through explicit pathway
counterfactuals.

\subsection{Empirical Regime Switching and Spectral Validation}
\label{subsec:app_exp_switching}

\paragraph{Disentangling the pathway conditions.}
The regime maps in Figure~\ref{fig:exp_regime_switching} establish that the
analytic and finite-forward classifications can agree on a selected slice.
Figure~\ref{fig:app_regime_switching_geometry} provides a more detailed
account of the classification itself. Its five columns progressively
introduce the growth-sign conditions on the same fixed three-dimensional
subspace. First, directions with $g_T^M(v)\geq0$ are excluded from the
transport-contraction question. Among the remaining directions,
$g_{\mathrm{full}}^M(v)\leq0$ identifies contraction that persists after
routing is restored. Switching directions satisfy both
$g_T^M(v)<0$ and $g_{\mathrm{full}}^M(v)>0$; the single-pathway
counterfactuals then distinguish the mechanism of that reversal.

\begin{figure*}[t]
    \centering
    \includegraphics[width=\textwidth]{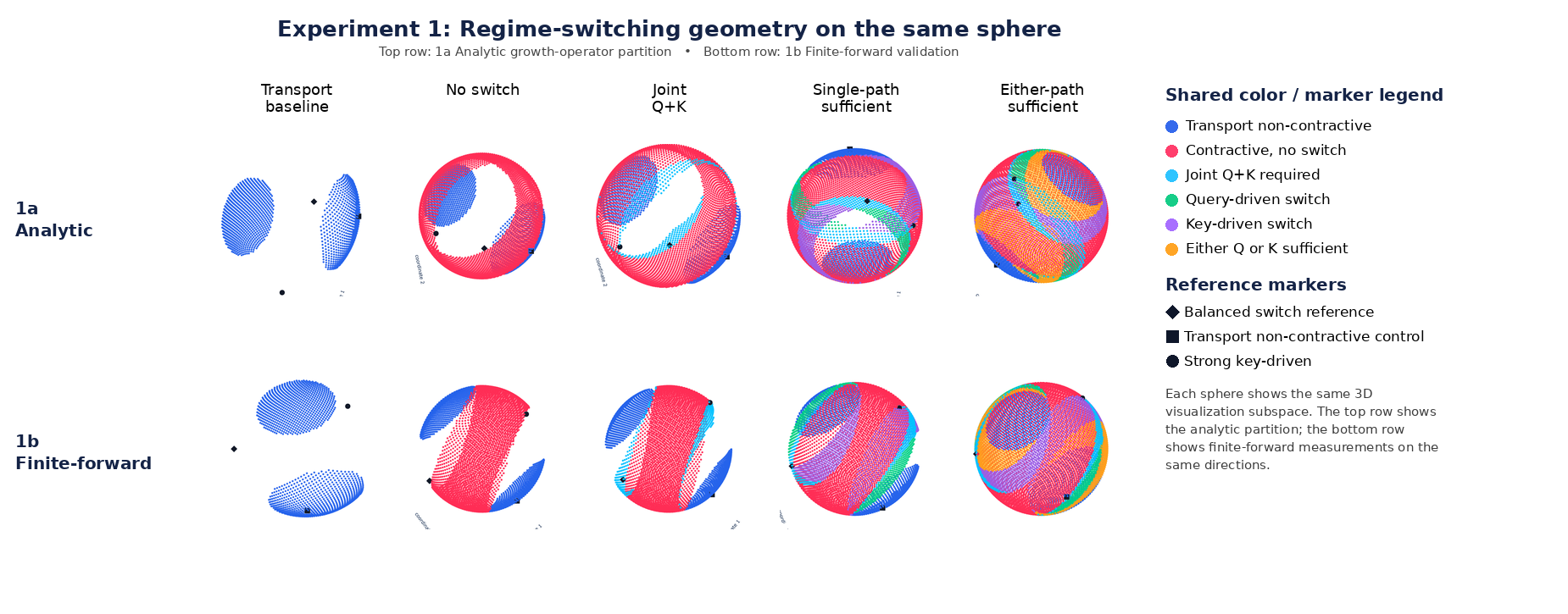}
    \caption{\textbf{Progressive classification of local switching regimes.}
    Five columns reveal the conditions defining transport non-contraction
    (blue), persistent contraction (red), joint query--key necessity
    (cyan), query-only or key-only sufficiency (green or purple), and
    either-pathway sufficiency (orange). The top and bottom rows apply
    analytic and finite-forward tests to the same selected directional
    slice. The classifications concern counterfactual growth signs, not an
    additive partition of total growth.}
    \label{fig:app_regime_switching_geometry}
\end{figure*}

In particular, the cyan class requires
$g_{T+Q}^M(v)\leq0$ and $g_{T+K}^M(v)\leq0$ even though
$g_{\mathrm{full}}^M(v)>0$: neither gating pathway suffices individually.
By contrast, green and purple directions cross the growth boundary with
query or key gating alone, while orange directions cross it under either
single-pathway counterfactual. These categories should not be interpreted
as independent contributions that sum to full growth; the squared-norm
growth operator contains interactions between pathways.

\paragraph{How large are the counterfactual margins?}
A regime label records a sign, but does not say whether the reversal is
marginal or substantial. Figure~\ref{fig:app_pathway_growth_examples}
therefore reports two selected directions. In panel (a), transport growth is
approximately $-0.963$, transport plus query gating reaches $0.006$, and
full growth reaches $0.289$. Query gating is sufficient, but its
single-pathway result lies close to the zero-growth boundary. Panel (b)
illustrates a different mechanism: transport growth is about $-0.967$;
transport plus query and transport plus key remain negative at $-0.184$
and $-0.952$, respectively; yet full growth is positive at $0.072$.
The sign reversal in panel (b) therefore genuinely requires the joint
query--key counterfactual at this operating point. These deliberately
selected directions expose possible mechanisms and margins; they do not
estimate their frequency in the full representation space.

\begin{figure*}[t]
    \centering
    \includegraphics[width=0.94\textwidth]{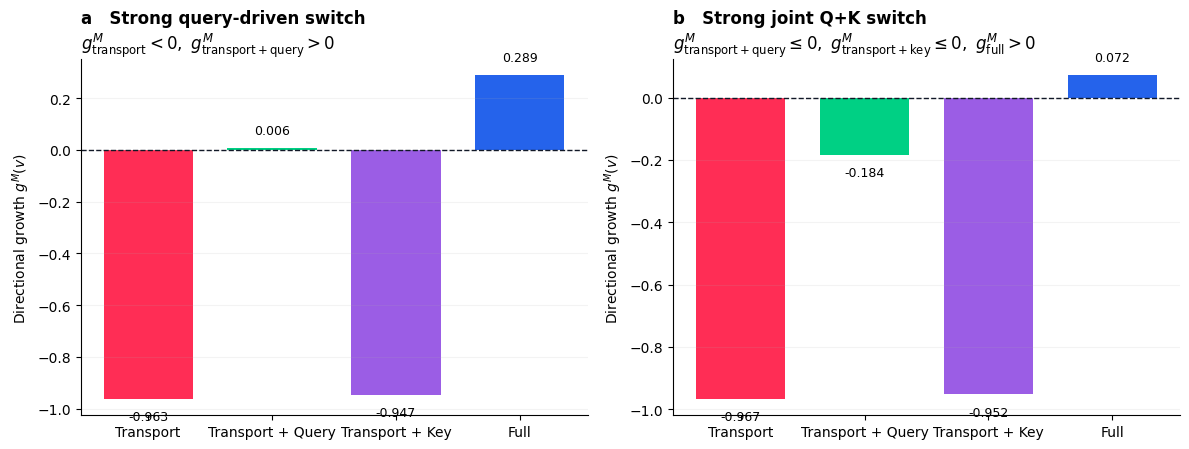}
    \caption{\textbf{Growth margins under pathway ablations.}
    \textbf{(a)} Query gating alone crosses the zero-growth boundary.
    \textbf{(b)} Neither query nor key gating alone produces expansion,
    whereas their joint inclusion does. Bars in each panel evaluate the
    same perturbation direction under different counterfactual dynamics;
    dashed lines mark zero growth.}
    \label{fig:app_pathway_growth_examples}
\end{figure*}

\paragraph{Agreement and its scope.}
The finite-forward row of Figure~\ref{fig:app_regime_switching_geometry}
reproduces the analytic mechanism partition on the tested slice. The
reported held-out evaluation at this selected operating point yields
$100\%$ agreement in switch/no-switch and mechanism labels, with an
analytic--finite-forward full-growth correlation reported as
$r=1.000000$ to the displayed precision. These results support the local
Jacobian approximation at the perturbation scale used. They do not certify
agreement for arbitrary perturbation magnitudes, other operating points,
or directions outside the evaluated slice.

\paragraph{From individual directions to severity-restricted subspaces.}
The main paper has already established the switching spectrum and effective
dimension at this operating point
(Figure~\ref{fig:exp_spectral_certification_severity}). We retain the
spectrum in Figure~\ref{fig:app_effective_dimension_severity}(a) here only
as context for the more restrictive experiment in panel (b). The spectrum
has $\lambda_{\max}(W)\approx1729.98$ and $d_{\mathrm{sw}}=47$ within
a $357$-dimensional transport-contractive subspace. These quantities
certify existence and the dimension of a uniformly switching subspace;
they do not measure how much contraction its switching directions reverse.

\begin{figure*}[t]
    \centering
    \includegraphics[width=0.96\textwidth]{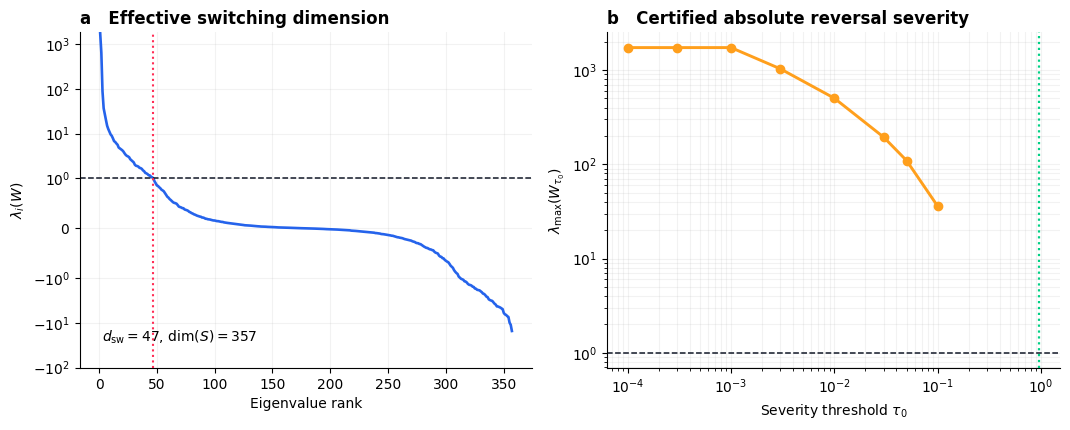}
    \caption{\textbf{Switching spectrum and severity-restricted certification.}
    \textbf{(a)} Ordered eigenvalues of $W=C^{-1/2}RC^{-1/2}$, shown for
    comparison with the severity test; the switching boundary is
    $\lambda=1$. \textbf{(b)} Leading eigenvalue of $W_{\tau_0}$ on
    subspaces constrained to have transport severity at least $\tau_0$.
    The green reference line indicates a separately computed
    $\tau_{\mathrm{cert}}\approx0.967$; the displayed orange sweep does
    not reach that threshold and is not, on its own, the certificate.}
    \label{fig:app_effective_dimension_severity}
\end{figure*}

Panel (b) asks a stronger question: after excluding directions whose
transport contraction is weaker than a threshold $\tau_0$, can gating
still reverse the remaining contraction? The restricted subspaces
$\mathcal S_{\tau_0}$ are nested as $\tau_0$ increases, so the spectral
test is applied to progressively more demanding candidates. The plotted
sweep illustrates this restriction, whereas the independently evaluated
certificate gives $\tau_{\mathrm{cert}}\approx0.967$. By
Theorem~\ref{thm:absolute_reversal_severity}, the certified threshold
establishes the existence of a switching direction on which transport
removes at least $96.7\%$ of the incoming squared magnitude. It is a
lower bound on the supremal reversal severity, not an estimate that can
be read directly from the displayed sweep.

\paragraph{Insight.}
The mechanism plots and spectral tests answer different questions. The
counterfactuals identify \emph{how} a particular direction crosses zero
growth; the spectrum establishes \emph{whether} switching is possible
and how many dimensions can switch uniformly; severity restriction asks
\emph{how much contraction} can still be overcome. Neither a large leading
switching eigenvalue nor a high agreement rate alone answers the last
question. We next examine the directions selected by relative and absolute
optimization and quantify the query--key interaction responsible for a
severe reversal.

\subsection{Strongest Switching Directions}
\label{subsec:app_exp_strongest}

\paragraph{Two objectives select different perturbations.}
Corollary~\ref{cor:max_relative_switching} identifies the direction
$v_{\mathrm{rel}}$ maximizing the relative switching ratio $\Gamma(v)$,
but this ratio is normalized by the contraction that gating must overcome.
As discussed in Section~\ref{subsec:exp_spectral_severity}, a large ratio
need not represent a severe contraction reversal. Figure~\ref{fig:app_relative_absolute_strength}
and Table~\ref{tab:app_strongest_switching_directions} provide the
operating-point values behind that distinction. The relative optimum
achieves $\Gamma(v_{\mathrm{rel}})\approx1729.98$ with transport
severity $\tau(v_{\mathrm{rel}})\approx0.003361$. The numerically
identified severe-switch direction instead has
$\Gamma(v_{\max})\approx1.07465$ and
$\tau(v_{\max})\approx0.967349$. Thus, a ratio only slightly above the
switching threshold can correspond to overcoming far stronger transport
suppression than the direction maximizing the ratio.

\begin{figure*}[t]
    \centering
    \includegraphics[width=0.94\textwidth]{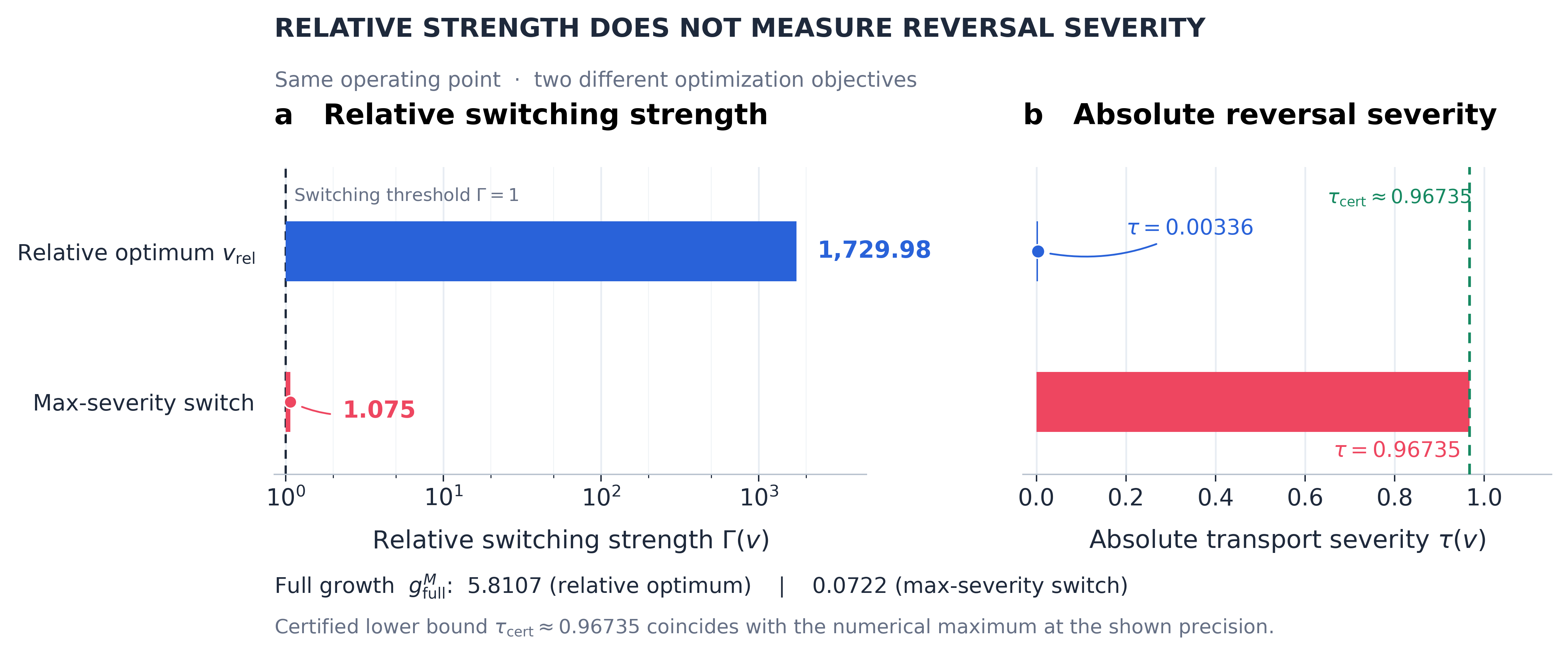}
    \caption{\textbf{Relative strength and absolute contraction severity.}
    \textbf{(a)} Relative switching ratios for the relative optimum and a
    numerically identified severe switch (logarithmic axis; dashed line at
    $\Gamma=1$). \textbf{(b)} Fractions of incoming squared magnitude
    removed by transport for the same directions. The green reference
    marks $\tau_{\mathrm{cert}}\approx0.967$. The two panels compare
    different objectives, not two estimates of the same quantity.}
    \label{fig:app_relative_absolute_strength}
\end{figure*}

\paragraph{Why the severe switch requires both gating pathways.}
The numerical contrast becomes more informative when we inspect the
counterfactual growth values in
Table~\ref{tab:app_strongest_switching_directions}. For
$v_{\mathrm{rel}}$, transport plus query gating already produces positive
growth ($5.7437$), while transport plus key gating remains negative
($-0.002801$). Query gating is therefore sufficient for this direction.
For the severe direction, both single-pathway outcomes remain negative,
$g_{T+Q}^M\approx-0.1842$ and $g_{T+K}^M\approx-0.952478$,
whereas $g_{\mathrm{full}}^M\approx0.0722$. The combined pathways
cross zero where neither does alone.

\begin{table}[t]
\centering
\caption{\textbf{Relative-optimal and severe-switch counterfactuals.}
Directional growth, transport severity, and relative strength at the
selected GPT-2 Medium operating point. Values are rounded.}
\label{tab:app_strongest_switching_directions}
\small
\setlength{\tabcolsep}{3.5pt}
\begin{tabular}{lrrrrrr}
\toprule
Direction & $g_{\rm T}^M$ & $g_{\rm T+Q}^M$ & $g_{\rm T+K}^M$ & $g_{\rm full}^M$ & $\tau(v)$ & $\Gamma(v)$ \\
\midrule
$v_{\rm rel}$ & $-0.003361$ & $5.7437$ & $-0.002801$ & $5.8107$ & $0.003361$ & $1729.98$ \\
Severe switch & $-0.967349$ & $-0.1842$ & $-0.952478$ & $0.0722$ & $0.967349$ & $1.07465$ \\
\bottomrule
\end{tabular}
\end{table}

The growth decomposition lets us go further than the sign labels. Because
$J_{\mathrm{full}}=J_T+J_Q+J_K$, the interaction between query and key
gating can be isolated from the four counterfactual evaluations:
\begin{equation}
\begin{aligned}
2\langle J_Qv,J_Kv\rangle_M
={}&g_{\mathrm{full}}^M(v)-g_{T+Q}^M(v)\\
&-g_{T+K}^M(v)+g_T^M(v).
\end{aligned}
\label{eq:app_exp_query_key_interaction}
\end{equation}
For the severe direction, the values in Table~\ref{tab:app_strongest_switching_directions}
yield a positive query--key interaction of approximately $0.2415$.
Indeed, including the key pathway on top of transport plus query changes
growth by about $0.2564$, of which approximately $0.2415$ comes from
this cross term. This makes the joint mechanism more precise: the severe
switch is not explained by adding two independent single-pathway growth
improvements. It depends substantially on their interaction in the full
quadratic growth operator.

\paragraph{What the severity comparison certifies.}
Theorem~\ref{thm:absolute_reversal_severity} and the definition of
$\tau_{\mathrm{cert}}$ imply
$\tau_{\max}\geq\tau_{\mathrm{cert}}$. At this operating point, the
reported certificate and numerical severe-switch value are both close to
$0.967349$. Their agreement is specific to the computation; it does not
establish the identity $\tau_{\max}=\tau_{\mathrm{cert}}$ for arbitrary
operators, nor does a relative optimum generally solve the absolute
severity problem.

\paragraph{Insight.}
The important distinction is not merely that the two optima have different
values. They probe different mechanisms: the relative optimum can succeed
through query gating against almost neutral transport, while a severe
reversal can require a positive query--key cross interaction to rescue a
direction that both single-pathway counterfactuals still suppress. Thus,
relative strength measures a \emph{ratio}, absolute severity measures the
\emph{contraction overcome}, and pathway ablations identify the
\emph{mechanism of rescue}. Conflating these three questions can make a
spectrally strong switch appear more substantial than it is.

\subsection{Cost of Theory-Free Search}
\label{subsec:app_exp_search_cost}

\paragraph{Ordinary switches versus severe switches.}
Figure~\ref{fig:exp_search_cost} in the main paper compares blind
finite-forward search, random sampling with oracle access to the
transport-contractive subspace $\mathcal S$, and spectral guidance. The
underlying budgets clarify what the comparison establishes. In the
reported experiment, blind ambient-space sampling finds no switch among
$8192$ tested directions. When sampling is restricted to $\mathcal S$,
$2928$ of $4096$ directions switch (approximately $71.5\%$), but none
reaches severity $\tau\geq0.80$; the best sampled severity is only about
$0.0718$. These outcomes should not be interpreted as estimates of a
zero ambient-space switching probability or of the probability of
recovering an extreme direction. They instead show that making the search
subspace favorable can render \emph{ordinary} switches easy to find without
making \emph{severe} switches easy to recover within the tested budget.

\paragraph{When does spectral preprocessing pay off?}
The two approaches also have different cost structures. Finite-forward
probing incurs work for each candidate direction, whereas spectral
analysis incurs an initial operator-construction and eigendecomposition
cost and then reuses quadratic forms for additional directional tests. The
reported setup time is approximately $0.0492\,\mathrm{s}$, with measured
break-even budgets of about $3993$ directions for switch detection and
$1981$ for full mechanism classification. Below these budgets,
finite-forward checks can be cheaper; beyond them, reusing the spectral
operators becomes advantageous in this implementation. These crossover
points depend on the hardware, implementation, operating point, and
precisely which evaluations are charged to each method. They are not
asymptotic complexity guarantees.

\paragraph{Insight.}
Spectral guidance provides two distinct benefits, and neither should be
overstated. It identifies a direction satisfying a global spectral
criterion within the specified subspace instead of hoping that random
samples approach an extreme direction; and, after preprocessing, it
amortizes the cost of evaluating many directions. The former addresses a
\emph{search-quality limitation}; the latter is a \emph{query-budget
trade-off}. Faster evaluation for large budgets does not imply that
spectral preprocessing is always preferable for a single perturbation.

\subsection{Cross-Architecture Switching and Probe Sensitivity}
\label{subsec:app_exp_cross_arch}

\paragraph{What the cross-model comparison actually samples.}
The architecture-level trends summarized in
Section~\ref{subsec:exp_cross_architecture} are based on projected, local
measurements rather than full hidden-space spectral decompositions. For
each of GPT-2 Medium, Gemma-2-2B, Llama-3.2-1B, and Qwen2.5-0.5B,
we use two WikiText-2 validation prompts, the final token, and seven
normalized depth positions, with a $64$-dimensional random orthonormal
probe subspace. Accordingly, the quantities below characterize the
projected transport-contractive region at those operating points. This
protocol matters when interpreting differences across architectures,
particularly for statistics normalized by the dimension of the retained
contractive subspace.

\begin{figure*}[t]
    \centering
    \includegraphics[width=0.94\textwidth]{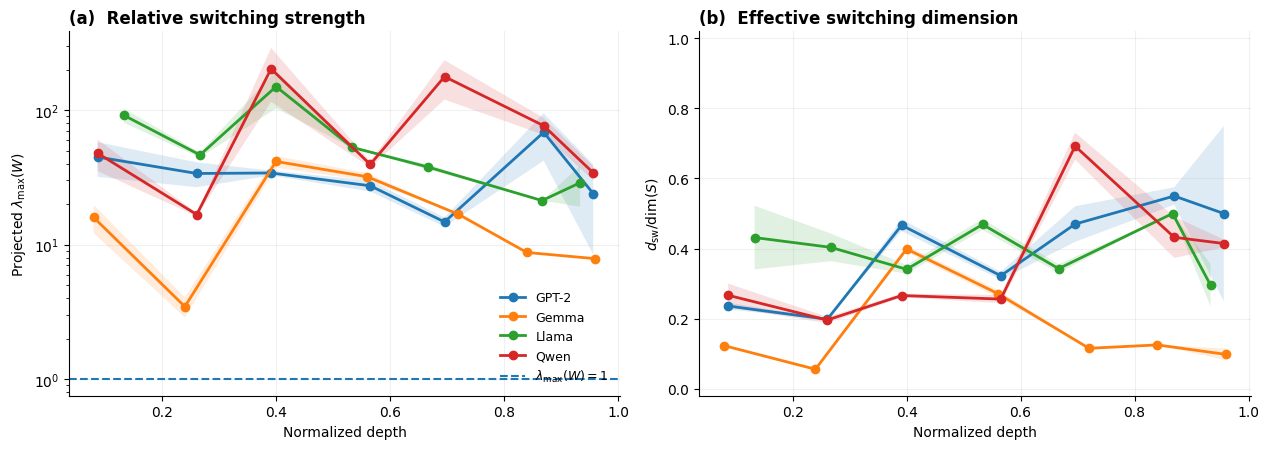}
    \caption{\textbf{Projected spectra across architecture and depth.}
    \textbf{(a)} Median leading switching eigenvalue and the $\lambda=1$
    threshold. \textbf{(b)} Median normalized effective switching
    dimension $d_{\mathrm{sw}}/\dim(\mathcal S)$ within the retained probe
    subspace. Shaded interquartile bands summarize variation across the
    sampled prompts and are not confidence intervals.}
    \label{fig:app_cross_arch_depth}
\end{figure*}

\paragraph{Spectral strength is not switching-subspace size.}
Figure~\ref{fig:app_cross_arch_depth} compares two summaries of the
projected switching operator. The leading eigenvalue in panel (a) measures
the strongest relative growth correction, whereas panel (b) measures the
fraction of retained contractive dimensions that support uniform
switching. The two statistics vary differently with model and depth: a
large leading eigenvalue need not accompany a large normalized switching
dimension. Neither panel shows a common monotonic depth law. This is an
important qualification of cross-model comparisons based on one spectral
summary alone.

\begin{figure*}[t]
    \centering
    \includegraphics[width=0.83\textwidth]{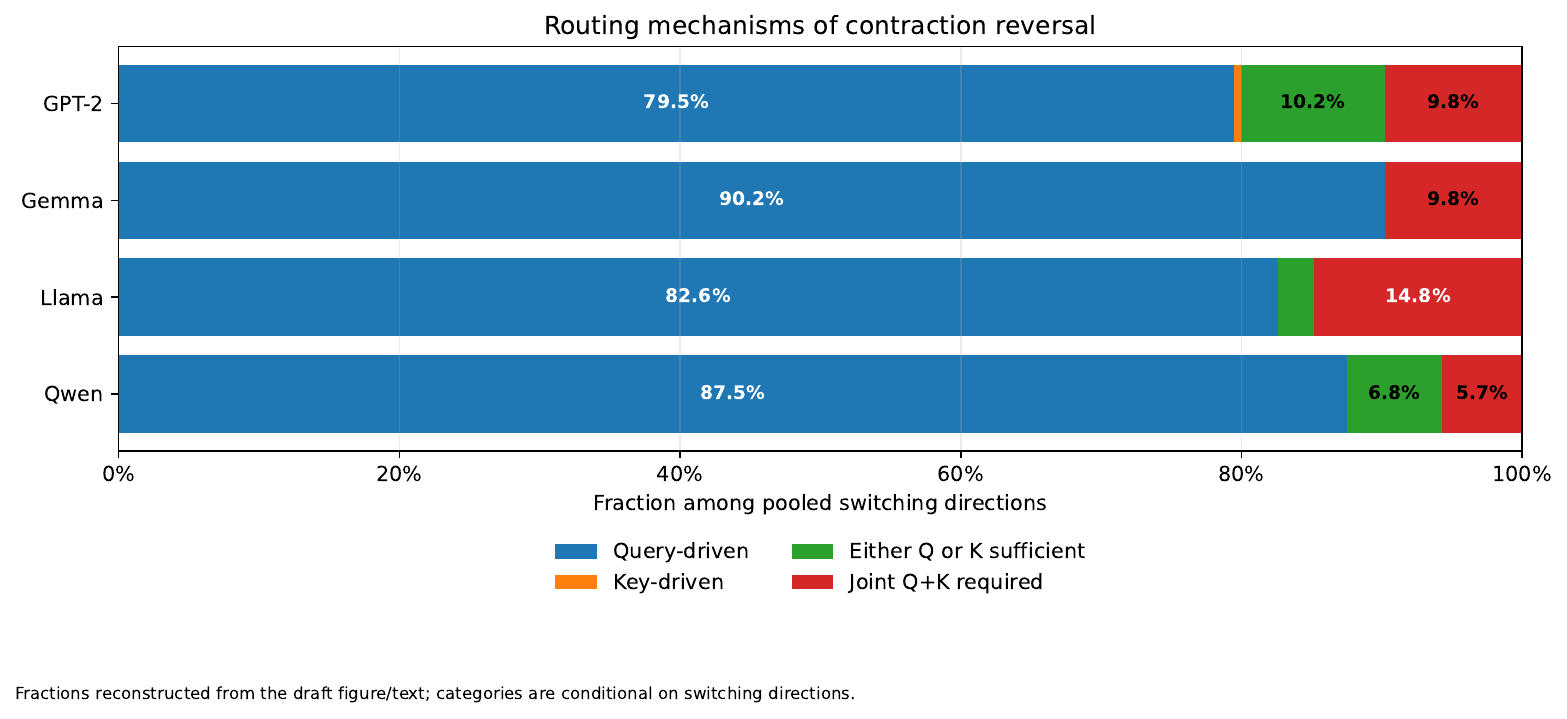}
    \caption{\textbf{Mechanism composition among projected switching directions.}
    Fractions are pooled over the sampled switching directions within each
    architecture. Query-only, key-only, joint query--key, and
    either-pathway-sufficient classes are disjoint under the specified
    counterfactual tests. Fractions are conditional on full-dynamics
    switching and are not ambient-space probabilities.}
    \label{fig:app_cross_arch_mechanisms}
\end{figure*}

\paragraph{Mechanism frequencies.}
Figure~\ref{fig:app_cross_arch_mechanisms} expands the aggregate
mechanism comparison in Figure~\ref{fig:exp_cross_architecture}(b).
Among the sampled switching directions, query-only sufficiency accounts
for approximately $79.5\%$ in GPT-2, $90.2\%$ in Gemma, $82.6\%$ in
Llama, and $87.5\%$ in Qwen. Joint query--key switching occurs in every
evaluated family, but its proportion and those of the other pathway
classes vary. The query-heavy composition is conditional on switching,
the selected prompts, and the query-token perturbation protocol; it does
not imply that query gating causally dominates other pathways for
arbitrary tokens, layers, or inputs.

\paragraph{Sensitivity to probe dimension.}
The projected measurements above leave a final methodological question:
would the same numerical summaries be obtained using a different probe
dimension? Figure~\ref{fig:app_probe_sensitivity} repeats the analysis
using nested orthonormal probe spaces of dimensions $32$, $64$,
$128$, and $256$ at three normalized depths and two prompts per
architecture. Increasing the probe dimension
generally raises the projected leading eigenvalue, consistent with
making more directions available to the restricted spectral problem.
However, the normalized effective switching dimension and the fraction
of query-sufficient sampled switches also change, and need not stabilize
at the same rate. In particular, the experiments do not establish
convergence to full-space switching statistics at dimension $256$.

\begin{figure*}[t]
    \centering
    \includegraphics[width=\textwidth]{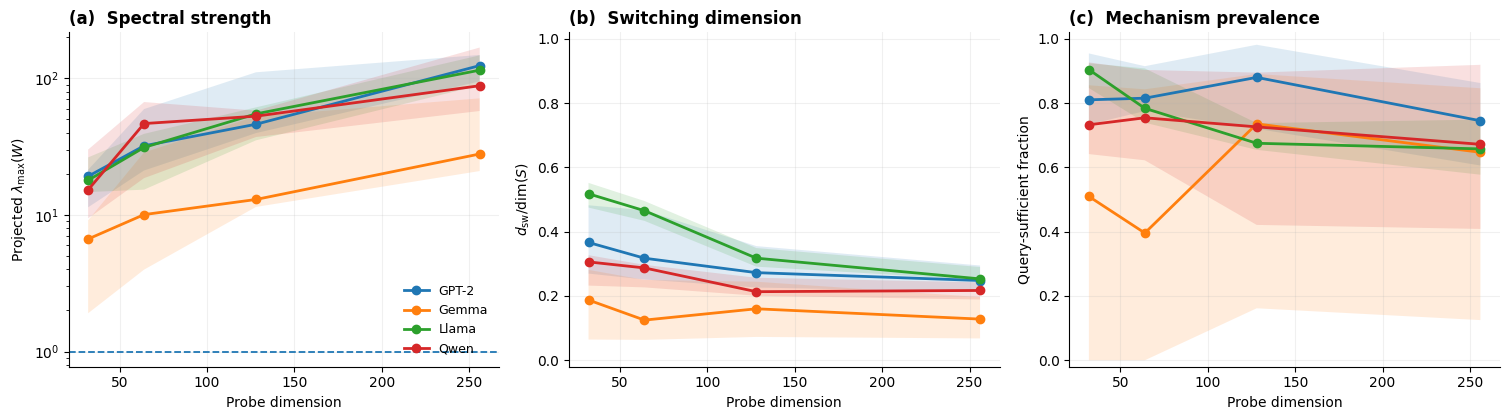}
    \caption{\textbf{Probe-dimension sensitivity of projected switching statistics.}
    Nested orthonormal probe spaces of dimensions $32$, $64$, $128$,
    and $256$ are evaluated at matched model--prompt--depth operating
    points. \textbf{(a)} Median projected $\lambda_{\max}(W)$;
    \textbf{(b)} median normalized effective switching dimension;
    \textbf{(c)} median query-sufficient fraction among sampled switches.
    Shaded regions indicate interquartile ranges across operating points,
    not confidence intervals or convergence guarantees.}
    \label{fig:app_probe_sensitivity}
\end{figure*}

\paragraph{Insight.}
The cross-architecture results support the \emph{applicability of the same
local mechanism and spectral tests} to several Transformer families, but
they do not establish architecture-independent switching frequencies or
universal depth laws. The sensitivity experiment explains why: a larger
probe can reveal stronger projected switching modes while changing the
apparent size and pathway composition of the switching set. Consequently,
claims about the existence of local switching within a measured subspace
are firmer than claims about its prevalence in the full hidden space.
Broader prompt sampling, other token positions, and more extensive probe
convergence tests would be needed to strengthen the latter claims.

\end{document}